\newtheorem*{prop*}{Proposition}
\newtheorem*{theorem*}{Theorem}
\newtheorem*{lemma*}{Lemma}
\newtheorem*{property*}{Property}
\newtheorem*{definition*}{Definition}
\newtheorem*{corollary*}{Corollary}
\newtheorem{assumption}{Assumption}
\numberwithin{assumption}{section}
\newcommand{\rdeep}{\textnormal{deep}}
\newcommand{\rshal}{\textnormal{shal}}
\newcommand{\rlow}{\textnormal{low}}
\newcommand{\rup}{\textnormal{up}}
\newcommand{\rmin}{\textnormal{min}}
\newcommand{\rmax}{\textnormal{max}}
\newcommand*\bigcdot{\mathpalette\bigcdot@{.5}}
\newcommand*\bigcdot@[2]{\mathbin{\vcenter{\hbox{\scalebox{#2}{$\m@th#1\bullet$}}}}}
\begin{document}
\title{Embedding Principle in Depth for the Loss Landscape Analysis of Deep Neural Networks}

\author[ ]{Zhiwei Bai
\affil{1}, Tao Luo\affil{1, 2}, Zhi-Qin John Xu\affil{1}\corrauth ~and Yaoyu Zhang\affil{1, 3}\secondcorrauth}
\address{
\affilnum{1}\ School of Mathematical Sciences, Institute of Natural Sciences, \\MOE-LSC, Shanghai Jiao Tong University, Shanghai 200240, P.R. China.\\
\affilnum{2}\ CMA-Shanghai, Shanghai
Artificial Intelligence Laboratory, Shanghai 200240, P.R. China\\
\affilnum{3}\ Shanghai Center for Brain Science and Brain-Inspired Technology, Shanghai 200240, P.R. China\\
}
\emails{{\tt bai299@sjtu.edu.cn} (Z.~Bai), {\tt luotao41@sjtu.edu.cn} (T.~Luo), {\tt xuzhiqin@sjtu.edu.cn} (Z.~Xu), {\tt zhyy.sjtu@sjtu.edu.cn} (Y.~Zhang)}


\begin{abstract}
In this work, we delve into the relationship between deep and shallow neural networks (NNs), focusing on the critical points of their loss landscapes.  We discover an embedding principle in depth that loss landscape of an NN ``contains'' all critical points of the loss landscapes for shallower NNs.  The key tool for our discovery is the critical lifting that maps any critical point of a network to critical manifolds of any deeper network while preserving the outputs. To investigate the practical implications of this principle, we conduct a series of numerical experiments. The results confirm that deep networks do encounter these lifted critical points during training, leading to similar training dynamics across varying network depths. We provide theoretical and empirical evidence that through the lifting operation, the lifted critical points exhibit increased degeneracy. This principle also provides insights into the optimization benefits of batch normalization and larger datasets, and enables practical applications like network layer pruning. Overall, our discovery of the embedding principle in depth uncovers the depth-wise hierarchical structure of deep learning loss landscape, which serves as a solid foundation for the further study about the role of depth for DNNs.

\end{abstract}

\ams{68T07
}
\keywords{Deep learning, loss landscape, embedding principle.}

\maketitle

\section{Introduction}
Deep neural networks (DNNs) have achieved remarkable success in various fields, such as computer vision~\cite{rastegari2016xnor}, natural language processing~\cite{collobert2008unified}, and numerous scientific computing applications~\cite{han2018solving, cai2020deep, wang2020mesh}. Despite their widespread adoption and empirical achievements, our theoretical understanding of DNNs, particularly regarding their loss landscape and training dynamics, remains limited. The loss landscape of a DNN essentially characterizes the optimization problem encountered during the network's training process. The study of this landscape is of paramount importance as it directly influences not only the efficiency and final outcome of the training process, but also the generalization in overparametered case. Regrettably, the high-dimensionality and non-convex nature of DNNs render their loss landscapes notoriously challenging to comprehend and navigate. The recent discovery of the embedding principle~\cite{zhang2021embedding,embedding2021long,fukumizu2019semi,csimcsek2021geometry} offers insights for analyzing the loss landscape of networks and establishes connections between the loss landscapes of neural networks with varying widths. However, considering the extreme importance of depth for DNNs, it prompts us to question whether a relationship exists between the loss landscapes of networks with different depths. In this paper, we strive to address this fundamental question by conducting a thorough analysis of critical points across varying network depths.

\begin{figure}[ht]
	\centering
\subfigure[Loss (Iris)]{\includegraphics[height=0.27\textwidth]{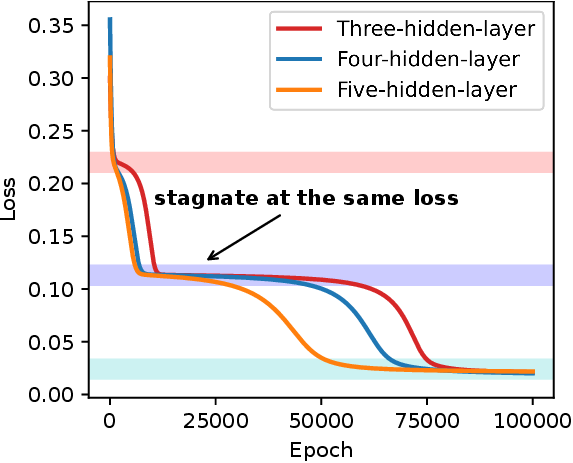}}\hspace{1.5cm}
\subfigure[Accuracy (Iris)]{\includegraphics[height=0.27\textwidth]{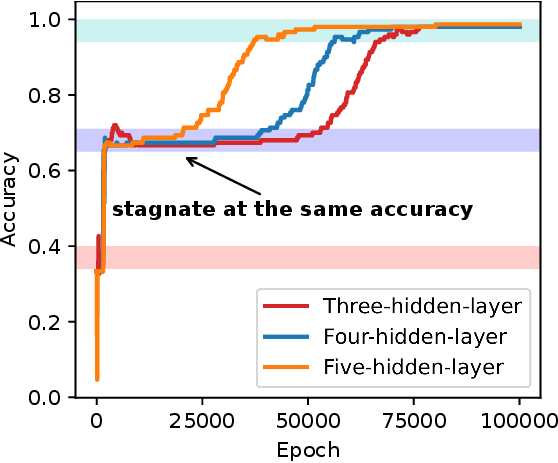}}\\
\subfigure[Loss (MNIST)]{\includegraphics[height=0.27\textwidth]{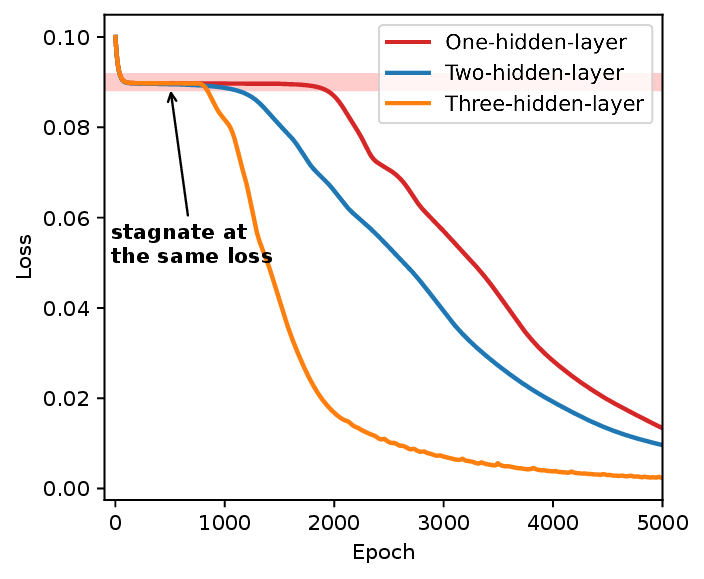}}\hspace{1.5cm}
\subfigure[Accuracy (MNIST)]{\includegraphics[height=0.27\textwidth]{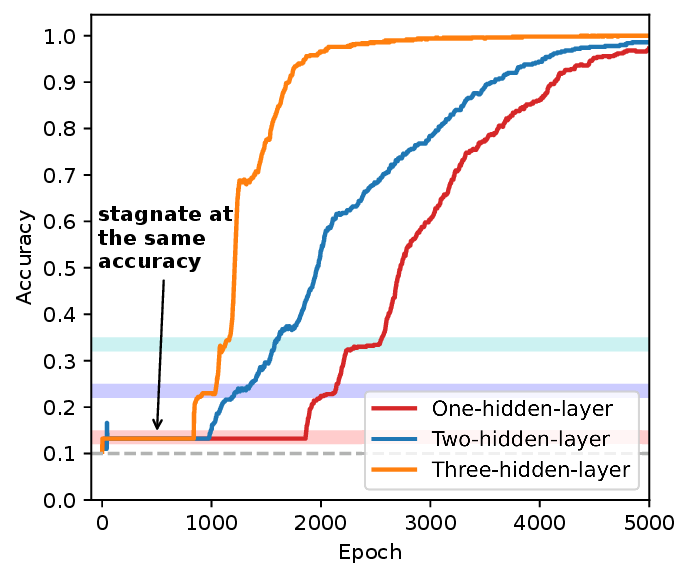}}
	\caption{\textbf{The training dynamics of networks of different depths exhibit similarity.} (a, c) The training loss for NNs of varying depths on the Iris and MNIST datasets, respectively. (b, d) The corresponding training accuracy for NNs of varying depths on the Iris and MNIST datasets, respectively. The color-coded areas indicate periods of slow change in training loss or training accuracy, indicating a possible encounter with a saddle point.
\label{fig:syntraining}}
\end{figure} 

Our theoretical investigation is motivated by the following experimental observations, which hint at the existence of an embedding relationship in depth. As illustrated in Fig.~\ref{fig:syntraining}, the training of NNs with varying hidden layers, learning the Iris and MNIST datasets with small initialization and a small learning rate, exhibit a similar behavior. Specifically, in Fig.~\ref{fig:syntraining}(a, c), we notice that network trajectories of different depths appear to stagnate at almost the same loss values, with virtually the same training accuracy, as demonstrated in Fig.~\ref{fig:syntraining}(b, d). This intriguing observation suggests that the loss landscapes of NNs of varying depths may share a set of critical functions (i.e., output functions of critical points), by which a deep NN can experience a training process similar to that of a shallower one.

Motivated by these observations, we prove in this work an embedding principle in depth for fully-connected NNs, which can be \emph{intuitively} stated as follows:

\emph{\textbf{Embedding Principle in Depth}: the loss landscape of any network  ``contains'' all critical points of all shallower networks.}

Central to our proof of the embedding principle in depth is the introduction of a critical lifting operator. This operator, as proposed in this work, maps any critical point of a shallower NN to critical manifolds (i.e., manifolds consisting of critical points sharing the same loss value) of a target NN, while preserving outputs on the training inputs. Our critical lifting operator predicts a rich class of ``simple'' critical points, which are derived from shallower NNs and embedded in the loss landscapes of deeper NNs. This thereby explicitly unveils the depth-wise hierarchical structure within the loss landscape of deep learning.

To evaluate the practical implications of the embedding principle in depth, we conduct a comprehensive set of numerical experiments. These experiments reveal that the practical training dynamics of deep NNs indeed encounter these lifted critical points, resulting in similar training dynamics between deep and shallow networks. Furthermore, we observe that through the critical lifting process, lifted critical points exhibit increased degeneracy, which aligns with the empirically observed highly degenerate critical points within the loss landscape~\cite{sagun2016singularity}. The embedding principle in depth also provides new understanding to the optimization benefits of batch normalization~\cite{ioffe2015batch} and the use of larger datasets. In the final part of our experimental study, we explore the aspect of network compression, proposing a method for layer pruning.

The remainder of the paper is organized as follows. In Section~\ref{sec:related-works}, we review related works. In Section~\ref{sec:pre}, we provide a brief introduction to deep neural networks and the back propagation process. In Section~\ref{sec:theory-embedding}, we lay out the theory of the embedding principle in depth. Section~\ref{sec:exp} presents a range of practical effects to corroborate our theoretical insights. In Section~\ref{sec:discussion}, we contrast the differences between the embedding principles in width and depth and discuss other network architectures beyond fully-connected networks. We conclude the paper in Section~\ref{sec:conclusion}. Detailed proofs are provided in Appendix~\ref{app:proofs}. 

\section{Related works}
\label{sec:related-works}
The loss landscape of deep neural networks is notoriously complex due to its high-dimensionality and non-convex nature \cite{skorokhodov2019loss}. Certain directions of a minimum can exhibit markedly different sharpness \cite{he2019asymmetric}. Moreover, different training algorithms find global minima with different properties, such as SGD often finds a flatter minimum compared with GD  \cite{keskar2017large,wu2017towards}. Although previous studies have provided detailed investigations on the loss landscape of shallow NNs with specific activations \cite{du2018power,soltanolkotabi2018theoretical,cheridito2021landscape}, the relationship between critical points across different network architectures remains largely unexplored.

The recent work ~\cite{zhang2021embedding} introduced an embedding principle (in width) that establishes a relationship between the critical points of a network and its wider counterparts. The principle, which leverages one-step embeddings and their multi-step composition, suggests that the critical points of a network can be embedded into the loss landscape of wider NNs. Similar findings about these composition embeddings have been studied~\cite{fukumizu2000local,fukumizu2019semi,csimcsek2021geometry}. Different from these works studying the effect of width, our work for the first time establishes the embedding relation regarding the extremely important hyperparameter of depth for DNNs.

Using a deeper NN has many advantages. In approximation, a deeper NN has more expressive power \cite{telgarsky2016benefits,eldan2016power,weinan2018exponential}. In optimization, a deeper NN can learn data faster ~\cite{he2016deep,arora2018optimization,DBLP:conf/aaai/XuZ21}. In generalization, it has been widely observed that overparameterized deep neural networks often generalize well in practice~\cite{zhang2016understanding} and a deeper NN may achieve better generalization for real-world problems~\cite{he2016deep}. Therefore, it is important to understand the effect of depth to the DNN loss landscapes.

The proposed embedding principle in depth suggests a simplicity bias in depth, which is consistent with previous works, for example, the frequency principle ~\cite{xu_training_2018,xu2019frequency,rahaman2018spectral,zhang2021linear}, which states that DNNs often fit target functions from low to high frequencies during the training, and the block structure \cite{nguyenwide}, which identifies similar representations across many layers in overparameterized networks.

\section{Preliminaries}\label{sec:pre}
\paragraph{Deep neural networks.} Consider a fully connected neural network (NN) with $L (L\geq 1)$ layers. Let $i, k \in \mathbb{N}$, and for $i < k$, denote $[i:k]={i, i+1, \ldots, k}$. Specifically, denote $[k]:={1, 2, \ldots, k}$. The input is treated as layer $0$ and the output as layer $L$. The width of layer $l$ is represented by $m_l$, with $m_0=d$ and $m_L=d'$. 

For any parameter $\boldsymbol{\theta}$ of the NN, we consider it as a $2L$-tuple
$$\vtheta=\bigl(\vtheta|_1,\cdots,\vtheta|_L\bigr)=\bigl(\mW^{[1]},\vb^{[1]},\ldots,\mW^{[L]},\vb^{[L]}\bigr),$$ 
where $\boldsymbol{W}^{[l]} \in \mathbb{R}^{m_l\times m{l-1}}$ and $\boldsymbol{b}^{[l]} \in\mathbb{R}^{m{l}}$ represent the weight and bias of layer $l$, respectively. The parameters of layer $l$ in $\boldsymbol{\theta}$ are given as an ordered pair $\boldsymbol{\theta}|_{l}=\bigl(\boldsymbol{W}^{[l]}, \boldsymbol{b}^{[l]}\bigr),$ for $l\in[L]$. We may use notation interchangeably and identify $\vtheta$ with its vectorization $\mathrm{vec}(\vtheta)\in \sR^M$ with  $M=\sum_{l=0}^{L-1}(m_l+1) m_{l+1}$.

Given the parameter vector $\boldsymbol{\theta}$, the neural network function $\vf_{\vtheta}(\cdot)$ can be defined recursively. First, let $\vf^{[0]}_{\vtheta}(\vx)=\vx$ for all $\vx\in \mathbb{R}^d$. Then for $l \in [L-1]$, $\vf^{[l]}_{\vtheta}$ is defined recursively as 
$$\vf^{[l]}_{\vtheta}(\vx)=\sigma (\mW^{[l]} \vf^{[l-1]}_{\vtheta}(\vx)+\vb^{[l]}).$$
Finally, we denote $\vf_{\vtheta}(\vx)=\vf(\vx; \vtheta)=\vf^{[L]}_{\vtheta}(\vx)=\mW^{[L]} \vf^{[L-1]}_{\vtheta}(\vx)+\vb^{[L]}.$

In the case of residual neural networks (ResNets), if the $l$-th layer employs a skip connection, then 
$$
\boldsymbol{f}_{\boldsymbol{\theta}}^{[l]}(\boldsymbol{x}) = \sigma\left(\boldsymbol{W}^{[l]} \boldsymbol{f}_{\boldsymbol{\theta}}^{[l-1]}(\boldsymbol{x})+\boldsymbol{b}^{[l]}\right) + \boldsymbol{f}_{\boldsymbol{\theta}}^{[l-1]}(\boldsymbol{x}).
$$

To enable a comprehensible comparison between deep and shallow networks, we provide a precise definition for the terms ``deeper" and ``shallower".

\begin{definition}[\textbf{deeper/shallower}]
\label{def:deeper}
Given two NNs, $\mathrm{NN}\bigl(\bigl\{m_{l}\bigr\}_{l=0}^{L}\bigr\}\bigl)$ and $\mathrm{NN}^{\prime}(\{m_l^\prime\}$, $l \in \{0, 1, \cdots, q, $ $\hat{q},  q+1, \cdots, L\})$. If $m_1^\prime = m_1, \cdots, m_{q}^\prime = m_q, m_{\hat{q}}^\prime\geq \min\{m_{q}, m_{q+1}\}, m_{q+1}^\prime = m_{q+1}, \cdots, m_L^\prime = m_L$, then we say $\mathrm{NN}^{\prime}$ is one-layer deeper than $\mathrm{NN}$, and conversely, $\mathrm{NN}$ is one-layer shallower than $\mathrm{NN}^{\prime}$. $J$-layer deeper (or shallower) is defined by the composition of one-layer deeper (or shallower).
\end{definition}

\begin{remark}
If an NN is termed ``deeper" than another NN, it signifies that the former can be derived by incorporating additional layers of adequate widths to the latter. Note that, for the sake of notational convenience, the layer index $l$ for the deeper NN is utilized as a placeholder index, adhering to a specific order of $\{0, 1, 2, \cdots, q, \hat{q}, q+1, \cdots, L\}$.
\end{remark}

\paragraph{Loss function.} We designate the training data and training inputs as $S=\{(\vx_i,\vy_i)\}_{i=1}^n$ and $S_{\vx}=\{\vx_i\}_{i=1}^n$, respectively, where $\vx_i\in\sR^d$ and $\vy_i\in \sR^{d'}$. For the sake of convenience, we presuppose an unknown function $\vf^*$ such that $\vf^*(\vx_i)=\vy_i$ holds for $i\in[n]$. The empirical risk can be expressed as
$$\RS(\vtheta)=\frac{1}{n}\sum_{i=1}^n\ell(\vf(\vx_i,\vtheta),\vf^*(\vx_i))=\Exp_S\ell(\vf(\vx,\vtheta),\vf^*(\vx)),$$
where the expectation $\Exp_S h(\vx):=\frac{1}{n}\sum_{i=1}^n h(\vx_i)$ is defined for any function $h:\sR^d\to \sR$. 
We denote the derivative of the loss function $\ell$ with respect to its first argument as $\nabla\ell(\vy,\vy^*)$. The training dynamics are treated as the gradient flow of $\RS(\vtheta)$, i.e., 
\begin{equation*}
    \left\{
    \begin{aligned}
        & \dfrac{\D \vtheta}{\D t}
          = -\nabla_{\vtheta} \RS(\vtheta), \\
        & \vtheta(0)
         = \vtheta_0.
    \end{aligned}
    \right.\label{eq..TrainingL2Loss}
\end{equation*}

\paragraph{Back propagation.} For every $l\in [L]$, we define the error vectors $\vz_{\vtheta}^{[l]}=\nabla_{\vf^{[l]}}\ell$ and the feature gradients $\vg_{\vtheta}^{[L]}=\boldsymbol{1}$ along with $\vg^{[l]}_{\vtheta} =\sigma^{(1)}\bigl(\mW^{[l]} \vf^{[l-1]}_{\vtheta}+\vb^{[l]}\bigr)$ for $l\in[L-1]$, where $\sigma^{(1)}$ signifies the first derivative of $\sigma$. Furthermore, $\vf^{[l]}_{\vtheta}$, for $l\in[L]$, are referred to as feature vectors. We denote the collections of feature vectors, feature gradients, and error vectors by $\vF_{\vtheta}= \{\vf^{[l]}_{\vtheta}\}_{l=1}^L,
    \vG_{\vtheta}
    = \{\vg^{[l]}_{\vtheta}\}_{l=1}^L,
    \vZ_{\vtheta}
    = \{\vz^{[l]}_{\vtheta}\}_{l=1}^L$, respectively. The gradients can be computed employing backpropagation as follows:
\begin{equation}
\left\{\begin{aligned}
& \boldsymbol{z}_{\boldsymbol{\theta}}^{[L]} =\nabla \ell, \\
& \boldsymbol{z}_{\boldsymbol{\theta}}^{[l]} =\left(\boldsymbol{W}^{[l+1]}\right)^{\top}\left(\boldsymbol{z}_{\boldsymbol{\theta}}^{[l+1]} \circ \boldsymbol{g}_{\boldsymbol{\theta}}^{[l+1]}\right), \quad l \in[L-1], \\
& \nabla_{\boldsymbol{W}^{[l]}} \ell =\left(\boldsymbol{z}_{\boldsymbol{\theta}}^{[l]} \circ \boldsymbol{g}_{\boldsymbol{\theta}}^{[l]}\right)\left(\boldsymbol{f}_{\boldsymbol{\theta}}^{[l-1]}\right)^{\top}, \quad l \in[L], \\
& \nabla_{\boldsymbol{b}^{[l]}} \ell =\boldsymbol{z}_{\boldsymbol{\theta}}^{[l]} \circ \boldsymbol{g}_{\boldsymbol{\theta}}^{[l]}, \quad l \in[L].
\end{aligned}\right.
\label{back propoga}
\end{equation}

\section{Theory of Embedding Principle in Depth}
\label{sec:theory-embedding}
Consider a neural network $\vf_{\vtheta}(\vx)$, where $\vtheta$ represents the set of all network parameters and $\vx\in\sR^{d}$ is the input. We summarize the assumptions for all our theoretical results in this work as follows.

\begin{assumption}
(i) $L$-layer ($L\geq 1$) fully-connected NN.

(ii) Training data $S=\{(\vx_i,\vy_i)\}_{i=1}^n$ for $n\in\sZ^+$. 

(iii) Empirical risk  $\RS(\vtheta)=\Exp_S\ell(\vf_{\vtheta}(\vx),\vy)$.

(iv) Activation function $\sigma$ has a non-constant linear segment, e.g., ReLU, leaky-ReLU and ELU.

(v) Loss function $\ell$ and activation function $\sigma$ are subdifferentiable, i.e., a unique subgradient can be assigned to each non-differentiable point.
\end{assumption}

\begin{remark}
For general smooth activations without a linear segment, e.g. tanh, our results hold in the sense of approximation because they are arbitrarily close to linear in a sufficiently small interval (for instance, around $0$). Therefore, we also demonstrate our results using the tanh activation in the subsequent numerical experiments.
\end{remark}

\begin{definition}[\textbf{affine subdomain}]
For an activation $\sigma$ with a non-constant linear segment, an affine subdomain of $\sigma$ is an open interval $(a, b)$ satisfying that there exist $\lambda,\mu \in \mathbb{R}$ ($\lambda\neq0$),
$\sigma(x) = \lambda x + \mu, \text{ for any } x\in (a, b).$
\end{definition}

\subsection{Lifting operator}
We begin by introducing a lifting operator, as illustrated in Fig.~\ref{fig:onestep}.

\begin{definition}[\textbf{one-layer lifting}]
\label{def:one-layer-lifting}

Given data $S$, consider an $\mathrm{NN}\bigl(\left\{m_{l}\right\}_{l=0}^{L}\bigr)$ and its one-layer deeper counterpart, $\mathrm{NN}^{\prime}\big(\{m_l^\prime\}$, $l\in \{0, 1, 2, \cdots, q, \hat{q}, q+1, \cdots, L\}\big)$. The one-layer lifting, denoted as $\fT_S$,  is a function that transforms any parameter $\vtheta=\left(\boldsymbol{W}^{[1]}, \boldsymbol{b}^{[1]}, \cdots, \boldsymbol{W}^{[L]}, \boldsymbol{b}^{[L]}\right)$ of $\mathrm{NN}$ into a set $\fM$ within the parameter space of $\mathrm{NN}^{\prime}$. Formally, 
$\fM$ (where $\fM:= \fT_S(\vtheta))$ represents a collection of all possible parameters $\vtheta'$ of $\mathrm{NN}^{\prime}$
that satisfying the following three conditions:

(i) local-in-layer condition: weights of each layer in $\mathrm{NN}^{\prime}$ are inherited from $\mathrm{NN}$ except for layer $\hat{q}$ and $q+1$, i.e.,  
\begin{equation}
   \left\{ \begin{aligned}
& \vtheta^{\prime}|_l = \boldsymbol{\theta} |_l,\quad \text{for}\quad  l \in [q]\cup [q+2:L],\\
& \vtheta^{\prime}|_{\hat{q}} = \bigl(\boldsymbol{W'}^{[\hat{q}]},  \boldsymbol{b'}^{[\hat{q}]}\bigr)\in\mathbb{R}^{m'_{\hat{q}}\times m'_{q-1}}\times \mathbb{R}^{m'_{\hat{q}}},\\
& \vtheta^{\prime}|_{q+1} = \bigl(\boldsymbol{W'}^{[q+1]},  \boldsymbol{b'}^{[q+1]}\bigr)\in\mathbb{R}^{m'_{q+1}\times m'_{\hat{q}}}\times \mathbb{R}^{m'_{q+1}}.\\
\end{aligned}
\right. 
\end{equation}

(ii) layer linearization condition: for any $j\in [m_{\hat{q}}]$, there exists an affine subdomain $(a_j, b_j)$ of $\sigma$ associated with $\lambda_j, \mu_j$ such that the $j$-th component $\big(\boldsymbol{W'}^{[\hat{q}]}\boldsymbol{f}_{\boldsymbol{\theta}'}^{[q]}(\boldsymbol{x}) + \boldsymbol{b'}^{[\hat{q}]}\big)_j \in (a_j, b_j)$ for any $\vx \in S_{\vx}.$

(iii) output preserving condition:
\begin{equation}
\label{eq:output-pres}
\left\{\begin{aligned}
&  \boldsymbol{W'}^{[q+1]}\operatorname{diag}(\boldsymbol{\lambda})\boldsymbol{W'}^{[\hat{q}]} = \boldsymbol{W}^{[q+1]},\\
& \boldsymbol{W'}^{[q+1]}\operatorname{diag}(\boldsymbol{\lambda})\boldsymbol{b'}^{[\hat{q}]}+\boldsymbol{W'}^{[q+1]}\boldsymbol{\mu} + \boldsymbol{b'}^{[q+1]} = \boldsymbol{b}^{[q+1]},   
\end{aligned}\right.  
\end{equation}
where $\vlambda = [\lambda_1, \lambda_2, \cdots, \lambda_{m_{\hat{q}}}]^\top\in \mathbb{R}^{m^{\prime}_{\hat{q}}}, \vmu = [\mu_1, \mu_2, \cdots, \mu_{m_{\hat{q}}}]^\top\in\mathbb{R}^{m^{\prime}_{\hat{q}}}$, and $\operatorname{diag}(\boldsymbol{\lambda})$ denotes the diagonal matrix formed by vector $\vlambda$.
\end{definition}

\begin{figure*}[t]
    \centering
    \includegraphics[width=1\textwidth]{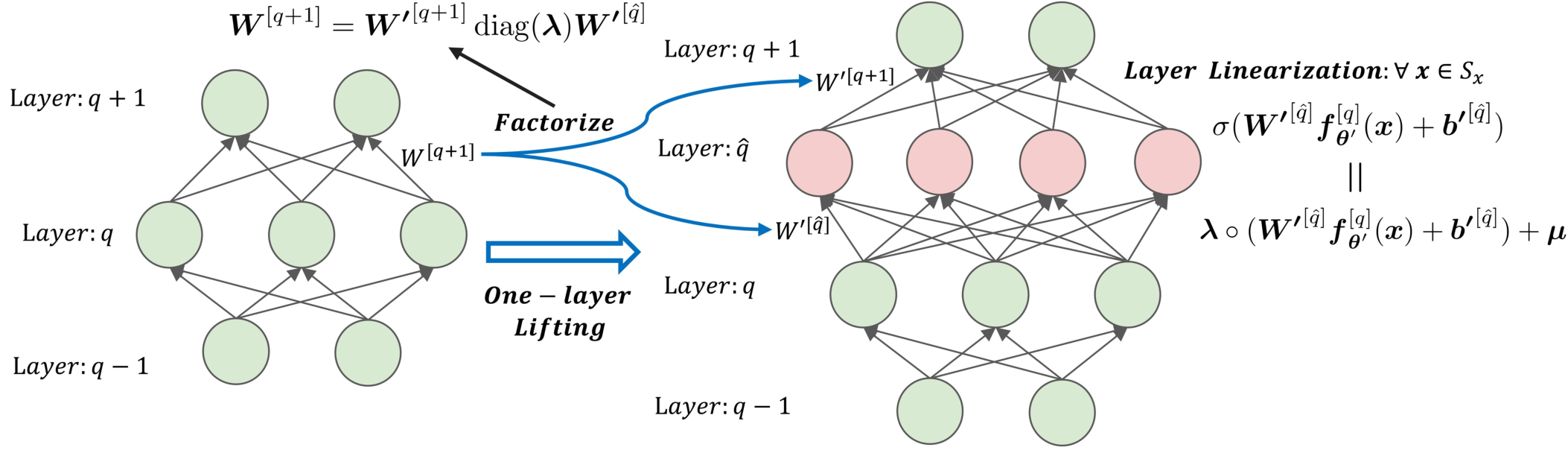} 
\caption{\textbf{Illustration of one-layer lifting.} The pink layer is inserted into the left network to get the right network. The input parameters $\vW^{\prime[\hat{q}]}$ and output parameters $\vW^{\prime[q+1]}$ of the inserted layer are obtained by factorizing the input parameters $\vW^{[ q+1]}$ of $(q+1)$-th layer in the left network to satisfy layer linearization and output preserving conditions. }
\label{fig:onestep}
\end{figure*}

\begin{remark}
Intuitively, the lifting operator described above elevates a point $\vtheta$ from $\mathrm{NN}$ to a set $\fM$ within a higher-dimensional space. In general, $\fM$ is a finite union of manifolds instead of a set of isolated points. To highlight this point, we refer to the mapped set $\fM := \fT_S(\vtheta)$ as a `manifold' with slight abuse of terminology throughout this paper. For a similar reason, we also refer to the set of critical points sharing the same loss value as `critical manifold'.
\end{remark}

\begin{remark}
The one-layer lifting operator describes the generic process of mapping a point from a lower-dimensional space to a set within a higher-dimensional space. Any $\vtheta'$ that belongs to the mapped set as outlined in Def.~\ref{def:one-layer-lifting} is termed a `lifted point'. For clarity, when discussing specific operations that map a point $\vtheta$ of $\mathrm{NN}$
to a specific point $\vtheta'\in \fT_S(\vtheta)$ of $\mathrm{NN}^\prime$, we use the term `embedding' to describe this operation.
\end{remark}

As illustrated in Fig.~\ref{fig:onestep}, a one-layer lifting is realized by inserting a hidden layer, depicted here as the pink layer ($\hat{q}$-th layer) in the right network. The outcome of a one-layer lifting is a manifold of the parameter space of the right network, which comprises each parameter vector that satisfies the following constraints: the parameters $\boldsymbol{W'}^{[\hat{q}]}, \boldsymbol{b'}^{[\hat{q}]}$ of the inserted layer meet the layer linearization condition, ensuring this layer operates like a linear layer \emph{when applied to the training inputs $S_{\vx}=\{\vx_i\}_{i=1}^n$}. Additionally, the parameters $\boldsymbol{W'}^{[q+1]}, \boldsymbol{b'}^{[q+1]}$ of $(q+1)$-th layer fulfill the output preserving condition, making the composition of the $\hat{q}$-th layer and the $(q+1)$-th layer in the right network equivalent to the $(q+1)$-th layer in the left network.

Because the factorized weights satisfying both layer linearization and output preserving conditions always exist, we have the following existence result for one-layer lifting:
\begin{lemma}[\textbf{existence of one-layer lifting}]
\label{lemma:existence of lifting} (see Appendix~\ref{app:proofs}: Lem.~\ref{APP:existence} for proof) Given data $S$, an $\mathrm{NN}\bigl(\left\{m_{l}\right\}_{l=0}^{L}\bigr)$ and its one-layer deeper counterpart,  $\mathrm{NN}^{\prime}\big(\{m_l^\prime\}$, $l\in \{0, 1, 2, \cdots, q, \hat{q}, q+1, \cdots, L\}\big)$, the one-layer lifting $\fT_S$ exists, i.e., 
$\fT_S(\vtheta_{\rshal})$ is not empty for any parameter $\vtheta_{\rshal}$ of $\mathrm{NN}$. 
\end{lemma}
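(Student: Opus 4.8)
The plan is to prove $\fT_S(\vtheta_{\rshal})$ is non-empty by exhibiting a single explicit $\vtheta'$ meeting all three defining conditions. The guiding idea is to kill the freedom in the definition by using \emph{one} affine subdomain for every inserted neuron, and then to reconcile the two competing requirements---layer linearization, which forces the pre-activations of layer $\hat{q}$ into a bounded interval, and output preservation, which fixes a product of weight matrices---through a reciprocal scaling: shrink the input weights $\mW'^{[\hat{q}]}$ and inflate the output weights $\mW'^{[q+1]}$ by the same factor.

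Concretely, I would first fix one affine subdomain $(a,b)$ of $\sigma$ with slope $\lambda\neq 0$ and intercept $\mu$ (which exists because $\sigma$ has a non-constant linear segment), choose a center $c\in(a,b)$, and set $\lambda_j=\lambda$, $\mu_j=\mu$, $(a_j,b_j)=(a,b)$ for all $j$, so that $\operatorname{diag}(\vlambda)=\lambda I$ and $\vmu=\mu\mathbf{1}$. Since $\mathrm{NN}'$ is one-layer deeper, Definition~\ref{def:deeper} gives $m'_{\hat{q}}\geq\min\{m_q,m_{q+1}\}\geq\operatorname{rank}(\mW^{[q+1]})$, so $\mW^{[q+1]}$ factors as $\mW^{[q+1]}=BC$ with $B\in\sR^{m_{q+1}\times m'_{\hat{q}}}$ and $C\in\sR^{m'_{\hat{q}}\times m_q}$. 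For a scale $\epsilon>0$ to be fixed later, I then set $\mW'^{[\hat{q}]}=\epsilon C$, $\vb'^{[\hat{q}]}=c\mathbf{1}$, $\mW'^{[q+1]}=\tfrac{1}{\lambda\epsilon}B$, and inherit every other layer from $\vtheta_{\rshal}$ (the local-in-layer condition). Substituting gives $\mW'^{[q+1]}\operatorname{diag}(\vlambda)\mW'^{[\hat{q}]}=\tfrac{1}{\lambda\epsilon}B\,(\lambda I)\,\epsilon C=BC=\mW^{[q+1]}$, so the first output-preserving equation holds; the second is then solved outright by $\vb'^{[q+1]}=\vb^{[q+1]}-\mW'^{[q+1]}(\lambda c+\mu)\mathbf{1}$. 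Thus output preservation holds for \emph{every} $\epsilon>0$.

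It remains to choose $\epsilon$ so that layer linearization holds. Because layers $1,\dots,q$ are inherited, $\vf^{[q]}_{\vtheta'}=\vf^{[q]}_{\vtheta_{\rshal}}$ on the finite set $S_{\vx}$, so the $j$-th pre-activation of the inserted layer at $\vx_i$ equals $c+\epsilon\bigl(C\vf^{[q]}_{\vtheta_{\rshal}}(\vx_i)\bigr)_j$. Over the finitely many pairs $(i,j)$ the quantities $\bigl(C\vf^{[q]}_{\vtheta_{\rshal}}(\vx_i)\bigr)_j$ are uniformly bounded, and since $c$ lies in the \emph{open} interval $(a,b)$, any sufficiently small $\epsilon>0$ keeps every such pre-activation inside $(a,b)$. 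This yields the layer linearization condition and hence $\vtheta'\in\fT_S(\vtheta_{\rshal})$.

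The main obstacle I anticipate is not either condition alone but their simultaneous satisfaction: linearization pushes the input weights toward zero while output preservation pins down their product with the output weights, and the reciprocal $\epsilon$, $1/\epsilon$ scaling is exactly what decouples them. The one structural fact that must be verified is the feasibility of the factorization $\mW^{[q+1]}=BC$ at inner dimension $m'_{\hat{q}}$, which is guaranteed by the rank bound $\operatorname{rank}(\mW^{[q+1]})\leq\min\{m_q,m_{q+1}\}\leq m'_{\hat{q}}$ from Definition~\ref{def:deeper}.
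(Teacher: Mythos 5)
Your proof is correct and follows essentially the same route as the paper's: both construct an explicit lifted point by confining every inserted pre-activation to a single affine subdomain $(a,b)$ with common $(\lambda,\mu)$, and decouple layer linearization from output preservation by reciprocally scaling the inserted layer's input weights and the next layer's output weights. Your general rank factorization $\mW^{[q+1]}=BC$ combined with the small-$\epsilon$ interior-point argument merely streamlines the paper's version, which instead uses identity-based factorizations split into the cases $\min\{m_q,m_{q+1}\}=m_{q+1}$ or $m_q$, maps the exact data range $[x_{\rmin},x_{\rmax}]$ affinely onto a fixed subinterval of $(a,b)$, and treats the degenerate case $x_{\rmin}=x_{\rmax}$ separately.
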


\subsection{Embedding Principle in Depth}
The multi-layer lifting is defined as the composition of multiple one-layer liftings. Consequently, the parameters of any neural network can be lifted to a parameter manifold of any deeper neural network through a multi-layer lifting. Both one-layer and multi-layer liftings exhibit two key attributes: network properties preservation and criticality preservation. To demonstrate these two properties, we first consider the following lemma:
\begin{lemma}[\textbf{computation of feature vectors, feature gradients and error vectors}]
\label{lemma-gradients} (see Appendix~\ref{app:proofs}: Lem.~\ref{APP:lemma-gradients} for proof)
Given data $S$, consider an $\mathrm{NN}\bigl(\left\{m_{l}\right\}_{l=0}^{L}\bigr)$ and its one-layer deeper counterpart, $\mathrm{NN}^{\prime}\big(\{m_l^\prime\}$, $l\in \{0, 1, 2, \cdots, q, \hat{q}, q+1, \cdots, L\}\big)$. Let $\fT_S$ denote the one-layer lifting and $\vtheta_{\rshal}$
be any parameter of $\mathrm{NN}$. Then, for any lifted point
$\boldsymbol{\theta}'_{\rdeep} \in \fT_{S}\left(\boldsymbol{\theta}_{\rshal}\right)$, the following conditions hold: there exist $\vlambda, \vmu \in \mathbb{R}^{m^{\prime}_{\hat{q}}}$ such that for any $\vx\in S_{\vx}$,

(i) feature vectors in $\boldsymbol{F}_{\boldsymbol{\theta}'_{\rdeep}}: \boldsymbol{f}_{\boldsymbol{\theta}'_{\rdeep}}^{\left[l\right]}(\vx)=\boldsymbol{f}_{\boldsymbol{\theta}_{\rshal}}^{\left[l\right]}(\vx)$ for $l \in[L]$ and $ \boldsymbol{f}_{\boldsymbol{\theta}'_{\rdeep}}^{[\hat{q}]}(\vx) = \operatorname{diag}(\vlambda) (\boldsymbol{W}^{\prime[\hat{q}]}\boldsymbol{f}_{\vtheta_{\rshal}}^{[q]}(\vx) + \boldsymbol{b}^{\prime[\hat{q}]}) + \boldsymbol{\mu};$

(ii) feature gradients in $\boldsymbol{G}_{\boldsymbol{\theta}'_{\rdeep}}: \boldsymbol{g}_{\boldsymbol{\theta}'_{\rdeep}}^{\left[l\right]}(\vx)=\boldsymbol{g}_{\boldsymbol{\theta}_{\rshal}}^{\left[l\right]}(\vx)$ for $l \in[L]$ and $ \boldsymbol{g}_{\boldsymbol{\theta}'_{\rdeep}}^{[\hat{q}]}(\vx) = \boldsymbol{\lambda};$

(iii) error vectors in $\boldsymbol{Z}_{\boldsymbol{\theta}'_{\rdeep}}: \boldsymbol{z}_{\boldsymbol{\theta}'_{\rdeep}}^{\left[l\right]}(\vx)=\boldsymbol{z}_{\boldsymbol{\theta}_{\rshal}}^{\left[l\right]}(\vx)$ for $l \in [q-1]\cup[q+1: L]$ and $\boldsymbol{z}_{\boldsymbol{\theta}'_{\rdeep}}^{\left[\hat{q}\right]}(\vx) = \bigl(\boldsymbol{W}^{'[q+1]}\bigr)^{\top}\bigl(\boldsymbol{z}_{\boldsymbol{\theta}_{\rshal}}^{[q+1]}(\vx) \circ \boldsymbol{g}_{\boldsymbol{\theta}_{\rshal}}^{[q+1]}(\vx)\bigr),  \boldsymbol{z}_{\boldsymbol{\theta}'_{\rdeep}}^{\left[q\right]}(\vx) = \bigl(\boldsymbol{W}^{'[\hat{q}]}\bigr)^{\top}\bigl(\boldsymbol{z}_{\boldsymbol{\theta}'_{\rdeep}}^{[\hat{q}]}(\vx) \circ \boldsymbol{\lambda}\bigr).$
\end{lemma}

Using Lem.~\ref{lemma-gradients}, we can promptly derive the property of network preservation. This underlines the preservation of the output function by the lifting process during the transformation from a shallower to a deeper neural network.

\begin{proposition}[\textbf{network properties preserving}]
\label{prop:network preserving} (see Appendix~\ref{app:proofs}: Prop.~\ref{APP:prop-network-preserve} for proof)  Given data $S$, consider an $\mathrm{NN}\bigl(\left\{m_{l}\right\}_{l=0}^{L}\bigr)$ and its one-layer deeper counterpart, $\mathrm{NN}^{\prime}\big(\{m_l^\prime\}$, $l\in \{0, 1, 2, \cdots, q, \hat{q}, q+1, \cdots, L\}\big)$. Let $\fT_S$ denote the one-layer lifting and $\vtheta_{\rshal}$
be any parameter of $\mathrm{NN}$. Then, for any lifted point
$\boldsymbol{\theta}'_{\rdeep} \in \fT_{S}\left(\boldsymbol{\theta}_{\rshal}\right)$, the following conditions hold:

    (i) outputs are preserved: $f_{\vtheta'_{\rdeep}}(\vx)=f_{\vtheta_{\rshal}}(\vx)$  for $\vx\in S_{\vx}$;
    
    (ii) empirical risk is preserved: $\RS(\vtheta'_{\rdeep})=\RS(\vtheta_{\rshal})$;
    
    (iii) network representations are preserved for all layers:  
    $$
    \operatorname{span}\bigl\{\bigl\{\bigl(\vf_{\vtheta_{\rdeep}^\prime}^{[\hat{q}]}(\vX)\bigr)_j\bigr\}_{j \in[m^{\prime}_{\hat{q}}]}\cup \{ \boldsymbol{1}\} \bigr\} = \operatorname{span}\bigl\{\bigl\{\bigl(\vf_{\vtheta_{\rshal}}^{[q]}(\vX)\bigr)_j\bigr\}_{j \in[m_{q}]}\cup \{\boldsymbol{1}\} \bigr\},  
    $$
    
    and for the other index $l\in [L]$,
    $$
    \operatorname{span}\bigl\{\bigl\{\bigl(\vf_{\vtheta_{\rdeep}^\prime}^{[l]}(\vX)\bigr)_j\bigr\}_{j \in[m^{\prime}_l]}\cup \bigl\{\boldsymbol{1}\bigr\} \bigr\}=\operatorname{span}\bigl\{\bigl\{\bigl(\vf_{\vtheta_{\rshal}}^{[l]}(\vX)\bigr)_j\bigr\}_{j \in[m_l]}\cup \bigl\{\boldsymbol{1}\bigr\} \bigr\},    
    $$
    
    where $\vf_{\vtheta}^{[l]}(\vX)=\bigl[\vf_{\vtheta}^{[l]}(\vx_1), \vf_{\vtheta}^{[l]}(\vx_2), \cdots, \vf_{\vtheta}^{[l]}(\vx_n)\bigr]^\top \in \sR^{n\times m^{\prime}_{\hat{q}}}$ and $\boldsymbol{1}\in\sR^{n}$ is the all-ones vector.
\end{proposition}

The most significant characteristic of the lifting operator is its preservation of criticality. That is to say, if a shallow network is at a critical point, it will still be at a critical point when transformed into a deeper network through the lifting operator. 

\begin{proposition}[\textbf{criticality preserving}]\label{prop:criticality-preserving}(see Appendix~\ref{app:proofs}: Prop.~\ref{APP:criticality-preserve} for proof)
Given data $S$, consider an $\mathrm{NN}\bigl(\left\{m_{l}\right\}_{l=0}^{L}\bigr)$ and its one-layer deeper counterpart, $\mathrm{NN}^{\prime}\big(\{m_l^\prime\}$, $l\in \{0, 1, 2, \cdots, q, \hat{q}, q+1, \cdots, L\}\big)$. Let $\fT_S$ denote the one-layer lifting and $\vtheta_{\rshal}$
be any parameter of $\mathrm{NN}$. If $\vtheta_{\rshal}$ of $\mathrm{NN}$ satisfies $\nabla_{\boldsymbol{\theta}} R_{S}\left(\boldsymbol{\theta}_{\rshal}\right)=\boldsymbol{0}$, then $\nabla_{\boldsymbol{\theta}^{\prime}} R_{S}\bigl(\boldsymbol{\theta}'_{\rdeep}\bigr)=\boldsymbol{0}$ for any lifted point $\boldsymbol{\theta}'_{\rdeep} \in \fT_{S}\left(\boldsymbol{\theta}_{\rshal}\right)$.
\end{proposition}

Owing to the criticality preserving property, we term one-layer or multi-layer lifting as \emph{critical lifting}. With the aforementioned results, we now establish an embedding principle in depth, which can be intuitively described as follows: \emph{the loss landscape of any DNN contains a hierarchy of critical manifolds, each of which is lifted from the critical points of the loss landscapes of all its shallower counterparts.}
\begin{theorem}[\textbf{embedding principle in depth}]
\label{thm:embedding-principle} (see Appendix~\ref{app:proofs}: Thm.~\ref{APP:thm:embedding-principle} for proof)
Given data $S$ and an $\mathrm{NN'}\bigl(\left\{m'_{l}\right\}_{l=0}^{L'}\bigr)$, for any parameter $\vtheta_{\rc}$ of any  shallower $\mathrm{NN}\bigl(\left\{m_{l}\right\}_{l=0}^{L}\bigr)$ satisfying $\nabla_{\boldsymbol{\theta}} R_{S}\left(\boldsymbol{\theta}_{\rc}\right)=\boldsymbol{0}$, there exists parameter $\vtheta'_{\rc}$ in the loss landscape of  $\mathrm{NN'}\bigl(\left\{m'_{l}\right\}_{l=0}^{L'}\bigr)$ satisfying the following conditions:
\begin{itemize}
    \item[(i)] $\vf_{\vtheta'_{\rc}}(\vx)=\vf_{\vtheta_{\rc}}(\vx)$ for $\vx\in S_{\vx}$;
    \item[(ii)] $\nabla_{\boldsymbol{\theta}'} R_{S}\left(\boldsymbol{\theta'}_{\rc}\right)=\boldsymbol{0}$.
\end{itemize}
\end{theorem}

\begin{remark}
Although we only prove output preserving for training inputs,  it is important to note that the output function of the neural network is indeed preserved over a broader area of the input space including at least a neighbourhood of each training input (see Appendix~\ref{app:proofs}: Prop.~\ref{APP:prop-generalization} for proof). Consequently, if the training dataset is sufficiently large and representative, then the lifting operator effectively preserves the generalization performance. 
\end{remark}

Leveraging critical lifting, the aforementioned embedding principle in depth provides a clear picture about the hierarchical structure of critical points/manifolds in depth within a DNN's loss landscape. This hierarchical structure profoundly influences the nonlinear training behavior of a deep network, as any nearby training trajectory tends to gravitate towards these points/manifolds.

Critical lifting delineates the relationship between the critical points of deep networks and their shallow counterparts. Notably, the critical embedding resulting from critical lifting also preserves the positive and negative inertia indices of the Hessian matrix.
\begin{proposition}[\textbf{positive and negative index of inertia preserving}]\label{prop:inertia-preserving}(see Appendix~\ref{app:proofs}: Prop.~\ref{app:inertia-preserving} for proof)
Given data $S$, consider an $\mathrm{NN}\bigl(\left\{m_{l}\right\}_{l=0}^{L}\bigr)$, and its deeper counterpart $\mathrm{NN'}\bigl(\left\{m'_{l}\right\}_{l=0}^{L'}\bigr)$. Let $\fT_S$ denote the corresponding critical lifting and $\vtheta_{\rshal}$
be a critical point of $\mathrm{NN}$. 
For any critical embedding $\fE: \sR^M \to \sR^{M'}$ resulting from $\fT_S$ (i.e., $\mathcal{E}$ is a differentiable point-to-point critical mapping with full column rank Jacobian $\boldsymbol{J}_{\mathcal{E}(\boldsymbol{\theta}_{\text{shal}})}$). Denote $\boldsymbol{\theta}_{\text{deep}}:=\mathcal{E}(\boldsymbol{\theta}_{\text{shal}})$. Then the number of positive and negative eigenvalues of the Hessian matrix $\boldsymbol{H}_S(\boldsymbol{\theta}_{\text{deep}})$ equals the counterparts of $\boldsymbol{H}_S(\boldsymbol{\theta}_{\text{shal}})$.
\end{proposition}

Utilizing Prop~\ref{prop:inertia-preserving}, we immediately conclude that through critical embedding, the degeneracy of a critical point will increase.

\begin{corollary}[\textbf{incremental degeneracy of critical point through lifting}]\label{cor:incremental_degeneracy}(see Appendix~\ref{app:proofs}: Cor.~\ref{app:cor:incremental_degeneracy} for proof)
Given data $S$, consider an $\mathrm{NN}\bigl(\left\{m_{l}\right\}_{l=0}^{L}\bigr)$, and its deeper counterpart $\mathrm{NN'}$ $\bigl(\left\{m'_{l}\right\}_{l=0}^{L'}\bigr)$. Let $\fT_S$ denote the corresponding critical lifting and $\vtheta_{\rshal}$
be a critical point of $\mathrm{NN}$. 
For any critical embedding $\fE: \sR^M \to \sR^{M'}$ resulting from $\fT_S$ (i.e., $\mathcal{E}$ is a differentiable point-to-point critical mapping with full column rank Jacobian $\boldsymbol{J}_{\mathcal{E}(\boldsymbol{\theta}_{\text{shal}})}$). Denote $\boldsymbol{\theta}_{\text{deep}}:=\mathcal{E}(\boldsymbol{\theta}_{\text{shal}})$.
Then, $\vtheta_{\rdeep}$ possesses $M'- M$ additional degrees of degeneracy in comparison to $\vtheta_{\rshal}$.
\end{corollary}

\begin{example}[\textbf{preservation or change of positive and negative indices of Inertia}]
Consider a simple two-layer linear network $\boldsymbol{f}(\boldsymbol{\theta}_{\text{shal}}) = \boldsymbol{W}_1 \boldsymbol{W}_2$ and $\vtheta_{\rshal} = (\vW_1, \vW_2)=(\vzero, \vzero)$ is a critical point. We lift this to a three-layer network $\boldsymbol{f}'(\boldsymbol{\theta}'_{\text{deep}}) = \boldsymbol{W}'_1 \boldsymbol{W}'_2 \boldsymbol{W}'_3$.
While $\vtheta_{\rdeep}=(\boldsymbol{W}'_1, \boldsymbol{W}'_2, \boldsymbol{W}'_3)=(\vzero, \vzero, \vI)$ and $(\vzero, \vzero, \vzero)$ are both lifted critical points, only the former corresponds to an embedded critical point via a differentiable critical embedding and therefore preserves the inertia index, whereas the latter does not and may alter the Hessian's spectral properties.

Specifically, consider a critical mapping $\mathcal{E}: (\boldsymbol{W}_1, \boldsymbol{W}_2) \hookrightarrow (\boldsymbol{W}_1, \boldsymbol{h}(\boldsymbol{W}_2), \boldsymbol{g}(\boldsymbol{W}_2))$ such that $\boldsymbol{h}(\boldsymbol{W}_2)\boldsymbol{g}(\boldsymbol{W}_2)=\boldsymbol{W}_2$.
We analyze two cases:

(1) Set $\boldsymbol{h}(\boldsymbol{W}_2)=\boldsymbol{W}_2$ and $\boldsymbol{g}(\boldsymbol{W}_2)=\boldsymbol{I}$.
One can verify that both $\boldsymbol{h}$ and $\boldsymbol{g}$ are differentiable, and $\mathcal{E}$ has a Jacobian $\boldsymbol{J}_{\mathcal{E}(\boldsymbol{\theta}_{\text{shal}})}$ with full column rank. Therefore, by Proposition~\ref{prop:inertia-preserving}, the positive and negative inertia exponents are preserved under this mapping.

(2) Consider any functions $\boldsymbol{h}$ and $\boldsymbol{g}$ satisfying $\boldsymbol{h}(\boldsymbol{W}_2)\boldsymbol{g}(\boldsymbol{W}_2) = \boldsymbol{W}_2$ with the constraint that $\boldsymbol{h}(\boldsymbol{0})=\boldsymbol{g}(\boldsymbol{0})=\boldsymbol{0}$.
One can check that $\boldsymbol{h}$ and $\boldsymbol{g}$ cannot both be differentiable in this constraint~\footnote{For simplicity, consider the one-dimensional case where $h(y)g(y) = y$ and $h(0)=g(0)=0$. At $y=0$, differentiability demands $g(0) = \lim_{y\to0} g(y) = \lim_{y\to0} \frac{y}{h(y)} = \frac{1}{h'(0)} \neq 0$, contradicting $g(0)=0$.}. Therefore, there exists no differentiable critical embedding that maps the origin $(\boldsymbol{0}, \boldsymbol{0})$ to $(\boldsymbol{0}, \boldsymbol{0}, \boldsymbol{0})$ in the deeper network. This illustrates a strict saddle point in the shallow network can be transformed into a degenerate saddle point in the deeper network.
\end{example}

It is important to note that critical lifting is data-dependent, and the nature of this data-dependence is characterized by the following proposition:
\begin{proposition}[\textbf{data dependency of critical lifting}]\label{thm:data-depen} (see Appendix~\ref{app:proofs}: Prop.~\ref{APP:thm:data-depen} for proof)
Given data $S$ and $S'$, consider an $\mathrm{NN}\bigl(\left\{m_{l}\right\}_{l=0}^{L}\bigr)$ and its deeper counterpart, $\mathrm{NN}^{\prime}\big(\{m_l^\prime\}_{l=0}^{L'}$. Let $\fT_S$ and $\fT_{S'}$ denote the respective critical liftings and $\vtheta_{\rshal}$
be any parameter of $\mathrm{NN}$. If data $S' \subseteq S$, then $\fT_{S}(\vtheta_{\rshal})\subseteq\fT_{S'}(\vtheta_{\rshal})$.
\end{proposition}

This result indicates that increasing training data shrinks any lifted manifold to its subset. The implication is that enlarging the training dataset is a viable strategy for diminishing the lifted critical manifolds, which can consequently expedite the decay of the training loss, as demonstrated in the subsequent experimental study.

\section{Numerical experiments}\label{sec:exp}

The theory of the embedding principle in depth highlights the existence of a class of "simple" critical points inherited from shallower neural networks. A natural question arises as to whether deep networks encounter these lifted critical points. Moreover, it is essential to understand the influence of these critical points on the training dynamics. In this section, we conduct comprehensive experiments to study these questions. In section~\ref{sec:experiment-setup}, we briefly describe the experimental setup. Section~\ref{sec:training-dynamics} is dedicated to a detailed comparison of the training dynamics between deep and shallow networks. Section~\ref{sec:BN} investigates the impact of batch normalization and larger datasets on optimization. Lastly, in section~\ref{sec:network-pruning}, we explore the practical application of layer-wise network pruning.

\subsection{Experimental setup}\label{sec:experiment-setup}
\paragraph{Measuring layer linearization by Minimal Pearson Correlation (MPC).} To detect lifted critical points in our experiments, we propose a method to measure their key feature, i.e., layer linearization. Let $\tilde{\vf}^{[l]}_{\vtheta}=\mW^{[l]} \vf^{[l-1]}_{\vtheta}+\vb^{[l]} \in \sR^{m_l }$ and $\vf^{[l]}_{\vtheta} = \sigma(\tilde{\vf}^{[l]}_{\vtheta})\in \sR^{m_l}$ denote the input and output of neurons in layer $l$, respectively. For each neuron in a layer, the absolute value of the Pearson correlation coefficient is utilized to measure the extent of linearization for each neuron. By \emph{taking the minimum over the whole layer}, we obtain the following measure of the extent of linearization for the $l$-th layer:
\begin{equation}
\operatorname{MPC}(\vf^{[l]}_{\vtheta}, \tilde{\vf}^{[l]}_{\vtheta}) = \min_{j\in[m_l]} \bigl|\rho\bigl(\bigl(\vf^{[l]}_{\vtheta}\bigr)_j, \bigl(\tilde{\vf}^{[l]}_{\vtheta}\bigr)_j\bigr)\bigr| \in [0, 1],
\label{eq:pear}
\end{equation}
where $\bigl(\vf^{[l]}_{\vtheta}\bigr)_j, \bigl(\tilde{\vf}^{[l]}_{\vtheta}\bigr)_j$ represent the $j$-th components of $\vf^{[l]}_{\vtheta}$ and $\tilde{\vf}^{[l]}_{\vtheta}$, respectively, and $\rho\bigl(\bigl(\vf^{[l]}_{\vtheta}\bigr)_j$, $\bigl(\tilde{\vf}^{[l]}_{\vtheta}\bigr)_j\bigr)$ denotes the Pearson correlation coefficient.
\begin{remark}
To determine whether a critical point in an experiment is a lifted critical point, two conditions must be met: (i) the $\operatorname{MPC}$ of a layer equals 1, and (ii) merging that layer with the subsequent layer leads to a critical point of the shallower neural network.
\end{remark}

In the subsequent experiments conducted, fully-connected networks are predominantly utilized. The input dimension, denoted as $d$, and output dimension, represented by $d^\prime$, are determined by the respective training dataset. Each hidden layer has the same width $m$. All parameters are initialized by a Gaussian distribution with mean zero and variance specified in each experiment. To investigate the training behavior of DNNs with feature learning, we employ relatively small initializations to enhance the nonlinearity of training, which stays away from the Neural Tangent Kernel (NTK) regime. To meticulously examine the dynamics during the training process, a full-batch gradient descent approach is employed, combined with a small learning rate. More details of experiments are presented in Appendix~\ref{app:tra-details}.

\subsection{Training dynamics of deep and shallow neural networks}\label{sec:training-dynamics}

\subsubsection{Deep neural networks encounter lifted critical points during practical training} 
To investigate whether deep neural networks encounter lifted critical points during training, we train tanh NNs with different depths (width $m=50$) on the data shown in Fig.~\ref{fig:3-hidden-nonlinear}(b) and the Iris dataset in Fig.~\ref{fig:3-hidden-nonlinear-iris}. During training, we trace the evolution of the $\operatorname{MPC}$ for each hidden layer computed using Eq.~\eqref{eq:pear}. 

As depicted in Fig.~\ref{fig:3-hidden-nonlinear}(a), the three-hidden-layer NN first stagnates at the same loss value as the single-hidden-layer NN, displaying nearly the same output function as illustrated in Fig.~\ref{fig:3-hidden-nonlinear}(b). According to Fig.~\ref{fig:3-hidden-nonlinear}(c), the first two hidden layers exhibit strong linearity during stagnation. In Fig.~\ref{fig:3-hidden-nonlinear}(d), we observe that merging the effectively linear layers using Eq.~\eqref{eq:output-pres} results in a critical point of the single-hidden-layer NN. 

\begin{figure}[t]
\centering
\subfigure[Training loss]{
\includegraphics[height=0.24\textwidth]{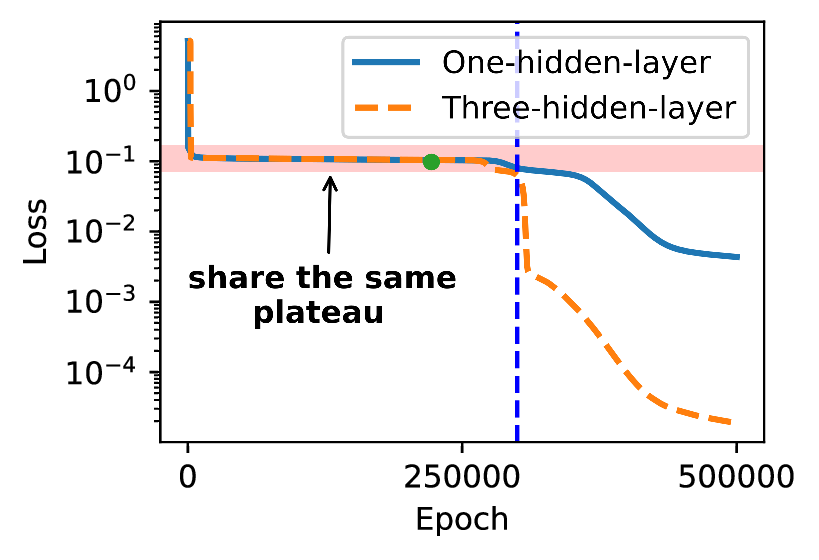}
}\hspace{1.5cm}
\subfigure[Output at the plateau]{
\includegraphics[height=0.24\textwidth]{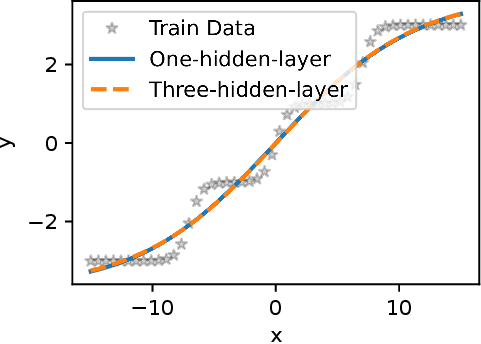}
}\\
\subfigure[Evolution of $\operatorname{MPC}$]{
\includegraphics[height=0.25\textwidth]{figures/Tanh_pear.eps}
}\hspace{1.5cm}
\subfigure[Loss of reduced NN]{
\includegraphics[height=0.25\textwidth]{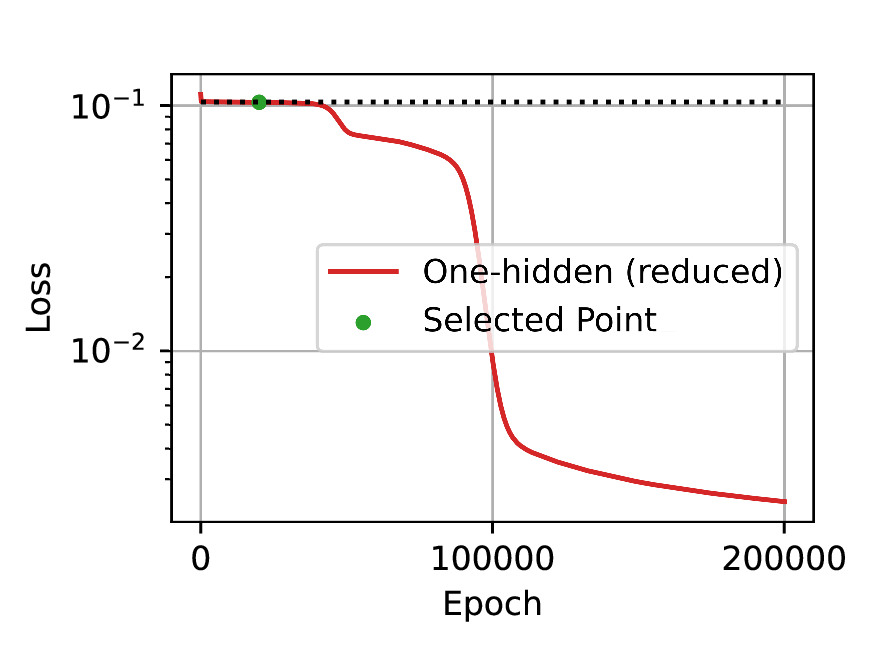}
}
\caption{\textbf{Deep neural networks encounter lifted critical points during training on synthetic data.} (a) The training loss for single-hidden-layer and three-hidden-layer NNs with width $m=50$. (b) The outputs of NNs with different depths at the same loss value indicated by the colored span in (a). (c) The extent of layer linearization for different hidden layers during the training process of the three-hidden-layer NN. (d) Training loss trajectory of the reduced single-hidden- layer NN. The green dot in (a) and (c) is selected as a representative for comparison.}
\label{fig:3-hidden-nonlinear}
\end{figure}

A similar phenomenon is observed for the Iris dataset in Fig.~\ref{fig:3-hidden-nonlinear-iris}. As shown in Fig.~\ref{fig:3-hidden-nonlinear-iris}(a), the three-hidden-layer NN first stagnates at the same loss value as the single-hidden-layer NN, displaying nearly the same training and test accuracy as illustrated in Fig.~\ref{fig:3-hidden-nonlinear-iris}(b). Upon examining the confusion matrices of the networks at this plateau, we observe that both networks correctly classify two of the three classes and completely misclassify the third class, achieving an accuracy of 66.7\%. According to Fig.~\ref{fig:3-hidden-nonlinear-iris}(c), the last two hidden layers exhibit strong linearity during stagnation. In Fig.~\ref{fig:3-hidden-nonlinear-iris}(d), we find that merging the effectively linear layers using Eq.~\eqref{eq:output-pres} results in a critical point of the single-hidden-layer NN. 

\begin{figure}[t]
\centering
\subfigure[Training loss]{
\includegraphics[height=0.25\textwidth]{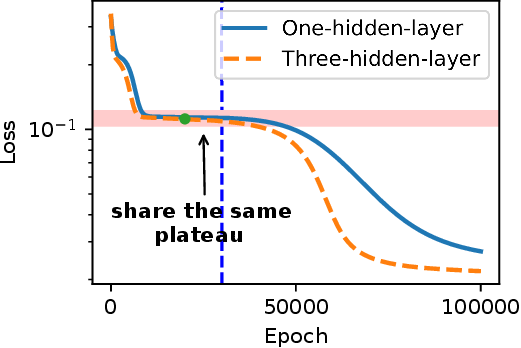}
}\hspace{1.5cm}
\subfigure[Output at the plateau]{
\includegraphics[height=0.25\textwidth]{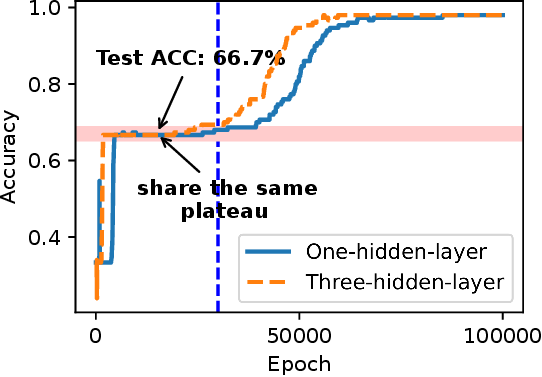}
}\\
\subfigure[Evolution of $\operatorname{MPC}$]{
\includegraphics[height=0.25\textwidth]{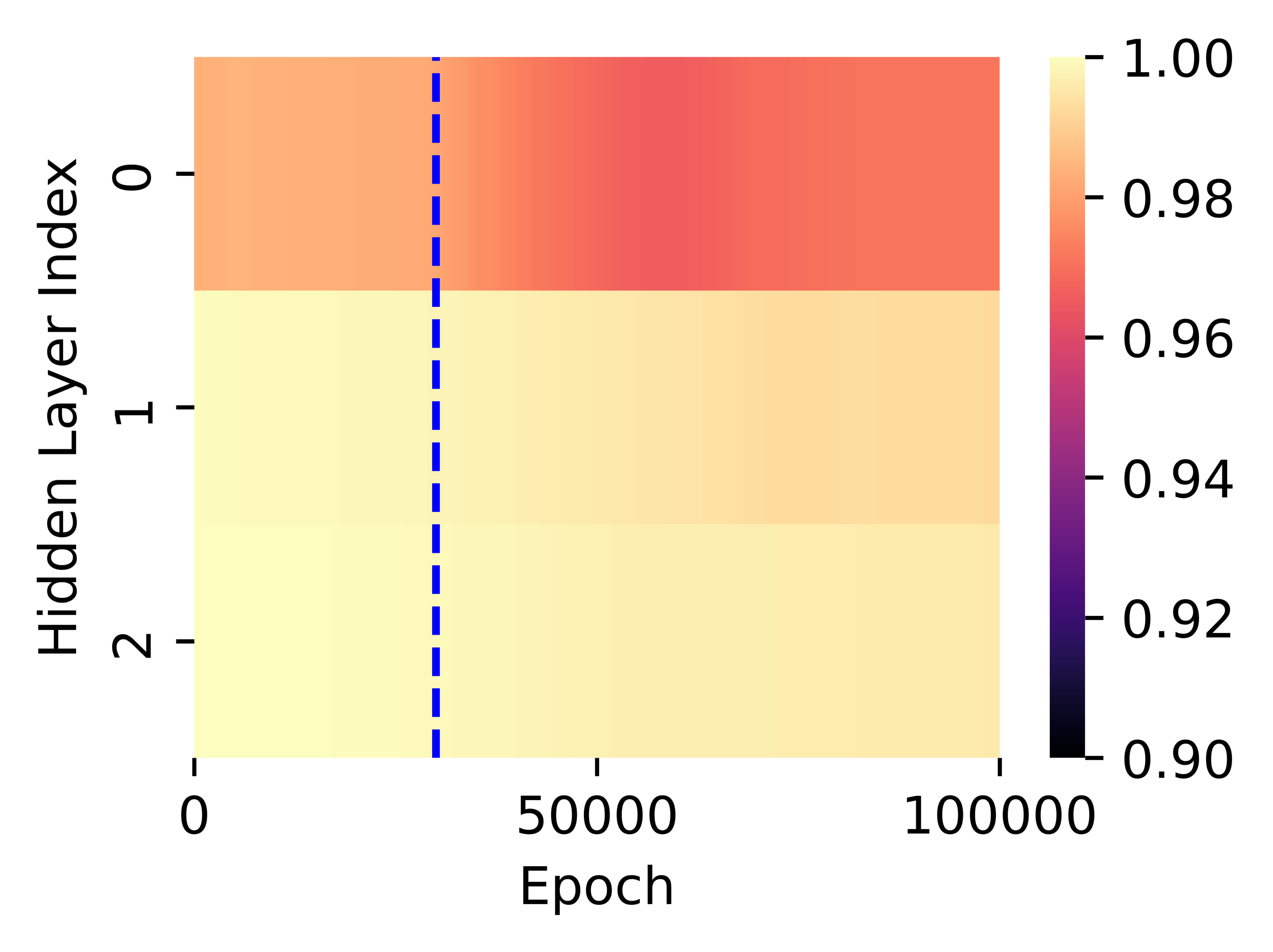}
}\hspace{1.5cm}
\subfigure[Loss of reduced NN]{
\includegraphics[height=0.25\textwidth]{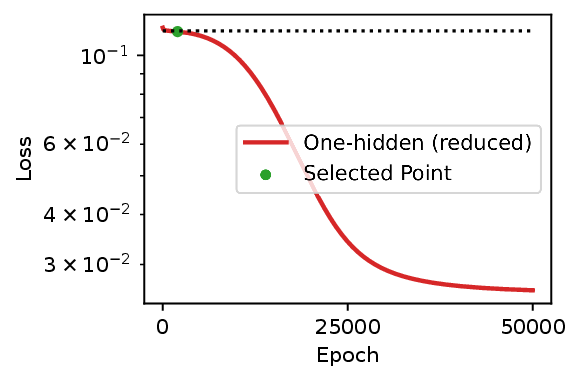}
}
\caption{
\textbf{Deep neural networks encounter lifted critical points during training on Iris data.} (a) The training loss for single-hidden-layer and three-hidden-layer NNs with width $m=50$. (b) The training accuracy of NNs with different depths at the same loss value indicated by the colored span in (a). The accuracy plateau is at 66.7\% for both train and test sets. (c) The extent of layer linearization for different hidden layers during the training process of the three-hidden-layer NN. (d) Training loss trajectory of the reduced single-hidden- layer NN. The green dot in (a) and (c) is selected as a representative for comparison.}
\label{fig:3-hidden-nonlinear-iris}
\end{figure}

We observe similar phenomena for ReLU NNs and residual-connected NNs (ResNets) in Fig.~\ref{fig:ReLU} and Fig.~\ref{fig:res-nonlinear} in Appendix~\ref{app:sup-exp}. These results confirm that deep NNs indeed encounter critical points lifted from shallower NNs with small initialization. Moreover, the study conducted by~\cite{kudugunta2019investigating} utilized linear centered kernel alignment (CKA) as a metric to assess the similarity between different layers. Their experiments on CIFAR-10 and ImageNet-1000 revealed that a "block structure" emerges when the network size is much larger than the dataset size, with many layers exhibiting a high degree of similarity. In Prop.~\ref{APP-prop-CKA} of Appendix~\ref{app:proofs}, we prove that this similarity between layers indeed reflects the degree of linear correlation between representations across layers. We further investigate the impact of initialization and dataset size on layer linearization in Appendix~\ref{app:sup-exp} (see Fig.~\ref{fig:Initialization}). Generally speaking, it is common to observe layer linearization when a deep network is trained on a simple task with small initialization. This occurrence of layer linearization significantly reduces the network's complexity and thus may be crucial in contributing to the generalization of deep networks.
\begin{remark}
Generally, reducing the scale of initialization enhances the nonlinear feature learning dynamics of DNNs. With a properly small initialization, the training dynamics tend to experience lifted critical points from shallower NNs, which implicitly help control the complexity of DNNs during training. However, a too small initialization scale may result in undesired prolonged stagnation at critical points. Remarkably, standard initialization schemes like Xavier/Kaiming-He strike a balance between feature learning and training efficiency with a proper initialization scale, ensuring that the training dynamics are both efficient and sufficiently nonlinear.
\end{remark}

\begin{figure}[t]
\centering
\subfigure[Synthetic data (ReLU)]{
\includegraphics[height=0.32\textwidth]{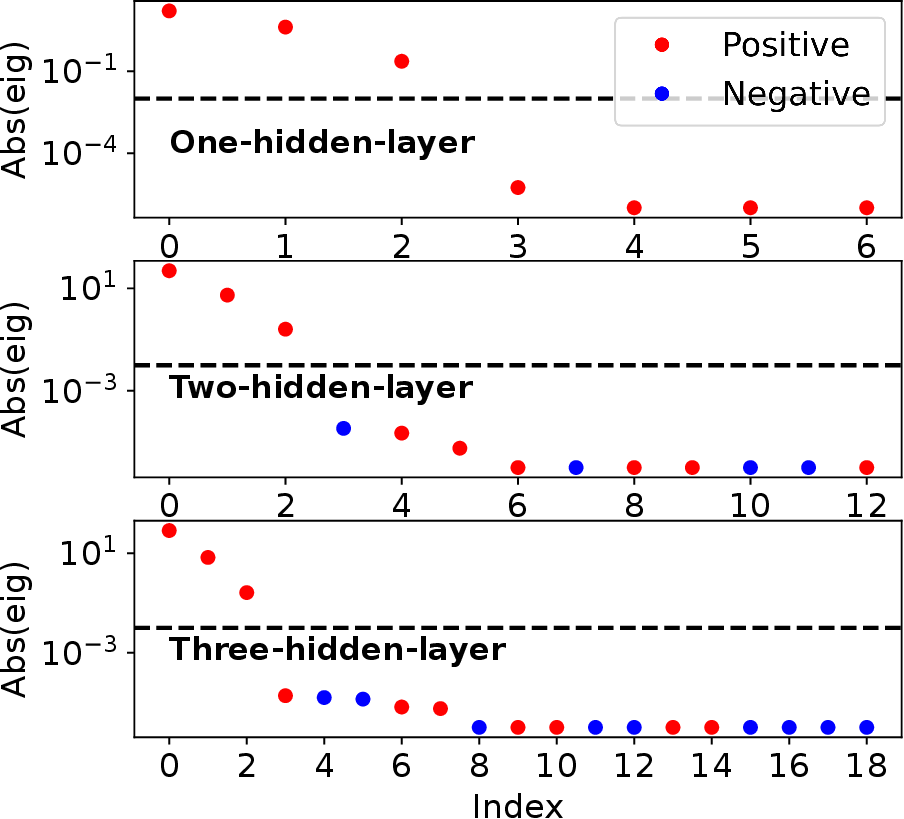}
}\hspace{1.cm}
\subfigure[Iris data (ReLU)]{
\includegraphics[height=0.32\textwidth]{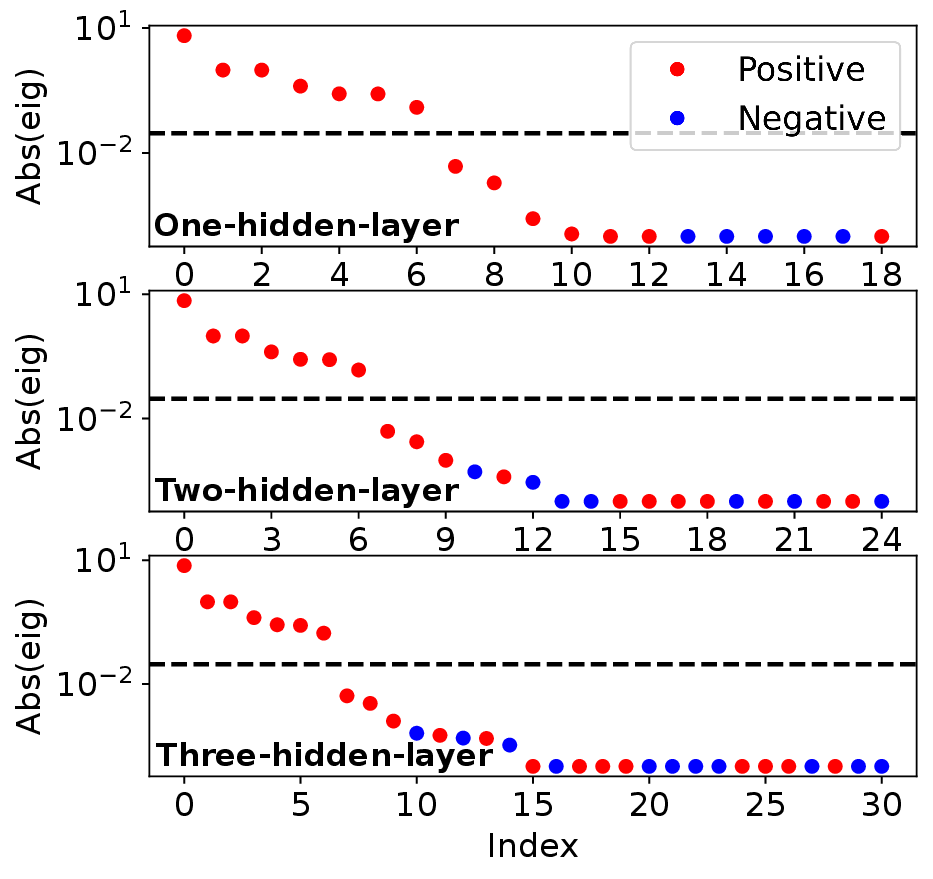}
}
\caption{\textbf{Incremental degeneracy of critical points through embedding.} (a, b) The eigenvalues of Hessian of ReLU NNs at the critical points embedded from the single hidden layer NN for learning data in Fig.~\ref{fig:3-hidden-nonlinear}(b) and Iris data, respectively. The results for each plot are averaged over 100 random orthogonal similarity transformations. The auxiliary dashed lines in (a, b) delineate the empirical boundary between zero and non-zero eigenvalues. We perform the embedding operation by factorizing one hidden layer into $k$ hidden layers ($k=2, 3$), whose input weights are identity and biases are selected to translate the input range into the affine subdomain.}

\label{fig:eigenvalues-new}
\end{figure}

\subsubsection{Incremental degeneracy of critical points through embedding} Empirical studies have shown that the Hessian matrix of the minimizer, derived from training, possesses a significant count of zero or near-zero eigenvalues, highlighting the existence of highly degenerate critical points within the loss landscape~\cite{sagun2016singularity}. Our findings, as illustrated in Figs.~\ref{fig:3-hidden-nonlinear} and~\ref{fig:3-hidden-nonlinear-iris}, imply that deep networks often encounter critical points inherited from their shallower counterparts. To empirically validate that lifted critical points have higher degeneracy, we design experiments to compute the eigenvalues of their respective Hessian matrices at the empirical critical points.

Specifically, we train a single-hidden-layer ReLU NN with width $m=2$ to learn the data in Fig. \ref{fig:3-hidden-nonlinear}(b) shown in Fig. \ref{fig:eigenvalues-new}(a) or the Iris dataset in Fig. \ref{fig:eigenvalues-new}(b) to a empirical critical point (the $L_1$ norm of the gradient $\leq 10^{-4}$). We then embed this critical point through a one-layer embedding and a two-layer embedding to NNs with 2 hidden layers and 3 hidden layers, with each hidden layer having the same width, respectively. To compute the eigenvalues of the Hessian matrix with a large condition number accurately, we conducted 100 random orthogonal similarity transformations on the matrix. We took the average from these 100 trials to obtain a more reliable set of eigenvalues. We then pinpoint locations where there are evident gaps in eigenvalue magnitudes, as delineated by the auxiliary line in Fig.~\ref{fig:eigenvalues-new}. This allows for differentiation between zero and non-zero eigenvalues, serving as a mechanism to ascertain empirical degeneracy. Detailed methodology can be found in Appendix~\ref{app:tra-details}. As illustrated in Fig.~\ref{fig:eigenvalues-new}, each embedding step introduces six more zero eigenvalues to the Hessian matrix due to the introduction of two neurons, resulting in six additional parameters. This finding is consistent with Prop.~\ref{prop:inertia-preserving} and potentially elucidates the origin of a particular type of degeneracy at critical points within the loss landscape.

\subsection{The effects of batch normalization and larger dataset}
\label{sec:BN}
Since layer linearization is a key feature in encountering lifted critical points and results in slower training, we investigate how batch normalization and larger training sets can affect layer linearization and subsequently lead to accelerated training.
\subsubsection{Batch normalization avoids lifted critical points}

Batch normalization (BN) normalizes the layers' inputs by re-centering and re-scaling: $\mathrm{BN}(\vx)=\boldsymbol{\gamma} \circ \frac{\vx-\hat{\boldsymbol{\mu}}_{\mathcal{B}}}{ \hat{\boldsymbol{\sigma}}_{\mathcal{B}}}+\boldsymbol{\beta}$ with a default initialization $\vgamma=\boldsymbol{1}, \vbeta=\boldsymbol{0}$. Empirically, using BN can greatly speed up NN training. 
Intuitively, when a neuron's input range becomes too small, its nonlinear activation function behaves effectively like a linear function. In such cases, batch normalization can enhance the nonlinearity of each neuron by effectively rescaling its input range to $\fO(1)$ and thus suppress layer linearization.
As embedding principle in depth unravels a large family of lifted critical points with layer linearization, avoiding these critical points through suppressing layer linearization may be an important mechanism underlying the training efficiency of BN in practice. 

To verify this mechanism, we perform the following experiment shown in Fig.~\ref{fig:different-BN-init}. We train a 2-hidden-layer tanh NN (width $m=50$) to learn the data in Fig.~\ref{fig:3-hidden-nonlinear}(b), and compare the training trajectories without BN and with BN of scaling parameter $\vgamma$ initialized at $\boldsymbol{0.1}$, $\boldsymbol{1.0}$, or $\boldsymbol{1.5}$. Conforming with our intuition, a larger $\vgamma$ better suppresses layer linearization throughout the training as shown in Fig.~\ref{fig:different-BN-init}(b-d). Moreover, the stagnation is significantly alleviated with a larger $\vgamma$. There is even no stagnation for $\vgamma$ initialized at $\boldsymbol{1.5}$, signifying complete avoidance of lifted critical points as predicted by above mechanism.
\begin{figure*}[t]
\centering
\subfigure[Training loss]{
\includegraphics[height=0.27 \textwidth]{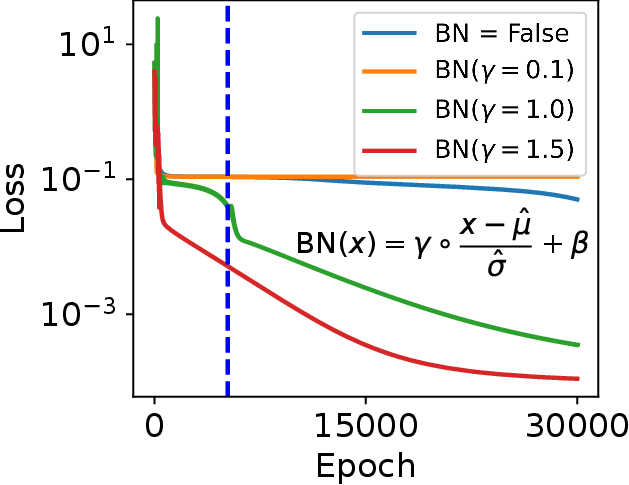}
}\hspace{1.5cm}
\subfigure[$\vgamma = 0.1$]{
\includegraphics[height=0.28 \textwidth]{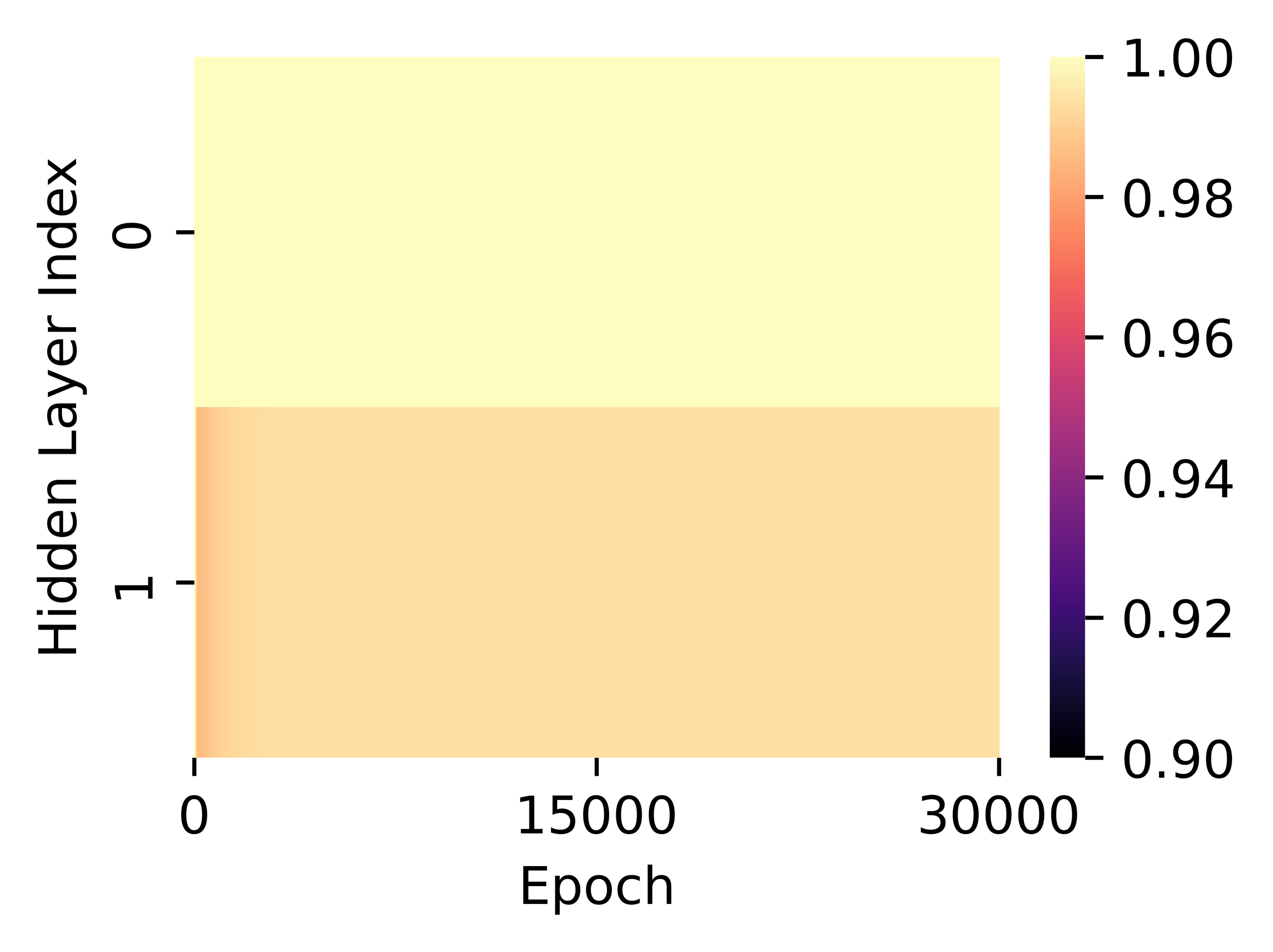}
}\\
\subfigure[$\vgamma = 1.0$]{
\includegraphics[height=0.28 \textwidth]{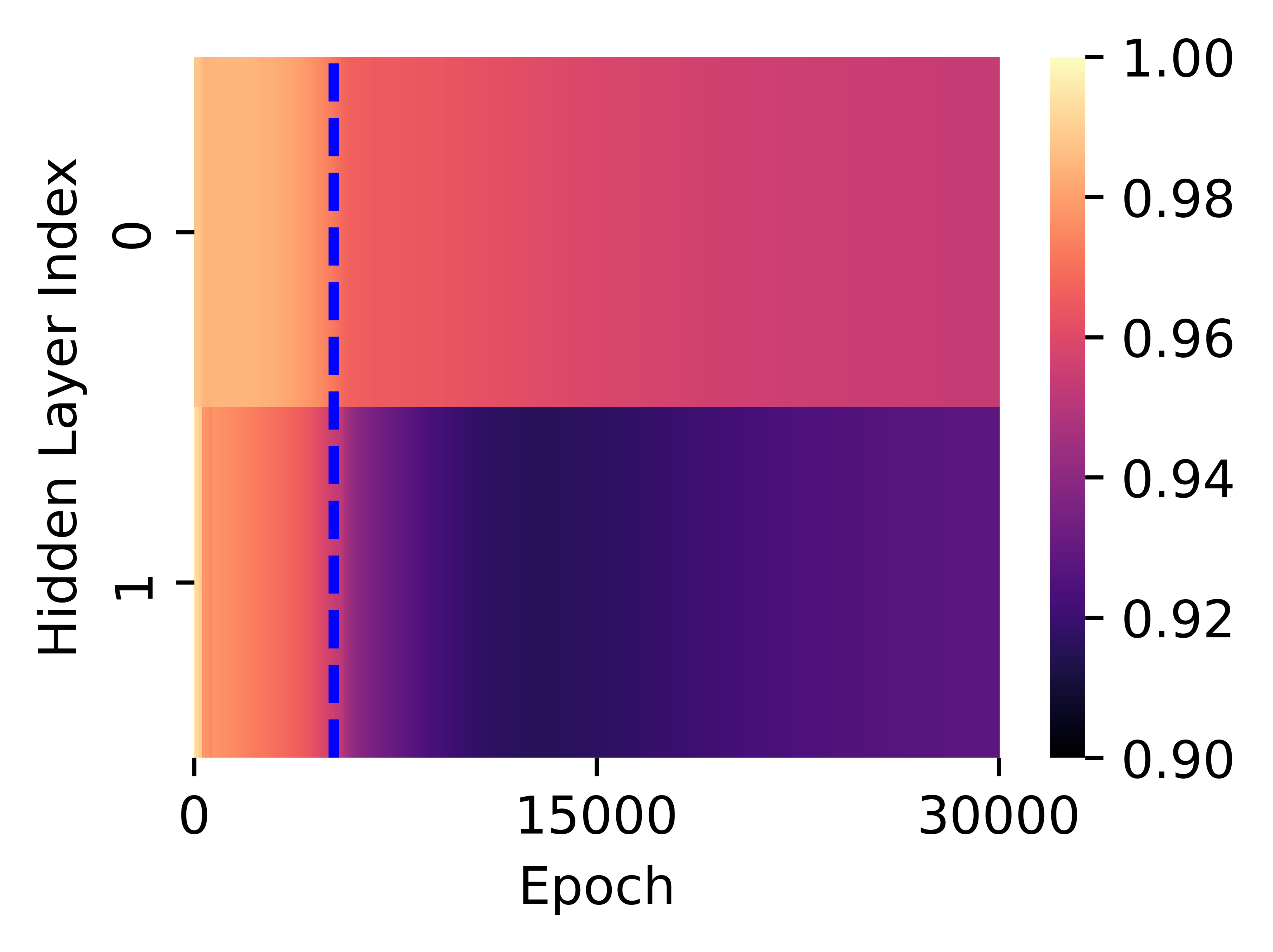}
}\hspace{1.5cm}
\subfigure[$\vgamma = 1.5$]{
\includegraphics[height=0.28 \textwidth]{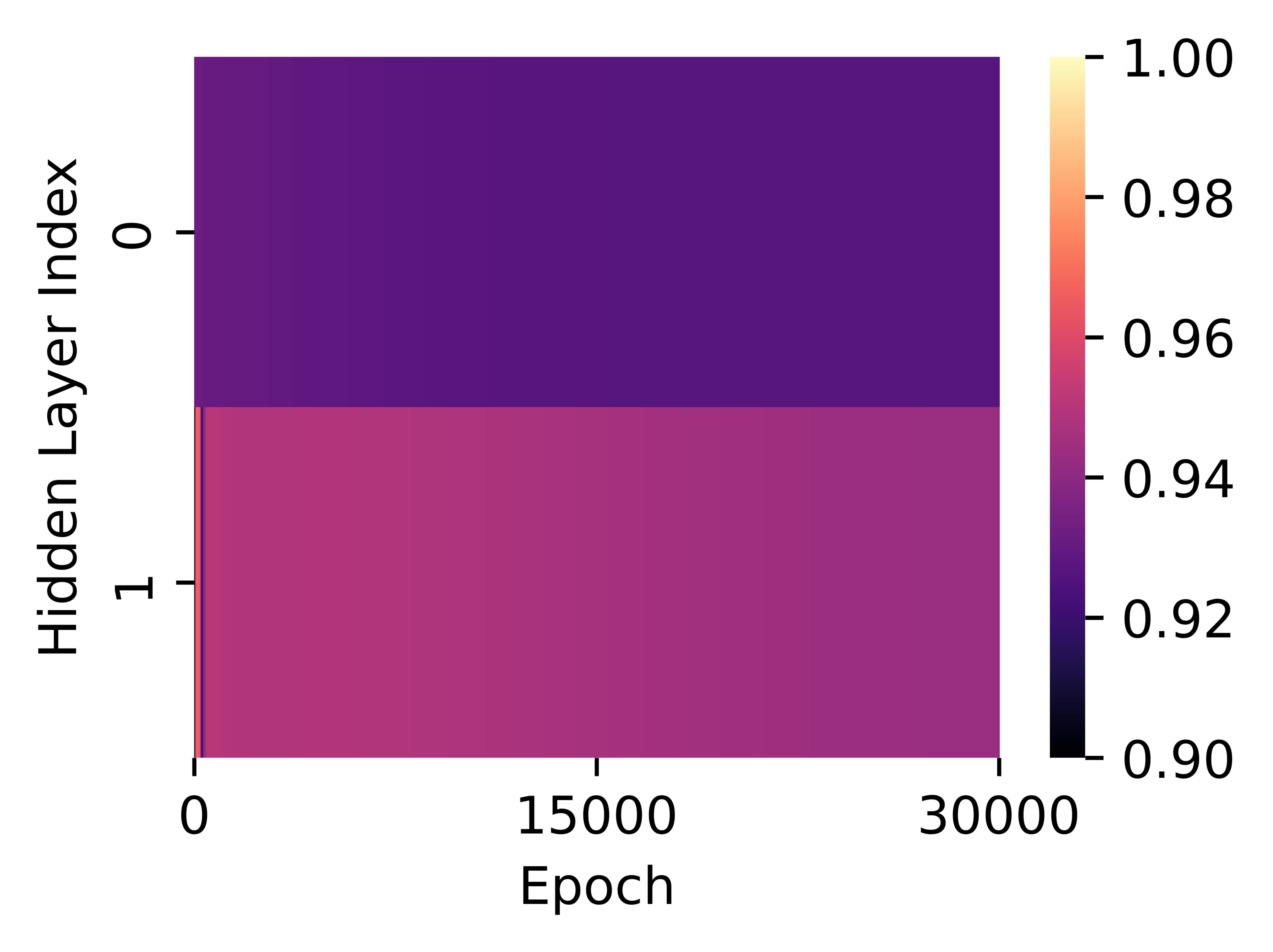}
}
\caption{\textbf{Batch normalization avoids lifted critical points during training.}
(a) Trajectories of training loss without BN and with BN of different initial values.  (b-d) The extent of layer linearization for all hidden layers with BN of scaling parameter $\vgamma$ initialized at $0.1$, $1.0$, or $1.5$, respectively. The auxiliary dash lines in (a) and (c) correspond to the same epoch.}
\label{fig:different-BN-init}
\end{figure*}

\subsubsection{Optimization benefit of larger dataset}

Prop.~\ref{thm:data-depen} characterizes the data dependency of critical lifting. The intuition is that a larger dataset increases the difficulty of layer linearization, thus helping reduce the critical manifolds.  This result provides a seemingly counter-intuitive prediction that larger dataset may be more easily fitted due to the reduced critical manifolds lifted from shallower NNs. This prediction is verified by the following experiment in Fig.~\ref{fig:data-dependency2}. 

We train a tanh NN with $3$ hidden layers (width $m=50$) to learn data (data size $n=70$) of Fig.~\ref{fig:3-hidden-nonlinear}(b) to a critical point (the red point in Fig.~\ref{fig:data-dependency2}(a)). We then continue (blue curve) or switch to larger datasets (orange and green curves) for training. As shown in Fig.~\ref{fig:data-dependency2}(a), more training data leads to faster escape from the lifted critical point. Fig.~\ref{fig:data-dependency2}(b-d) further trace the extent of layer linearization for each hidden layer on datasets of different sizes, respectively. From Fig.~\ref{fig:data-dependency2}(c-d), we can clearly see that more data facilitate the expression of nonlinearity of hidden layers (see the abrupt reductions of $\operatorname{MPC}$ at the red dot). This helps the network to escape the critical manifold lifted from a single-hidden-layer NN, which aligns with the implications of Prop.~\ref{thm:data-depen}.

\begin{figure}[t]
\centering
\subfigure[Training loss]{
\includegraphics[height=0.27 \textwidth]{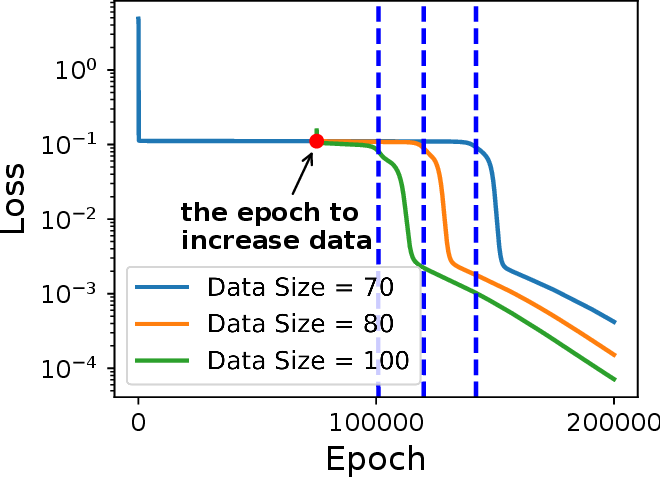}
}\hspace{1.5cm}
\subfigure[Data size $= 70$]{
\includegraphics[height=0.28 \textwidth]{figures/data-dependency-pear1.eps}
}\\
\subfigure[Data size $= 80$]{
\includegraphics[height=0.28 \textwidth]{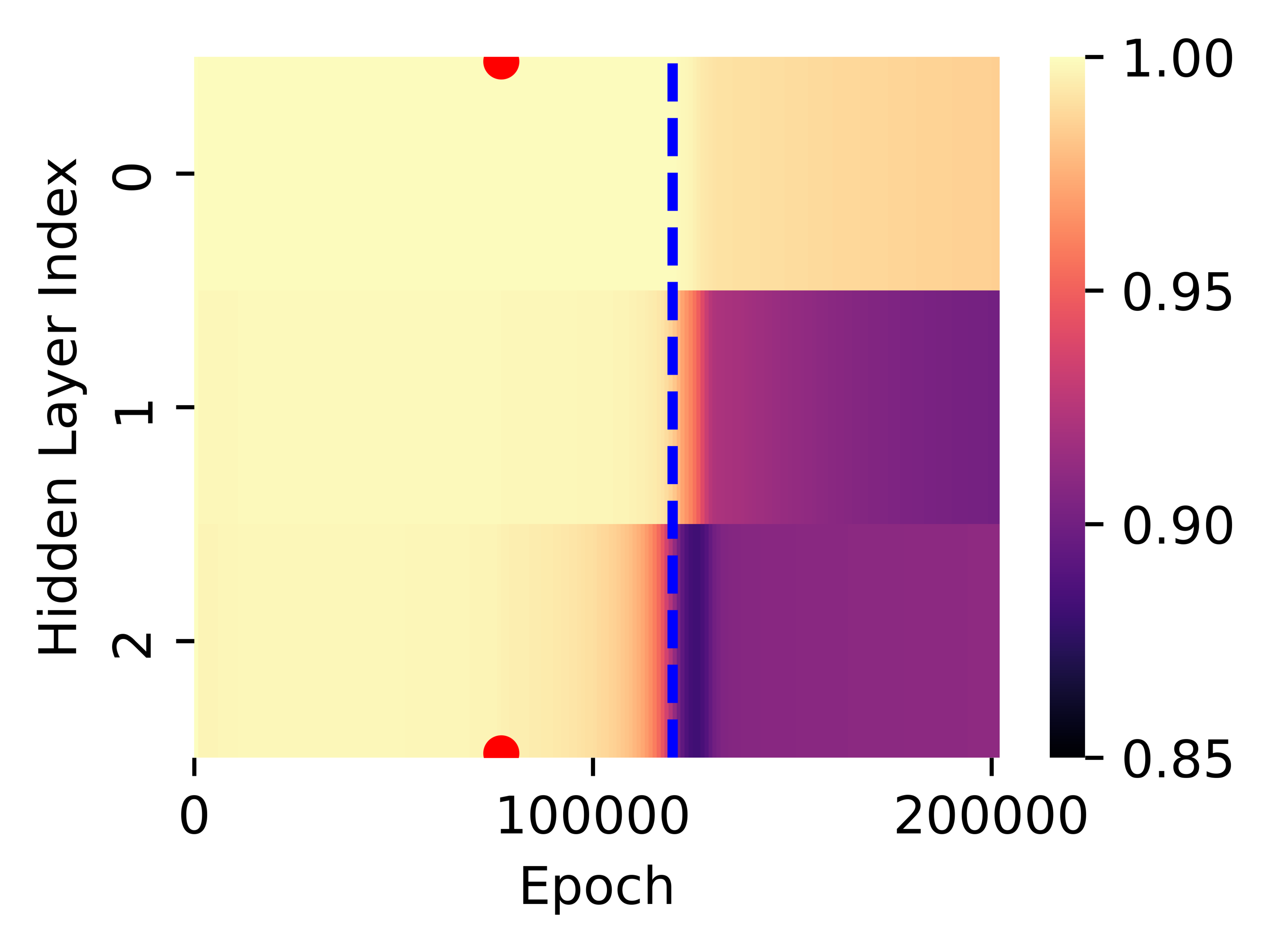}
}\hspace{1.5cm}
\subfigure[Data size $= 100$]{
\includegraphics[height=0.28 \textwidth]{figures/data-dependency-pear3.eps}
}
\caption{\textbf{Optimization benefit of larger dataset.} (a) The training loss of three-hidden-layer NN with width $m=50$ for learning data of Fig.~\ref{fig:3-hidden-nonlinear}(b). We sample 70, 80 and 100 data points equally spaced around 0, respectively. The red dot in (a) is selected for switching dataset.  (b-d) The extent of layer linearization for all hidden layers on datasets of different size, respectively. The red dots correspond to the epoch to switch dataset. The auxiliary dash lines correspond to the epoch where NNs escape from the lifted critical manifold.}
\label{fig:data-dependency2}
\end{figure}

\subsection{Network pruning}\label{sec:network-pruning}

\subsubsection{Layer pruning of DNNs with layer linearization} The embedding principle in depth predicts a family of critical points with layer linearization. These critical points intrinsically come from shallower NNs, thus possessing good layer pruning potential. To realize such pruning potential in practice, we propose the method of detecting and merging effectively linear layers, which works as follows. We train a deep $10$-hidden-layer tanh NN (width $m=50$) on the MNIST dataset. At the red dot in Fig.~\ref{fig:MNIST-exp-redo}(a), the training loss decreases very slowly, presumably is very close to a global minimum. As shown in Fig.~\ref{fig:MNIST-exp-redo}(b), there are $5$ effective linear layers ($\operatorname{MPC}>0.99$) at this point. We merge these effective linear layers by properly multiplying their weights, thereby pruning the NN of $10$ hidden layers to $5$ hidden layers. The parameters before reduction is denoted by $\vtheta_{\text{ori}}$ and after reduction by $\vtheta_{\text{redu}}$.
We further train the pruned NN from $\vtheta_{\text{redu}}$ as shown in Fig.~\ref{fig:MNIST-exp-redo}(c) which quickly fall into the same loss value as the red point in Fig.~\ref{fig:MNIST-exp-redo}(a). We then compare the prediction between original model and the pruned model at the corresponding red point on $10000$ test data as shown in Fig.~\ref{fig:MNIST-exp-redo}(d). Although our critical lifting does not preserve the output function over the entire domain of input, we still observe well agreement of these two models (overall $\sim 98.54\%$), which implies that this reduction can approximately preserve the generalization performance ($95.4\%$ to $95.27\%$).
\begin{figure}[t]
\centering
\subfigure[Initial NN]{
\includegraphics[height=0.26\textwidth]{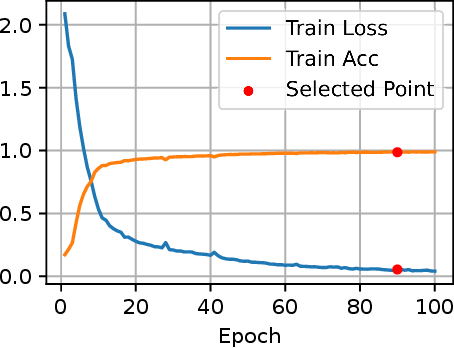}
}\hspace{1.5cm}
\subfigure[Evolution of $\operatorname{MPC}$]{
\includegraphics[height=0.28\textwidth]{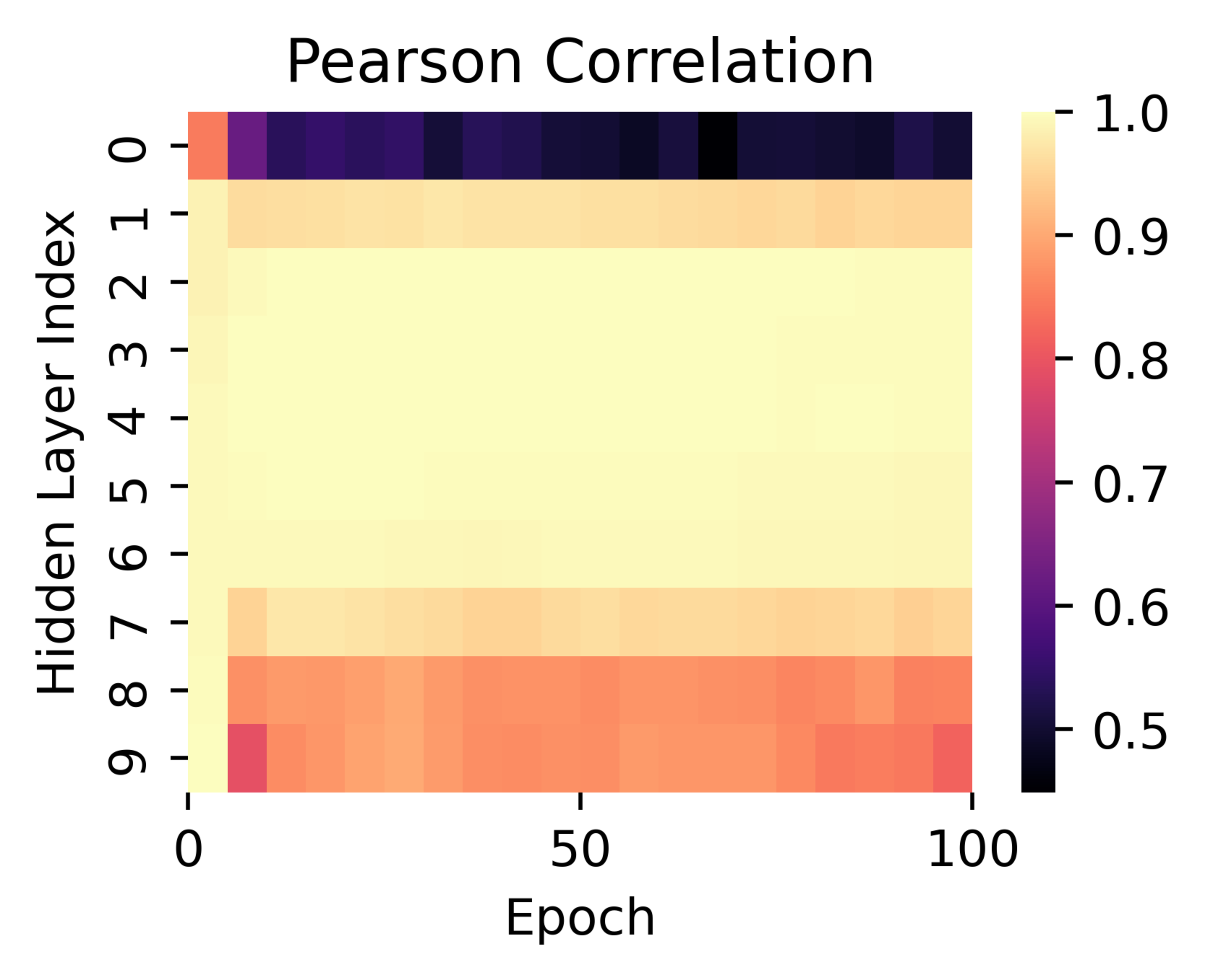}
}\\
\subfigure[Pruned NN]{
\includegraphics[height=0.26\textwidth]{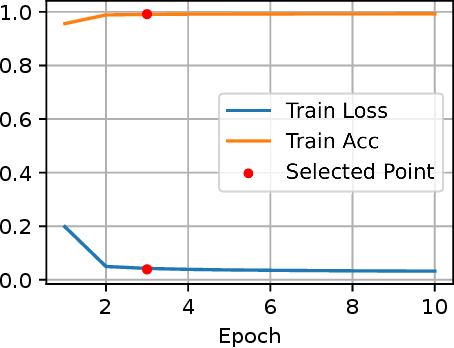}
}\hspace{1.5cm}
\subfigure[Prediction similarity]{
\includegraphics[height=0.28\textwidth]{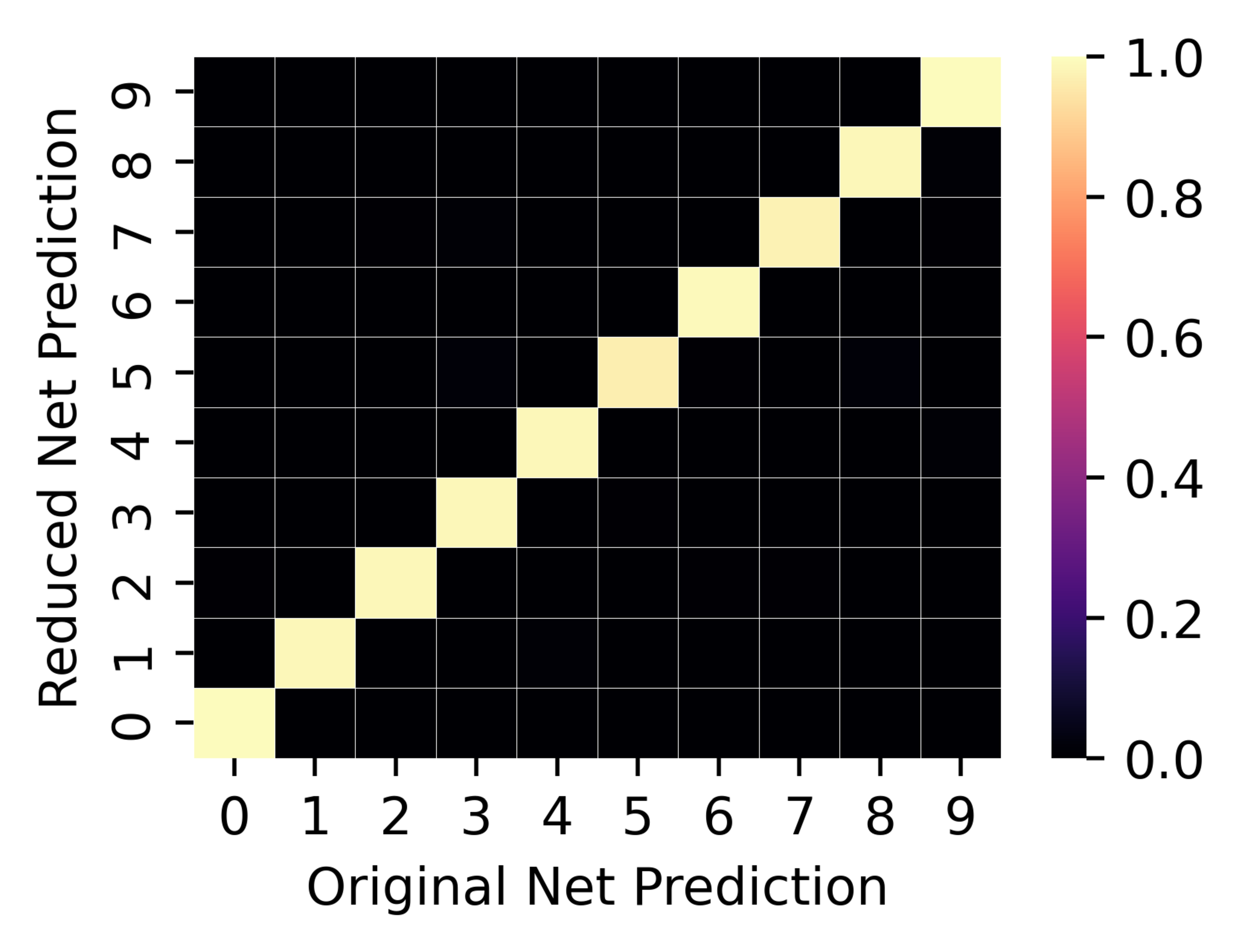}
}
\caption{\textbf{Layer pruning of DNNs with layer linearization.} (a) The training process of the original $10$-hidden-layer network on MNIST dataset. The red dot is selected for layer pruning. (b) The extent of layer linearization for all hidden layers during the training process. (c) The training process after layer pruning. (d) Prediction similarity between initial and reduced network on the test dataset. For each grid, color indicates the ratio of that prediction pair $(i,j)$ over all samples predicted as $j$ by the original $10$-hidden-layer NN.}
\label{fig:MNIST-exp-redo}
\end{figure}

\section{Discussion}
\label{sec:discussion}
\subsection{Differences between Embedding Principle in Width and in Depth}
Our work draws inspiration from the embedding principle in width~\cite{zhang2021embedding}, but, for the first time, addresses the embedding relation in depth for DNNs.  Though our depth-based critical lifting operator shares the same spirit with the width-based critical embedding operator in~\cite{zhang2021embedding}, it is important to note several key distinctions, as follows:

(i) \textbf{The target NN.} The depth-based critical lifting maps to the parameter space of a deeper NN whereas the width-based critical embedding maps to the parameter space of a wider NN.

(ii) \textbf{The requirement for the activation function.} The depth-based critical lifting requires a layer linearization condition, which can be satisfied for any activation with a non-constant linear segment, such as ReLU, leaky-ReLU, and ELU, and can be approximately satisfied for general smooth activations, including sigmoid, tanh, and gelu. On the other hand, the width-based critical embedding works for any activation function.

(iii) \textbf{The type of mapping.} The depth-based critical lifting is a set-valued function which maps any parameter vector to a manifold, whereas the width-based critical embedding is a vector-valued function.

(iv) \textbf{The output preserving property.} The depth-based critical lifting preserves the DNN outputs at the training dataset, whereas the width-based critical embedding preserves the DNN output function over the entire input domain.

(v) \textbf{The data dependency.} The depth-based critical lifting is data-dependent, whereas the width-based embedding operator is independent on data. For depth-based critical lifting, we prove that more data leads to reduced lifted manifolds in Prop.~\ref{thm:data-depen}, whereas the width-based critical embedding is data-independent.

The key distinctions (iii)-(v) stem from the intrinsic differences in expressiveness when adding width versus depth to a DNN. Specifically, the function space of a narrow NN is strictly a subset of the function space of any wider NN, as referenced in \cite{fukumizu2019semi,zhang2021embedding}. However, a similar (embedding) relation regarding the expressiveness can generally only be expected in the sense of approximation when comparing shallower and deeper NNs.

\subsection{Other network architectures}
The embedding principle in depth is derived from the inherent layer stacking nature of deep neural networks, where each layer can be linearly factorized into multiple layers. Therefore, our theoretical findings proven for fully-connected DNNs can be naturally extended to other neural network architectures, such as convolutional neural networks and residual-connected neural networks. For instance, when dealing with a residual-connected network, the only alteration required to derive its one-layer lifting operator is a modification of the output-preserving condition (see Appendix~\ref{app:proofs}: Def.~\ref{APP:def:one-layer-residual-lifting} for details).

It should be noted that for residual-connected networks, simply assigning zero values to the parameters of the inserted block can create a trivially criticality-preserving lifting, which naturally satisfies the layer linearization condition. However, it is important to underscore that the practical experience of encountering these lifted critical points during training is a more crucial determinant of the importance of different lifting techniques than the mere existence of such embeddings. 
Indeed, such trivially lifted critical points are seldom observed in practice. On the contrary, the lifted critical points we propose, which are accompanied by layer linearization, are frequently observable in experimental setups, as illustrated in Fig.~\ref{fig:res-nonlinear} in Appendix~\ref{app:sup-exp}. Therefore, the critical lifting operator proposed in this work is of special value for studying the practical training behavior of DNNs.

\subsection{Simplicity bias}
Our embedding principle in depth, combined with the prior embedding principle in width, explicitly characterizes the hierarchical structure of DNN loss landscape in both width and depth dimensions. This hierarchical structure highlights the potential for non-overfitting, even when a significantly large NN is employed to fit limited training data generated by a comparatively smaller (i.e., shallower and narrower) NN. The intuition here is that a large NN, guided by the hierarchy of "simple" critical points/manifolds lifted/embedded from shallower and narrower NNs, may learn a "simple" interpolation from a small NN through training. This potential is further corroborated by our numerical experiments shown in Figs.~\ref{fig:3-hidden-nonlinear}, \ref{fig:3-hidden-nonlinear-iris}, \ref{fig:different-BN-init}, \ref{fig:data-dependency2} and \ref{fig:MNIST-exp-redo}, which indicate that the "effective" depth, i.e., the number of nonlinear layers, often gradually increases during the training of deep NNs. In light of these findings, it is tempting to conjecture that, with proper initialization, a large DNN could adaptively increase its "effective" depth and width based on the complexity of the training data. We will examine this conjecture in our future works.
\section{Conclusion} \label{sec:conclusion}

In this paper, we discover an embedding principle in depth, establishing that the loss landscape of a deep NN inherits all critical points from shallower NNs. We introduce the critical lifting operator that serves to prove this principle and provide comprehensive details about it. Furthermore, we offer empirical evidence demonstrating the vast insights provided by this principle, which contribute to the highly degenerate critical points of deep networks, the acceleration effect of batch normalization and larger datasets, and the process of layer pruning. It should be noted that the experiments conducted in this work serve as proofs of concept for these novel insights. Further systematic experimental studies are needed to gain a full understanding of the practical significance of these insights.

Overall, our discovery of the embedding principle in depth, together with the previous embedding principle in width, provides a comprehensive picture of the intrinsic hierarchical structure of the DNN loss landscape. This picture strongly supports the empirically observed similarities in training and generalization between NNs of varying sizes, thereby shedding light on the non-overfitting mystery of large NNs.

\section*{Acknowledgments}
This work is sponsored by the National Key R\&D Program of China  Grant No. 2022YFA1008200 (T.L., Z.X., Y.Z.,), the National Natural Science Foundation of China Grant No. 92270001 (Z.X), No. 12101402 (Y.Z.), No. 12371511 (Z.X.), No. 12101401 (T.L.), Shanghai Municipal Science and Technology Key Project No. 22JC1401500 (T. L.), Shanghai Municipal of Science and Technology Project Grant No. 20JC1419500 (Y.Z.), the Lingang Laboratory Grant No. LG-QS-202202-08 (Y.Z.), Shanghai Municipal of Science and Technology Major Project No. 2021SHZDZX0102, the HPC of School of Mathematical Sciences and the Student Innovation Center, and the Siyuan-1 cluster supported by the Center for High Performance Computing at Shanghai Jiao Tong University, Key Laboratory of Marine Intelligent Equipment and System, Ministry of Education, P.R. China.



\begin{thebibliography}{10}

\bibitem{arora2018optimization}
Sanjeev Arora, Nadav Cohen, and Elad Hazan.
\newblock On the optimization of deep networks: Implicit acceleration by overparameterization.
\newblock In {\em International Conference on Machine Learning}, pages 244--253. PMLR, 2018.

\bibitem{cai2020deep}
Zhiqiang Cai, Jingshuang Chen, Min Liu, and Xinyu Liu.
\newblock Deep least-squares methods: An unsupervised learning-based numerical method for solving elliptic pdes.
\newblock {\em Journal of Computational Physics}, 420:109707, 2020.

\bibitem{cheridito2021landscape}
Patrick Cheridito, Arnulf Jentzen, and Florian Rossmannek.
\newblock Landscape analysis for shallow relu neural networks: complete classification of critical points for affine target functions.
\newblock {\em arXiv preprint arXiv:2103.10922}, 2021.

\bibitem{collobert2008unified}
Ronan Collobert and Jason Weston.
\newblock A unified architecture for natural language processing: Deep neural networks with multitask learning.
\newblock In {\em Proceedings of the 25th international conference on Machine learning}, pages 160--167, 2008.

\bibitem{du2018power}
Simon Du and Jason Lee.
\newblock On the power of over-parametrization in neural networks with quadratic activation.
\newblock In {\em International Conference on Machine Learning}, pages 1329--1338. PMLR, 2018.

\bibitem{weinan2018exponential}
Weinan E and Wang Qingcan.
\newblock Exponential convergence of the deep neural network approximationfor analytic functions.
\newblock {\em SCIENCE CHINA Mathematics}, 61(10):1733, 2018.

\bibitem{eldan2016power}
Ronen Eldan and Ohad Shamir.
\newblock The power of depth for feedforward neural networks.
\newblock In {\em Conference on learning theory}, pages 907--940, 2016.

\bibitem{fukumizu2000local}
Kenji Fukumizu and Shun-ichi Amari.
\newblock Local minima and plateaus in hierarchical structures of multilayer perceptrons.
\newblock {\em Neural networks}, 13(3):317--327, 2000.

\bibitem{fukumizu2019semi}
Kenji Fukumizu, Shoichiro Yamaguchi, Yoh-ichi Mototake, and Mirai Tanaka.
\newblock Semi-flat minima and saddle points by embedding neural networks to overparameterization.
\newblock {\em Advances in Neural Information Processing Systems}, 32:13868--13876, 2019.

\bibitem{han2018solving}
Jiequn Han, Arnulf Jentzen, and Weinan E.
\newblock Solving high-dimensional partial differential equations using deep learning.
\newblock {\em Proceedings of the National Academy of Sciences}, 115(34):8505--8510, 2018.

\bibitem{he2019asymmetric}
Haowei He, Gao Huang, and Yang Yuan.
\newblock Asymmetric valleys: Beyond sharp and flat local minima.
\newblock {\em arXiv preprint arXiv:1902.00744}, 2019.

\bibitem{he2016deep}
Kaiming He, Xiangyu Zhang, Shaoqing Ren, and Jian Sun.
\newblock Deep residual learning for image recognition.
\newblock In {\em Proceedings of the IEEE conference on computer vision and pattern recognition}, pages 770--778, 2016.

\bibitem{ioffe2015batch}
Sergey Ioffe and Christian Szegedy.
\newblock Batch normalization: Accelerating deep network training by reducing internal covariate shift.
\newblock In {\em International conference on machine learning}, pages 448--456. PMLR, 2015.

\bibitem{keskar2017large}
Nitish~Shirish Keskar, Jorge Nocedal, Ping Tak~Peter Tang, Dheevatsa Mudigere, and Mikhail Smelyanskiy.
\newblock On large-batch training for deep learning: Generalization gap and sharp minima.
\newblock In {\em 5th International Conference on Learning Representations, ICLR 2017}, 2017.

\bibitem{kudugunta2019investigating}
Sneha~Reddy Kudugunta, Ankur Bapna, Isaac Caswell, Naveen Arivazhagan, and Orhan Firat.
\newblock Investigating multilingual nmt representations at scale, 2019.

\bibitem{nguyenwide}
Thao Nguyen, Maithra Raghu, and Simon Kornblith.
\newblock Do wide and deep networks learn the same things? uncovering how neural network representations vary with width and depth.
\newblock In {\em International Conference on Learning Representations}.

\bibitem{rahaman2018spectral}
Nasim Rahaman, Devansh Arpit, Aristide Baratin, Felix Draxler, Min Lin, Fred~A Hamprecht, Yoshua Bengio, and Aaron Courville.
\newblock On the spectral bias of deep neural networks.
\newblock {\em International Conference on Machine Learning}, 2019.

\bibitem{rastegari2016xnor}
Mohammad Rastegari, Vicente Ordonez, Joseph Redmon, and Ali Farhadi.
\newblock Xnor-net: Imagenet classification using binary convolutional neural networks.
\newblock In {\em Computer Vision--ECCV 2016: 14th European Conference, Amsterdam, The Netherlands, October 11--14, 2016, Proceedings, Part IV}, pages 525--542. Springer, 2016.

\bibitem{sagun2016singularity}
Levent Sagun, L{\'e}on Bottou, and Yann LeCun.
\newblock Singularity of the hessian in deep learning.
\newblock {\em arXiv preprint arXiv:1611.07476}, 2016.

\bibitem{csimcsek2021geometry}
Berfin Simsek, Fran{\c{c}}ois Ged, Arthur Jacot, Francesco Spadaro, Clement Hongler, Wulfram Gerstner, and Johanni Brea.
\newblock Geometry of the loss landscape in overparameterized neural networks: Symmetries and invariances.
\newblock In {\em Proceedings of the 38th International Conference on Machine Learning}, pages 9722--9732. PMLR, 2021.

\bibitem{skorokhodov2019loss}
Ivan Skorokhodov and Mikhail Burtsev.
\newblock Loss landscape sightseeing with multi-point optimization.
\newblock {\em arXiv preprint arXiv:1910.03867}, 2019.

\bibitem{soltanolkotabi2018theoretical}
Mahdi Soltanolkotabi, Adel Javanmard, and Jason~D Lee.
\newblock Theoretical insights into the optimization landscape of over-parameterized shallow neural networks.
\newblock {\em IEEE Transactions on Information Theory}, 65(2):742--769, 2018.

\bibitem{telgarsky2016benefits}
Matus Telgarsky.
\newblock benefits of depth in neural networks.
\newblock In {\em Conference on Learning Theory}, pages 1517--1539, 2016.

\bibitem{wang2020mesh}
Zhongjian Wang and Zhiwen Zhang.
\newblock A mesh-free method for interface problems using the deep learning approach.
\newblock {\em Journal of Computational Physics}, 400:108963, 2020.

\bibitem{wu2017towards}
Lei Wu, Zhanxing Zhu, et~al.
\newblock Towards understanding generalization of deep learning: Perspective of loss landscapes.
\newblock {\em arXiv preprint arXiv:1706.10239}, 2017.

\bibitem{xu_training_2018}
Zhi-Qin~J Xu, Yaoyu Zhang, and Yanyang Xiao.
\newblock Training behavior of deep neural network in frequency domain.
\newblock {\em International Conference on Neural Information Processing}, pages 264--274, 2019.

\bibitem{xu2019frequency}
Zhi-Qin~John Xu, Yaoyu Zhang, Tao Luo, Yanyang Xiao, and Zheng Ma.
\newblock Frequency principle: Fourier analysis sheds light on deep neural networks.
\newblock {\em Communications in Computational Physics}, 28(5):1746--1767, 2020.

\bibitem{DBLP:conf/aaai/XuZ21}
Zhiqin~John Xu and Hanxu Zhou.
\newblock Deep frequency principle towards understanding why deeper learning is faster.
\newblock In {\em {AAAI}}, pages 10541--10550. {AAAI} Press, 2021.

\bibitem{zhang2016understanding}
Chiyuan Zhang, Samy Bengio, Moritz Hardt, Benjamin Recht, and Oriol Vinyals.
\newblock Understanding deep learning requires rethinking generalization.
\newblock In {\em 5th International Conference on Learning Representations, {ICLR} 2017, Toulon, France, April 24-26, 2017, Conference Track Proceedings}. OpenReview.net, 2017.

\bibitem{embedding2021long}
Yaoyu Zhang, Yuqing Li, Zhongwang Zhang, Tao Luo, and Zhi{-}Qin~John Xu.
\newblock Embedding principle: A hierarchical structure of loss landscape of deep neural networks.
\newblock {\em Journal of Machine Learning}, 1(1):60--113, 2022.

\bibitem{zhang2021linear}
Yaoyu Zhang, Tao Luo, Zheng Ma, and Zhi-Qin~John Xu.
\newblock A linear frequency principle model to understand the absence of overfitting in neural networks.
\newblock {\em Chinese Physics Letters}, 38(3):038701, 2021.

\bibitem{zhang2021embedding}
Yaoyu Zhang, Zhongwang Zhang, Tao Luo, and Zhiqin~J Xu.
\newblock Embedding principle of loss landscape of deep neural networks.
\newblock {\em Advances in Neural Information Processing Systems}, 34:14848--14859, 2021.

\end{thebibliography}

\appendix
\section{Proofs}
\label{app:proofs}
In this section, we give all proofs for our theoretical results mentioned in the main text.
\renewcommand\thefigure{A\arabic{figure}}    
\setcounter{figure}{0}    
\newtheorem{oldtheorem}{Theorem}
\renewcommand{\theoldtheorem}{A.\arabic{oldtheorem}}
\setcounter{oldtheorem}{0}

\newtheorem{oldlemma}{Lemma}
\renewcommand{\theoldlemma}{A.\arabic{oldlemma}}
\setcounter{oldlemma}{0}

\newtheorem{olddefinition}{Definition}
\renewcommand{\theolddefinition}{A.\arabic{olddefinition}}
\setcounter{olddefinition}{0}

\newtheorem{oldprop}{Proposition}
\renewcommand{\theoldprop}{A.\arabic{oldprop}}
\setcounter{oldprop}{0}

\newtheorem{oldcor}{Corollary}
\renewcommand{\theoldcor}{A.\arabic{oldcor}}
\setcounter{oldcor}{0}

\begin{olddefinition}[\textbf{one-layer lifting}]
\label{APP:def:one-layer-lifting}
Given data $S$, consider an $\mathrm{NN}\bigl(\left\{m_{l}\right\}_{l=0}^{L}\bigr)$ and its one-layer deeper counterpart, $\mathrm{NN}^{\prime}\big(\{m_l^\prime\}$, $l\in \{0, 1, 2, \cdots, q, \hat{q}, q+1, \cdots, L\}\big)$. The one-layer lifting, denoted as $\fT_S$,  is a function that transforms any parameter $\vtheta=\left(\boldsymbol{W}^{[1]}, \boldsymbol{b}^{[1]}, \cdots, \boldsymbol{W}^{[L]}, \boldsymbol{b}^{[L]}\right)$ of $\mathrm{NN}$ into a set $\fM$ within the parameter space of $\mathrm{NN}^{\prime}$. Formally, 
$\fM$ (where $\fM:= \fT_S(\vtheta))$ represents a collection of all possible parameters $\vtheta'$ of $\mathrm{NN}^{\prime}$
that satisfying the following three conditions:

(i) local-in-layer condition: weights of each layer in $\mathrm{NN}^{\prime}$ are inherited from $\mathrm{NN}$ except for layer $\hat{q}$ and $q+1$, i.e.,  
\begin{equation*}
   \left\{ \begin{aligned}
& \vtheta^{\prime}|_l = \boldsymbol{\theta} |_l,\quad \text{for}\quad  l \in [q]\cup [q+2:L],\\
& \vtheta^{\prime}|_{\hat{q}} = \bigl(\boldsymbol{W'}^{[\hat{q}]},  \boldsymbol{b'}^{[\hat{q}]}\bigr)\in\mathbb{R}^{m'_{\hat{q}}\times m'_{q-1}}\times \mathbb{R}^{m'_{\hat{q}}},\\
& \vtheta^{\prime}|_{q+1} = \bigl(\boldsymbol{W'}^{[q+1]},  \boldsymbol{b'}^{[q+1]}\bigr)\in\mathbb{R}^{m'_{q+1}\times m'_{\hat{q}}}\times \mathbb{R}^{m'_{q+1}}.\\
\end{aligned}
\right. 
\end{equation*}

(ii) layer linearization condition: for any $j\in [m_{\hat{q}}]$, there exists an affine subdomain $(a_j, b_j)$ of $\sigma$ associated with $\lambda_j, \mu_j$ such that the $j$-th component $\big(\boldsymbol{W'}^{[\hat{q}]}\boldsymbol{f}_{\boldsymbol{\theta}'}^{[q]}(\boldsymbol{x}) + \boldsymbol{b'}^{[\hat{q}]}\big)_j \in (a_j, b_j)$ for any $\vx \in S_{\vx}.$

(iii) output preserving condition:
\begin{equation*}
\left\{\begin{aligned}
&  \boldsymbol{W'}^{[q+1]}\operatorname{diag}(\boldsymbol{\lambda})\boldsymbol{W'}^{[\hat{q}]} = \boldsymbol{W}^{[q+1]},\\
& \boldsymbol{W'}^{[q+1]}\operatorname{diag}(\boldsymbol{\lambda})\boldsymbol{b'}^{[\hat{q}]}+\boldsymbol{W'}^{[q+1]}\boldsymbol{\mu} + \boldsymbol{b'}^{[q+1]} = \boldsymbol{b}^{[q+1]},   
\end{aligned}\right.  
\end{equation*}
where $\vlambda = [\lambda_1, \lambda_2, \cdots, \lambda_{m_{\hat{q}}}]^\top\in \mathbb{R}^{m^{\prime}_{\hat{q}}}, \vmu = [\mu_1, \mu_2, \cdots, \mu_{m_{\hat{q}}}]^\top\in\mathbb{R}^{m^{\prime}_{\hat{q}}}$, and $\operatorname{diag}(\boldsymbol{\lambda})$ denotes the diagonal matrix formed by vector $\vlambda$.

\end{olddefinition}
\subsection{Existence of one-layer lifting}
\begin{oldlemma}[\textbf{existence of one-layer lifting}]
\label{APP:existence}
Given data $S$, an $\mathrm{NN}\bigl(\left\{m_{l}\right\}_{l=0}^{L}\bigr)$ and its one-layer deeper counterpart,  $\mathrm{NN}^{\prime}\big(\{m_l^\prime\}$, $l\in \{0, 1, 2, \cdots, q, \hat{q}, q+1, \cdots, L\}\big)$, the one-layer lifting $\fT_S$ exists, i.e., 
$\fT_S(\vtheta_{\rshal})$ is not empty for any parameter $\vtheta_{\rshal}$ of $\mathrm{NN}$. 
\end{oldlemma}
\begin{proof}
We prove this lemma by construction. From the definition of one-layer deeper, we know that $m_1^\prime = m_1, \cdots, m_{q}^\prime = m_q, m_{\hat{q}}^\prime\geq \min\{m_{q}, m_{q+1}\}, m_{q+1}^\prime = m_{q+1}, \cdots, m_L^\prime = m_L$. Without loss of generality, we assume the width of the inserted layer $m^{\prime}_{\hat{q}}$ is equal to $\min\{m_q, m_{q+1}\}$. Our construction can be easily extended to the case with a wider inserted layer by adding zero-neurons, i.e., neurons whose input and output weights are all zero.

For any parameter $\vtheta_{\rshal} = \left(\boldsymbol{W}^{[1]}, \boldsymbol{b}^{[1]}, \cdots, \boldsymbol{W}^{[L]}, \boldsymbol{b}^{[L]}\right)$ of the $\mathrm{NN}$, we construct a  $\vtheta^{\prime}_{\rdeep}$ in $\fT_S(\vtheta_{\rshal})$ as follows. Since the activation function $\sigma$ has a non-constant linear segment, there exists an affine subdomain $(a, b)$ associated with $\lambda,\mu \in \mathbb{R}$ ($\lambda\neq0$) such that
$\sigma(x) = \lambda x + \mu$ for $x\in (a, b)$. Let $[x_{\rlow}, x_{\rup}]\subseteq (a, b)$ ($x_{\rlow}\neq x_{\rup}$) be a closed interval of the affine subdomain and $\boldsymbol{\lambda} = \lambda \boldsymbol{1}\in \sR^{m_{\hat{q}}}$, $\boldsymbol{\mu}=\mu\boldsymbol{{1}}\in \sR^{m_{\hat{q}}}$, where $\boldsymbol{1}\in \sR^{m_{\hat{q}}}$ is the all-ones vector. Now we discuss in two cases:

\begin{itemize}
    \item[(1)]  $\min\{m_q, m_{q+1}\} = m_{q+1}$.

Denote training data by $S = \{(\vx_i, \vy_i)\}_{i=1}^{n}$ and $\tilde{\vf}^{[q+1]}_{\vtheta_{\rshal}}=\mW^{[q+1]} \vf^{[q]}_{\vtheta_{\rshal}}+\vb^{[q+1]}  \in \sR^{m_{q+1}}$, 
\begin{align*}
x_{\rmin} = \min_{i\in[n], j\in[m_{q+1}]}\bigl\{\bigl(\tilde{\vf}^{[q+1]}_{\vtheta_{\rshal}}(\vx_i)\bigr)_j\bigr\}, \\
x_{\rmax} = \max_{i\in[n], j\in[m_{q+1}]}\bigl\{\bigl(\tilde{\vf}^{[q+1]}_{\vtheta_{\rshal}}(\vx_i)\bigr)_j\bigr\},
\end{align*}
where $\bigl(\tilde{\vf}^{[q+1]}_{\vtheta_{\rshal}}(\vx_i)\bigr)_j$ is the $j$-th component of $\tilde{\vf}^{[q+1]}_{\vtheta_{\rshal}}(\vx_i)$.

Now we transform the input range $[x_{\rmin}, x_{\rmax}]$ into the affine subdomain $[a, b]$ of the activation function $\sigma$ through an affine transformation. To this end, we further discuss in two cases:
\begin{itemize}
    \item[(i)] case 1: $x_{\rmin} \neq x_{\rmax}.$

Let $\xi = \dfrac{x_{\rup}-x_{\rlow}}{x_{\rmax}-x_{\rmin}}\in \sR$, $\vW^{\prime [\hat{q}]} = \xi\vW^{[q+1]}\in\sR^{m_{\hat{q}}\times m_q}$ and $\vb^{\prime[\hat{q}]}=\xi \vb^{[q+1]} + (x_{\rlow}-\xi x_{\rmin})\boldsymbol{1}\in\sR^{m_{\hat{q}}},$
where $\boldsymbol{1}$ is the all-ones vector.  And then we let $\vW^{\prime[q+1]} = \frac{1}{\lambda \xi} \boldsymbol{I}_d\in\sR^{m_{q+1}\times m_{\hat{q}}}$, where $\vI_d$ is the identity matrix, and $\boldsymbol{b}^{\prime[q+1]} = \boldsymbol{b}^{[q+1]}-\boldsymbol{W'}^{[q+1]}\operatorname{diag}(\boldsymbol{\lambda})\boldsymbol{b'}^{[\hat{q}]}-\boldsymbol{W'}^{[q+1]}\boldsymbol{\mu}\in\sR^{m_{q+1}}.$ Finally, we let $\vtheta^{\prime}_{\rdeep} = \big(\boldsymbol{W}^{[1]}, \boldsymbol{b}^{[1]}, \cdots, \boldsymbol{W}^{[q]}, \boldsymbol{b}^{[q]}, \boldsymbol{W'}^{[\hat{q}]}, \boldsymbol{b'}^{[\hat{q}]}, \boldsymbol{W'}^{[{q+1}]}$, $\boldsymbol{b'}^{[{q+1}]}$, $\cdots, \boldsymbol{W}^{[L]}, \boldsymbol{b}^{[L]}\big)$.
Now we verify that $\vtheta^{\prime}_{\rdeep}\in \fT_S(\vtheta_{\rshal})$, i.e., $\vtheta^{\prime}_{\rdeep}$ satisfies the three conditions of one-layer lifting. Firstly, by the construction of $\vtheta^{\prime}_{\rdeep}$, the local-in-layer condition is satisfied automatically.

Next, for any $j\in [m_{\hat{q}}]$, there exists an affine subdomain $(a, b)$ associated with $\lambda, \mu$ such that the $j$-th component $(\boldsymbol{W'}^{[\hat{q}]}\boldsymbol{f}_{\boldsymbol{\theta^{\prime}_{\rdeep}}}^{[q]}(\boldsymbol{x}) + \boldsymbol{b'}^{[\hat{q}]})_j =\bigl( \xi\vW^{[q+1]}\boldsymbol{f}_{\vtheta_{\rshal}}^{[q]}(\boldsymbol{x})+\xi \vb^{[q+1]} +(x_{\rlow}-\xi x_{\rmin})\boldsymbol{1}\bigr)_j = \bigl(\xi\tilde{\vf}^{[q+1]}_{\vtheta_{\rshal}}+(x_{\rlow}-\xi x_{\rmin})\boldsymbol{1}\bigr)_j \in (a, b)$ 
for any $\vx \in S_{\vx}.$ Thus the layer linearization condition holds.

Finally, by direct calculation, the output preserving condition holds:
\begin{align*}
\left\{\begin{aligned}
&  \boldsymbol{W'}^{[q+1]}\operatorname{diag}(\boldsymbol{\lambda})\boldsymbol{W'}^{[\hat{q}]} = \boldsymbol{W}^{[q+1]},\\
& \boldsymbol{W'}^{[q+1]}\operatorname{diag}(\boldsymbol{\lambda})\boldsymbol{b'}^{[\hat{q}]}+\boldsymbol{W'}^{[q+1]}\boldsymbol{\mu} + \boldsymbol{b'}^{[q+1]} = \boldsymbol{b}^{[q+1]}.   
\end{aligned}\right.   
\end{align*}

Collecting the above results, we prove that $\vtheta^{\prime}_{\rdeep}\in \fT_S(\vtheta_{\rshal})$, i.e., $\fT_S(\vtheta_{\rshal})$ is not empty.

\item[(ii)] case 2: $x_{\rmin} = x_{\rmax}.$

The layer linearization condition can be easily satisfied because the inputs to each neuron remain a constant. Therefore, by setting $\xi \neq 0$ to any nonzero constant, the above construction works for this case, i.e.,  the constructed $\vtheta^{\prime}_{\rdeep}\in \fT_S(\vtheta_{\rshal})$.
\end{itemize}
\item[(2)]  $\min\{m_q, m_{q+1}\} = m_{q}$.

Denote training data by $S = \{(\vx_i, \vy_i)\}_{i=1}^{n}$ and 
\begin{align*}
x_{\rmin} = \min_{i\in[n], j\in[m_{q}]}\bigl\{\bigl(\vf^{[q]}_{\vtheta_{\rshal}}(\vx_i)\bigr)_j\bigr\}, \\
x_{\rmax} = \max_{i\in[n], j\in[m_{q}]}\bigl\{\bigl(\vf^{[q]}_{\vtheta_{\rshal}}(\vx_i)\bigr)_j\bigr\},
\end{align*}
where $\bigl(\vf^{[q]}_{\vtheta_{\rshal}}(\vx_i)\bigr)_j$ is the $j$-th component of $\vf^{[q]}_{\vtheta_{\rshal}}(\vx_i)$.

Also, we can transform the input range $[x_{\rmin}, x_{\rmax}]$ into the affine subdomain $[a, b]$ of the activation function $\sigma$ through an affine transformation. To this end, we also further discuss in two cases:

\begin{itemize}
    \item[(i)] case 1: $x_{\rmin} \neq x_{\rmax}.$

Let $\xi = \dfrac{x_{\rup}-x_{\rlow}}{x_{\rmax}-x_{\rmin}}\in \sR$, $\vW^{\prime [\hat{q}]} = \xi\vI_d\in\sR^{m_{\hat{q}}\times m_q}$ and $\vb^{\prime[\hat{q}]}= (x_{\rlow}-\xi x_{\rmin})\boldsymbol{1}\in\sR^{m_{\hat{q}}}$.  And then we let $\vW^{\prime[q+1]} = \frac{1}{\lambda \xi} \vW^{[q+1]}\in\sR^{m_{q+1}\times m_{\hat{q}}}$, and $\boldsymbol{b}^{\prime[q+1]} = \boldsymbol{b}^{[q+1]}-\boldsymbol{W'}^{[q+1]}\operatorname{diag}(\boldsymbol{\lambda})\boldsymbol{b'}^{[\hat{q}]}-\boldsymbol{W'}^{[q+1]}\boldsymbol{\mu}\in\sR^{m_{q+1}}.$ Finally, we set $\vtheta^{\prime}_{\rdeep} = \big(\boldsymbol{W}^{[1]}, \boldsymbol{b}^{[1]}, \cdots, \boldsymbol{W}^{[q]}, \boldsymbol{b}^{[q]}, \boldsymbol{W'}^{[\hat{q}]}, \boldsymbol{b'}^{[\hat{q}]}, \boldsymbol{W'}^{[{q+1}]}, \boldsymbol{b'}^{[{q+1}]},$ $\cdots, \boldsymbol{W}^{[L]}, \boldsymbol{b}^{[L]}\big)$. We can verify that  $\vtheta^{\prime}_{\rdeep}\in \fT_S(\vtheta_{\rshal})$ similar to the previous case 1.
\item[(ii)] case 2: $x_{\rmin} = x_{\rmax}.$

By setting $\xi \neq 0$ to any nonzero constant, the above construction also works for this case, i.e.,  the constructed $\vtheta^{\prime}_{\rdeep}\in \fT_S(\vtheta_{\rshal})$.
\end{itemize}
\end{itemize}
Therefore, $\fT_{S}(\vtheta_{\rshal})$ is non-empty for any $\vtheta_{\rshal}$, i.e., one-layer lifting exists.
\end{proof}
\subsection{Output preserving and criticality preserving}
Now we prove the following lemma, of which Prop.~\ref{APP:prop-network-preserve} is a direct consequence.
\begin{oldlemma}[\textbf{computation of feature vectors, feature gradients and error vectors}]
\label{APP:lemma-gradients}
Given data $S$, consider an $\mathrm{NN}\bigl(\left\{m_{l}\right\}_{l=0}^{L}\bigr)$ and its one-layer deeper counterpart, $\mathrm{NN}^{\prime}\big(\{m_l^\prime\}$, $l\in \{0, 1, 2, \cdots, q, \hat{q}, q+1, \cdots, L\}\big)$. Let $\fT_S$ denote the one-layer lifting and $\vtheta_{\rshal}$
be any parameter of $\mathrm{NN}$. Then, for any lifted point
$\boldsymbol{\theta}'_{\rdeep} \in \fT_{S}\left(\boldsymbol{\theta}_{\rshal}\right)$, the following conditions hold: there exist $\vlambda, \vmu \in \mathbb{R}^{m^{\prime}_{\hat{q}}}$ such that for any $\vx\in S_{\vx}$,

(i) feature vectors in $\boldsymbol{F}_{\boldsymbol{\theta}'_{\rdeep}}: \boldsymbol{f}_{\boldsymbol{\theta}'_{\rdeep}}^{\left[l\right]}(\vx)=\boldsymbol{f}_{\boldsymbol{\theta}_{\rshal}}^{\left[l\right]}(\vx)$ for $l \in[L]$ and $ \boldsymbol{f}_{\boldsymbol{\theta}'_{\rdeep}}^{[\hat{q}]}(\vx) = \operatorname{diag}(\vlambda) (\boldsymbol{W}^{\prime[\hat{q}]}\boldsymbol{f}_{\vtheta_{\rshal}}^{[q]}(\vx) + \boldsymbol{b}^{\prime[\hat{q}]}) + \boldsymbol{\mu};$

(ii) feature gradients in $\boldsymbol{G}_{\boldsymbol{\theta}'_{\rdeep}}: \boldsymbol{g}_{\boldsymbol{\theta}'_{\rdeep}}^{\left[l\right]}(\vx)=\boldsymbol{g}_{\boldsymbol{\theta}_{\rshal}}^{\left[l\right]}(\vx)$ for $l \in[L]$ and $ \boldsymbol{g}_{\boldsymbol{\theta}'_{\rdeep}}^{[\hat{q}]}(\vx) = \boldsymbol{\lambda};$

(iii) error vectors in $\boldsymbol{Z}_{\boldsymbol{\theta}'_{\rdeep}}: \boldsymbol{z}_{\boldsymbol{\theta}'_{\rdeep}}^{\left[l\right]}(\vx)=\boldsymbol{z}_{\boldsymbol{\theta}_{\rshal}}^{\left[l\right]}(\vx)$ for $l \in [q-1]\cup[q+1: L]$ and $\boldsymbol{z}_{\boldsymbol{\theta}'_{\rdeep}}^{\left[\hat{q}\right]}(\vx) = \bigl(\boldsymbol{W}^{'[q+1]}\bigr)^{\top}\bigl(\boldsymbol{z}_{\boldsymbol{\theta}_{\rshal}}^{[q+1]}(\vx) \circ \boldsymbol{g}_{\boldsymbol{\theta}_{\rshal}}^{[q+1]}(\vx)\bigr),  \boldsymbol{z}_{\boldsymbol{\theta}'_{\rdeep}}^{\left[q\right]}(\vx) = \bigl(\boldsymbol{W}^{'[\hat{q}]}\bigr)^{\top}\bigl(\boldsymbol{z}_{\boldsymbol{\theta}'_{\rdeep}}^{[\hat{q}]}(\vx) \circ \boldsymbol{\lambda}\bigr).$
\end{oldlemma}

\begin{proof}
(i) By the construction of $\vtheta^{\prime}_{\rdeep}$, it is clear that $\vf_{\vtheta^{\prime}_{\rdeep}}^{[l]}(\vx)=\vf_{\vtheta_{\rshal}}^{[l]}(\vx)$ for any $l\in[q]$. And by the definition of one-layer lifting, layer linearization condition is satisfied, i.e., for any $j\in [m_{\hat{q}}]$, there exists an affine subdomain $(a_j, b_j)$ associated with $\lambda_j, \mu_j$ such that the $j$-th component $(\boldsymbol{W'}^{[\hat{q}]}\boldsymbol{f}_{\boldsymbol{\theta^{\prime}_{\rdeep}}}^{[q]}(\boldsymbol{x}) + \boldsymbol{b'}^{[\hat{q}]})_j \in (a_j, b_j)$ for any $\vx \in S_{\vx}.$ Therefore, there exist $\vlambda = [\lambda_1, \lambda_2, \cdots, \lambda_{m_{\hat{q}}}]^\top\in \mathbb{R}^{m^{\prime}_{\hat{q}}}, \vmu = [\mu_1, \mu_2, \cdots, \mu_{m_{\hat{q}}}]^\top\in\mathbb{R}^{m^{\prime}_{\hat{q}}}$ such that for any $\vx \in S_{\vx}$
\begin{equation*}
\begin{aligned}
\boldsymbol{f}_{\vtheta^{\prime}_{\rdeep}}^{[\hat{q}]}(\vx) 
& = \sigma\bigl(\boldsymbol{W}^{'[\hat{q}]}\boldsymbol{f}_{\vtheta^{\prime}_{\rdeep}}^{[q]}(\vx) + \boldsymbol{b}^{'[\hat{q}]}\bigr)\\
& = \boldsymbol{\lambda}\circ (\boldsymbol{W}^{\prime[\hat{q}]}\boldsymbol{f}_{\boldsymbol{\theta}^{\prime}_{\rdeep}}^{[q]}(\vx) + \boldsymbol{b}^{\prime[\hat{q}]}) + \boldsymbol{\mu}\\
& = \boldsymbol{\lambda}\circ (\boldsymbol{W}^{\prime[\hat{q}]}\boldsymbol{f}_{\boldsymbol{\theta}_{\rshal}}^{[q]}(\vx) + \boldsymbol{b}^{\prime[\hat{q}]}) + \boldsymbol{\mu}\\
& =\operatorname{diag}(\vlambda) (\boldsymbol{W}^{\prime[\hat{q}]}\boldsymbol{f}_{\vtheta_{\rshal}}^{[q]}(\vx) + \boldsymbol{b}^{\prime[\hat{q}]}) + \boldsymbol{\mu}.
\end{aligned}
\end{equation*}

By the forward propagation process and the output preserving condition of one-layer lifting, we have
\begin{equation*}
\begin{aligned}
\boldsymbol{f}_{\vtheta^{\prime}_{\rdeep}}^{[q+1]}(\vx) 
&= \sigma\bigl(\boldsymbol{W}^{'[q+1]}\boldsymbol{f}_{\vtheta^{\prime}_{\rdeep}}^{[\hat{q}]}(\vx)  + \boldsymbol{b}^{'[q+1]}\bigr)\\
& = \sigma\bigl(\boldsymbol{W}^{\prime[q+1]}\operatorname{diag}(\boldsymbol{\lambda})\boldsymbol{W}^{\prime[\hat{q}]}\boldsymbol{f}_{\vtheta^{\prime}_{\rdeep}}^{[q]}(\vx) + \boldsymbol{W}^{\prime[q+1]}\operatorname{diag}(\boldsymbol{\lambda})\boldsymbol{b}^{\prime[\hat{q}]}+\boldsymbol{W}^{\prime[q+1]}\boldsymbol{\mu} + \boldsymbol{b}^{\prime[q+1]} \bigr)\\
& = \sigma\bigl(\boldsymbol{W}^{[q+1]}\boldsymbol{f}_{\vtheta_{\rshal}}^{[q]}(\vx)  + \boldsymbol{b}^{[q+1]}\bigr)\\
& = \boldsymbol{f}_{\vtheta_{\rshal}}^{[q+1]}(\vx) .
\end{aligned}
\end{equation*}

And by recursion, we have $\boldsymbol{f}_{\vtheta^{\prime}_{\rdeep}}^{\left[l\right]}(\vx)  = \boldsymbol{f}_{\vtheta_{\rshal}}^{\left[l\right]}(\vx) $ for $l \in[q+1: L]$.

    (ii) By the continuity of the feature function, we know that for any $\vx \in S_{\vx}$ there exists at least a neighborhood of $\vx$ such that the layer linearization condition holds. Thus we have $\boldsymbol{g}_{\boldsymbol{\theta}'_{\rdeep}}^{[\hat{q}]}(\vx)  = \sigma^{(1)}\bigl(\boldsymbol{W}^{'[\hat{q}]}\boldsymbol{f}_{\vtheta^{\prime}_{\rdeep}}^{[q]}(\vx)  + \boldsymbol{b}^{'[\hat{q}]}\bigr) = \vlambda$.
    And the results for feature gradients $\vg_{\vtheta^{\prime}_{\rdeep}}^{[l]}(\vx)$, $l\in[L]$ can be recursively calculated in a similar way.
    
    (iii) 
    By the backpropagation and the above facts in (i), we have $\vz^{[L]}_{\vtheta^{\prime}_{\rdeep}}(\vx) = \nabla\ell(\vf^{[L]}_{\vtheta^{\prime}_{\rdeep}}(\vx),\vy)=\nabla\ell(\vf^{[L]}_{\vtheta_{\rshal}}(\vx) ,\vy)=\vz^{[L]}_{\vtheta_{\rshal}}(\vx)$. So for $l \in [q+1: L]$, it is clear that $\boldsymbol{z}_{\vtheta^{\prime}_{\rdeep}}^{\left[l\right]}(\vx) = \boldsymbol{z}_{\vtheta_{\rshal}}^{\left[l\right]}(\vx)$ and $\boldsymbol{z}_{\vtheta^{\prime}_{\rdeep}}^{\left[\hat{q}\right]}(\vx) = \bigl(\boldsymbol{W}^{'[q+1]}\bigr)^{\top}\bigl(\boldsymbol{z}_{\vtheta_{\rshal}}^{[q+1]}(\vx) \circ \boldsymbol{g}_{\vtheta_{\rshal}}^{[q+1]}(\vx)\bigr)$.
    
    Use the result in (ii), for $l = q$,
    \begin{equation*}
    \begin{aligned}
    \boldsymbol{z}_{\vtheta^{\prime}_{\rdeep}}^{\left[q\right]}(\vx) = \bigl(\boldsymbol{W}^{'[\hat{q}]}\bigr)^{\top}\bigl(\boldsymbol{z}_{\vtheta^{\prime}_{\rdeep}}^{[\hat{q}]}(\vx) \circ \boldsymbol{g}_{\vtheta_{\rdeep}}^{[\hat{q}]}(\vx)\bigr) = \bigl(\boldsymbol{W}^{'[\hat{q}]}\bigr)^{\top}\bigl(\boldsymbol{z}_{\vtheta^{\prime}_{\rdeep}}^{[\hat{q}]}(\vx) \circ \boldsymbol{\lambda}\bigr)
    \end{aligned}.
    \end{equation*}
    Since $\boldsymbol{f}_{\boldsymbol{\theta}'_{\rdeep}}^{\left[l\right]}(\vx)=\boldsymbol{f}_{\boldsymbol{\theta}_{\rshal}}^{\left[l\right]}(\vx)$ and $\boldsymbol{g}_{\boldsymbol{\theta}'_{\rdeep}}^{\left[l\right]}(\vx)=\boldsymbol{g}_{\boldsymbol{\theta}_{\rshal}}^{\left[l\right]}(\vx)$ for $l \in[L]$, we have $\boldsymbol{z}_{\vtheta^{\prime}_{\rdeep}}^{\left[l\right]}(\vx)=\boldsymbol{z}_{\vtheta_{\rshal}}^{\left[l\right]}(\vx)$ for $l\in[q-1]$.
\end{proof}

\begin{oldprop}[\textbf{network properties preserving}]
\label{APP:prop-network-preserve}
Given data $S$, consider an $\mathrm{NN}\bigl(\left\{m_{l}\right\}_{l=0}^{L}\bigr)$ and its one-layer deeper counterpart, $\mathrm{NN}^{\prime}\big(\{m_l^\prime\}$, $l\in \{0, 1, 2, \cdots, q, \hat{q}, q+1, \cdots, L\}\big)$. Let $\fT_S$ denote the one-layer lifting and $\vtheta_{\rshal}$
be any parameter of $\mathrm{NN}$. Then, for any lifted point
$\boldsymbol{\theta}'_{\rdeep} \in \fT_{S}\left(\boldsymbol{\theta}_{\rshal}\right)$, the following conditions hold:
    
    (i) outputs are preserved: $f_{\vtheta'_{\rdeep}}(\vx)=f_{\vtheta_{\rshal}}(\vx)$  for $\vx\in S_{\vx}$;
    
    (ii) empirical risk is preserved: $\RS(\vtheta'_{\rdeep})=\RS(\vtheta_{\rshal})$;
    
        (iii) the network representations are preserved for all layers:  
    $
    \operatorname{span}\bigl\{\bigl\{\bigl(\vf_{\vtheta_{\rdeep}^\prime}^{[\hat{q}]}(\vX)\bigr)_j\bigr\}_{j \in[m^{\prime}_{\hat{q}}]}\cup \{ \boldsymbol{1}\} \bigr\} = \operatorname{span}\bigl\{\bigl\{\bigl(\vf_{\vtheta_{\rshal}}^{[q]}(\vX)\bigr)_j\bigr\}_{j \in[m_{q}]}\cup \{\boldsymbol{1}\} \bigr\},    
    $
    and for the other index $l\in [L]$,
    $
    \operatorname{span}\bigl\{\bigl\{\bigl(\vf_{\vtheta_{\rdeep}^\prime}^{[l]}(\vX)\bigr)_j\bigr\}_{j \in[m^{\prime}_l]}\cup \bigl\{\boldsymbol{1}\bigr\} \bigr\}=\operatorname{span}\bigl\{\bigl\{\bigl(\vf_{\vtheta_{\rshal}}^{[l]}(\vX)\bigr)_j\bigr\}_{j \in[m_l]}\cup \bigl\{\boldsymbol{1}\bigr\} \bigr\},    
    $ where $\vf_{\vtheta}^{[l]}(\vX)=\bigl[\vf_{\vtheta}^{[l]}(\vx_1), \vf_{\vtheta}^{[l]}(\vx_2), \cdots, \vf_{\vtheta}^{[l]}(\vx_n)\bigr]^\top \in \sR^{n\times m^{\prime}_{\hat{q}}}$ and $\boldsymbol{1}\in\sR^{n}$ is the all-ones vector.
\end{oldprop}

\begin{proof}
The properties (i) and (ii) are direct consequences of Lem.~\ref{APP:lemma-gradients}.

(iii) It is clear that for $l\in[L]$
\begin{align*}
    \operatorname{span}\bigl\{\bigl\{\bigl(\vf_{\vtheta_{\rdeep}^\prime}^{[\hat{q}]}(\vX)\bigr)_j\bigr\}_{j \in[m^{\prime}_{\hat{q}}]}\cup \{ \boldsymbol{1}\} \bigr\} = \operatorname{span}\bigl\{\bigl\{\bigl(\vf_{\vtheta_{\rshal}}^{[q]}(\vX)\bigr)_j\bigr\}_{j \in[m_{q}]}\cup \{\boldsymbol{1}\} \bigr\}.
\end{align*}

Since for any $\vx \in S_{\vx}$,
\begin{equation*}
\boldsymbol{f}_{\vtheta^{\prime}_{\rdeep}}^{[\hat{q}]}(\vx) = \boldsymbol{\lambda}\circ (\boldsymbol{W}^{\prime[\hat{q}]}\boldsymbol{f}_{\boldsymbol{\theta}_{\rshal}}^{[q]}(\vx) + \boldsymbol{b}^{\prime[\hat{q}]}) + \boldsymbol{\mu},
\end{equation*}
we have
\begin{align*}
    \operatorname{span}\bigl\{\bigl\{\bigl(\vf_{\vtheta_{\rdeep}^\prime}^{[\hat{q}]}(\vX)\bigr)_j\bigr\}_{j \in[m^{\prime}_{\hat{q}}]}\cup \{ \boldsymbol{1}\} \bigr\} = \operatorname{span}\bigl\{\bigl\{\bigl(\vf_{\vtheta_{\rshal}}^{[q]}(\vX)\bigr)_j\bigr\}_{j \in[m_{q}]}\cup \{\boldsymbol{1}\} \bigr\}.
\end{align*}
Thus we finish the proof.
\end{proof}

\begin{oldprop}[\textbf{criticality preserving}]
\label{APP:criticality-preserve}
Given data $S$, consider an $\mathrm{NN}\bigl(\left\{m_{l}\right\}_{l=0}^{L}\bigr)$ and its one-layer deeper counterpart, $\mathrm{NN}^{\prime}\big(\{m_l^\prime\}$, $l\in \{0, 1, 2, \cdots, q, \hat{q}, q+1, \cdots, L\}\big)$. Let $\fT_S$ denote the one-layer lifting and $\vtheta_{\rshal}$
be any parameter of $\mathrm{NN}$. If $\vtheta_{\rshal}$ of $\mathrm{NN}$ satisfies $\nabla_{\boldsymbol{\theta}} R_{S}\left(\boldsymbol{\theta}_{\rshal}\right)=\boldsymbol{0}$, then $\nabla_{\boldsymbol{\theta}^{\prime}} R_{S}\bigl(\boldsymbol{\theta}'_{\rdeep}\bigr)=\boldsymbol{0}$ for any lifted point $\boldsymbol{\theta}'_{\rdeep} \in \fT_{S}\left(\boldsymbol{\theta}_{\rshal}\right)$.
\end{oldprop}
\begin{proof}
Gradient of loss with respect to network parameters of each layer can be computed from $\boldsymbol{F}$, $\boldsymbol{G}$, and $\boldsymbol{Z}$ as follows
\begin{equation*}
\begin{aligned}
\nabla_{\boldsymbol{W}^{\left[l\right]}} R_{S}(\boldsymbol{\theta}) &=\nabla_{\boldsymbol{W}^{\left[l\right]}} \mathbb{E}_{S} \ell\left(\boldsymbol{f}_{\boldsymbol{\theta}}(\boldsymbol{x}), \boldsymbol{y}\right) =\mathbb{E}_{S}\bigl(\bigl(\boldsymbol{z}_{\boldsymbol{\theta}}^{\left[l\right]}(\vx) \circ \boldsymbol{g}_{\boldsymbol{\theta}}^{\left[l\right]}(\vx)\bigr)\bigl(\boldsymbol{f}_{\boldsymbol{\theta}}^{\left[l-1\right]}(\vx)\bigr)^{\top}\bigr), \\
\nabla_{\boldsymbol{b}^{\left[l\right]}} R_{S}(\boldsymbol{\theta}) &=\nabla_{\boldsymbol{b}^{[l]}} \mathbb{E}_{S} \ell\left(\boldsymbol{f}_{\boldsymbol{\theta}}(\boldsymbol{x}), \boldsymbol{y}\right)=\mathbb{E}_{S}\left(\boldsymbol{z}_{\boldsymbol{\theta}}^{\left[l\right]}(\vx) \circ \boldsymbol{g}_{\boldsymbol{\theta}}^{\left[l\right]}(\vx)\right).
\end{aligned}   
\end{equation*}
Then we have for $l \neq q, \hat{q}, q+1$,
\begin{equation*}
  \nabla_{\boldsymbol{W}'^{\left[l\right]}} R_{S}\left(\boldsymbol{\theta}'_{\rdeep}\right)=\nabla_{\boldsymbol{W}^{\left[l\right]}} R_{S}\left(\boldsymbol{\theta}'_{\rdeep}\right) = \nabla_{\boldsymbol{W}^{\left[l\right]}} R_{S}\left(\boldsymbol{\theta}_{\rshal}\right)=\boldsymbol{0} 
\end{equation*}

and
\begin{equation*}
 \nabla_{\boldsymbol{b}'^{\left[l\right]}} R_{S}\left(\boldsymbol{\theta}'_{\rdeep}\right)= \nabla_{\boldsymbol{b}^{\left[l\right]}} R_{S}\left(\boldsymbol{\theta}'_{\rdeep}\right) = \nabla_{\boldsymbol{b}^{\left[l\right]}} R_{S}\left(\boldsymbol{\theta}_{\rshal}\right)=\boldsymbol{0}.   
\end{equation*}
Also, for $l = q + 1,$
\begin{small}
\begin{align*}
\begin{aligned}
& \nabla_{\boldsymbol{W}'^{\left[q+1\right]}} R_{S}\left(\boldsymbol{\theta}'_{\rdeep}\right) = \mathbb{E}_{S}\left(\left(\boldsymbol{z}_{\boldsymbol{\theta}'_{\rdeep}}^{\left[q+1\right]}(\vx) \circ \boldsymbol{g}_{\boldsymbol{\theta}'_{\rdeep}}^{\left[q+1\right]}(\vx)\right)\left(\boldsymbol{f}_{\boldsymbol{\theta}'_{\rdeep}}^{\left[\hat{q}\right]}(\vx)\right)^{\top}\right)\\
& = \mathbb{E}_{S}\left(\left(\boldsymbol{z}_{\boldsymbol{\theta}_{\rshal}}^{\left[q+1\right]}(\vx) \circ \boldsymbol{g}_{\boldsymbol{\theta}_{\rshal}}^{\left[q+1\right]}(\vx)\right)\left[\sigma\left(\boldsymbol{W'}^{[\hat{q}]}\boldsymbol{f}_{\boldsymbol{\theta}'_{\rdeep}}^{[q]}(\vx) + \boldsymbol{b'}^{[\hat{q}]}\right) \right]^{\top}\right)\\
& = \mathbb{E}_{S}\left(\left(\boldsymbol{z}_{\boldsymbol{\theta}_{\rshal}}^{\left[q+1\right]}(\vx) \circ \boldsymbol{g}_{\boldsymbol{\theta}_{\rshal}}^{\left[q+1\right]}(\vx)\right)\left(\boldsymbol{\lambda} \circ (\boldsymbol{W'}^{[\hat{q}]}\boldsymbol{f}_{\boldsymbol{\theta}_{\rshal}}^{[q]}(\vx) + \boldsymbol{b'}^{[\hat{q}]}) + \boldsymbol{\mu}\right)^{\top} \right)\\
& = \mathbb{E}_{S}\left(\left(\boldsymbol{z}_{\boldsymbol{\theta}_{\rshal}}^{\left[q+1\right]}(\vx) \circ \boldsymbol{g}_{\boldsymbol{\theta}_{\rshal}}^{\left[q+1\right]}(\vx)\right)\left(\operatorname{diag}(\boldsymbol{\lambda}) (\boldsymbol{W'}^{[\hat{q}]}\boldsymbol{f}_{\boldsymbol{\theta}_{\rshal}}^{[q]}(\vx) + \boldsymbol{b'}^{[\hat{q}]}) + \boldsymbol{\mu}\right)^{\top} \right)\\
& = \mathbb{E}_{S}\left(\left(\boldsymbol{z}_{\boldsymbol{\theta}_{\rshal}}^{\left[q+1\right]}(\vx) \circ \boldsymbol{g}_{\boldsymbol{\theta}_{\rshal}}^{\left[q+1\right]}(\vx)\right)\left(\boldsymbol{f}_{\boldsymbol{\theta}_{\rshal}}^{[q]}(\vx)\right)^{\top}\left(\operatorname{diag}(\boldsymbol{\lambda}) (\boldsymbol{W'}^{[\hat{q}]}\right)^{\top}\right)\\
& + \mathbb{E}_{S}\left(\left(\boldsymbol{z}_{\boldsymbol{\theta}_{\rshal}}^{\left[q+1\right]}(\vx) \circ \boldsymbol{g}_{\boldsymbol{\theta}_{\rshal}}^{\left[q+1\right]}(\vx)\right)\left( \operatorname{diag}(\boldsymbol{\lambda})\boldsymbol{b'}^{[\hat{q}]} + \boldsymbol{\mu}\right)^{\top}\right)\\
& = \boldsymbol{0},
\end{aligned}
\end{align*}
\end{small}
and

\begin{align*}
\nabla_{\boldsymbol{b}^{'\left[q+1\right]}} R_{S}(\boldsymbol{\theta}'_{\rdeep})=\mathbb{E}_{S}\left(\boldsymbol{z}_{\boldsymbol{\theta}'_{\rdeep}}^{\left[q+1\right]}(\vx) \circ \boldsymbol{g}_{\boldsymbol{\theta}'_{\rdeep}}^{\left[q+1\right]}(\vx)\right) = \mathbb{E}_{S}\left(\boldsymbol{z}_{\boldsymbol{\theta}_{\rshal}}^{\left[q+1\right]}(\vx) \circ \boldsymbol{g}_{\boldsymbol{\theta}_{\rshal}}^{\left[q+1\right]}(\vx)\right) = \boldsymbol{0}.
\end{align*}

For $l = \hat{q},$
\begin{small}
\begin{equation*}
\begin{aligned}
& \nabla_{\boldsymbol{W}'^{\left[\hat{q}\right]}} R_{S}\big(\boldsymbol{\theta}'_{\rdeep}\big) = \mathbb{E}_{S}\big(\big(\boldsymbol{z}_{\boldsymbol{\theta}'_{\rdeep}}^{\left[\hat{q}\right]}(\vx) \circ \boldsymbol{g}_{\boldsymbol{\theta}'_{\rdeep}}^{\left[\hat{q}\right]}(\vx)\big)\big(\boldsymbol{f}_{\boldsymbol{\theta}'_{\rdeep}}^{\left[q\right]}(\vx)\big)^{\top}\big)\\
& = \mathbb{E}_{S}\big(\big(\operatorname{diag}(\boldsymbol{\lambda})\boldsymbol{z}_{\boldsymbol{\theta}'_{\rdeep}}^{\left[\hat{q}\right]}(\vx)\big)\big(\boldsymbol{f}_{\boldsymbol{\theta}_{\rshal}}^{\left[q\right]}(\vx)\big)^{\top}\big)\\
& = \mathbb{E}_{S}\big(\big(\operatorname{diag}(\boldsymbol{\lambda})(\vW^{'[q+1]})^{\top}\big(\boldsymbol{z}_{\boldsymbol{\theta}'_{\rdeep}}^{\left[q+1\right]}(\vx) \circ \boldsymbol{g}_{\boldsymbol{\theta}'_{\rdeep}}^{\left[q+1\right]}(\vx)\big)\big)\big(\boldsymbol{f}_{\boldsymbol{\theta}_{\rshal}}^{\left[q\right]}(\vx)\big)^{\top}\big)\\
& = \operatorname{diag}(\boldsymbol{\lambda})(\vW^{'[q+1]})^{\top}\mathbb{E}_{S}\big(\big(\boldsymbol{z}_{\boldsymbol{\theta}'_{\rdeep}}^{\left[q+1\right]}(\vx) \circ \boldsymbol{g}_{\boldsymbol{\theta}'_{\rdeep}}^{\left[q+1\right]}(\vx)\big)\big(\boldsymbol{f}_{\boldsymbol{\theta}_{\rshal}}^{\left[q\right]}(\vx)\big)^{\top}\big)\\
& = \boldsymbol{0},
\end{aligned}
\end{equation*}
\end{small}

and
\begin{equation*}
\begin{aligned}
& \nabla_{\boldsymbol{b}^{'\left[\hat{q}\right]}} R_{S}(\boldsymbol{\theta}'_{\rdeep})=\mathbb{E}_{S}\left(\boldsymbol{z}_{\boldsymbol{\theta}'_{\rdeep}}^{\left[\hat{q}\right]}(\vx) \circ \boldsymbol{g}_{\boldsymbol{\theta}'_{\rdeep}}^{\left[\hat{q}\right]}(\vx)\right)\\
& = \operatorname{diag}(\boldsymbol{\lambda})(\vW^{'[q+1]})^{\top}\mathbb{E}_{S}\left(\boldsymbol{z}_{\boldsymbol{\theta}'_{\rdeep}}^{\left[q+1\right]}(\vx) \circ \boldsymbol{g}_{\boldsymbol{\theta}'_{\rdeep}}^{\left[q+1\right]}(\vx)\right)\\
& = \operatorname{diag}(\boldsymbol{\lambda})(\vW^{'[q+1]})^{\top}\mathbb{E}_{S}\left(\boldsymbol{z}_{\boldsymbol{\theta}_{\rshal}}^{\left[q+1\right]}(\vx) \circ \boldsymbol{g}_{\boldsymbol{\theta}_{\rshal}}^{\left[q+1\right]}(\vx)\right)\\
& = \boldsymbol{0}.   
\end{aligned}
\end{equation*}

For $l = q,$
\begin{small}
\begin{equation*}
\begin{aligned}
& \nabla_{\boldsymbol{W}'^{\left[q\right]}} R_{S}\bigl(\boldsymbol{\theta}'_{\rdeep}\bigr) = \mathbb{E}_{S}\bigl(\bigl(\boldsymbol{z}_{\boldsymbol{\theta}'_{\rdeep}}^{\left[q\right]}(\vx) \circ \boldsymbol{g}_{\boldsymbol{\theta}'_{\rdeep}}^{\left[q\right]}(\vx)\bigr)\bigl(\boldsymbol{f}_{\boldsymbol{\theta}'_{\rdeep}}^{\left[q-1\right]}(\vx)\bigr)^{\top}\bigr)\\
& = \mathbb{E}_{S}\bigl(\bigl((\vW^{'[\hat{q}]})^{\top}\bigl(\boldsymbol{z}_{\boldsymbol{\theta}'_{\rdeep}}^{\left[\hat{q}\right]}(\vx) \circ \boldsymbol{g}_{\boldsymbol{\theta}'_{\rdeep}}^{\left[\hat{q}\right]}(\vx)\bigr) \circ \boldsymbol{g}_{\boldsymbol{\theta}'_{\rdeep}}^{\left[q\right]}(\vx)\bigr)\bigl(\boldsymbol{f}_{\boldsymbol{\theta}'_{\rdeep}}^{\left[q-1\right]}(\vx)\bigr)^{\top}\bigr)\\
& = \operatorname{diag}(\boldsymbol{\lambda})(\vW^{'[\hat{q}]})^{\top}(\vW^{'[q+1]})^{\top}\mathbb{E}_{S}\bigl(\bigl(\bigl(\boldsymbol{z}_{\boldsymbol{\theta}'_{\rdeep}}^{\left[q+1\right]}(\vx) \circ \boldsymbol{g}_{\boldsymbol{\theta}'_{\rdeep}}^{\left[q+1\right]}(\vx)\bigr) \circ \boldsymbol{g}_{\boldsymbol{\theta}'_{\rdeep}}^{\left[q\right]}(\vx)\bigr)\bigl(\boldsymbol{f}_{\boldsymbol{\theta}'_{\rdeep}}^{\left[q-1\right]}(\vx)\bigr)^{\top}\bigr)\\
& = (\vW^{[q+1]})^{\top}\mathbb{E}_{S}\bigl(\bigl(\bigl(\boldsymbol{z}_{\boldsymbol{\theta}'_{\rdeep}}^{\left[q+1\right]}(\vx) \circ \boldsymbol{g}_{\boldsymbol{\theta}'_{\rdeep}}^{\left[q+1\right]}(\vx)\bigr) \\ & \quad \circ \boldsymbol{g}_{\boldsymbol{\theta}'_{\rdeep}}^{\left[q\right]}(\vx)\bigr)\bigl(\boldsymbol{f}_{\boldsymbol{\theta}'_{\rdeep}}^{\left[q-1\right]}(\vx)\bigr)^{\top}\bigr)\\
& = (\vW^{[q+1]})^{\top}\mathbb{E}_{S}\bigl(\bigl(\bigl(\boldsymbol{z}_{\boldsymbol{\theta}_{\rshal}}^{\left[q+1\right]}(\vx) \circ \boldsymbol{g}_{\boldsymbol{\theta}_{\rshal}}^{\left[q+1\right]}(\vx)\bigr) \\ & \quad \circ \boldsymbol{g}_{\boldsymbol{\theta}_{\rshal}}^{\left[q\right]}(\vx)\bigr)\bigl(\boldsymbol{f}_{\boldsymbol{\theta}_{\rshal}}^{\left[q-1\right]}(\vx)\bigr)^{\top}\bigr)\\
& = \mathbb{E}_{S}\bigl(\bigl(\boldsymbol{z}_{\boldsymbol{\theta}_{\rshal}}^{\left[q\right]}(\vx) \circ \boldsymbol{g}_{\boldsymbol{\theta}_{\rshal}}^{\left[q\right]}(\vx)\bigr)\bigl(\boldsymbol{f}_{\boldsymbol{\theta}_{\rshal}}^{\left[q-1\right]}(\vx)\bigr)^{\top}\bigr)\\
& = \boldsymbol{0},
\end{aligned}
\end{equation*}
\end{small}

and
\begin{small}
\begin{equation*}
\begin{aligned}
& \nabla_{\boldsymbol{b}^{'\left[q\right]}} R_{S}(\boldsymbol{\theta}'_{\rdeep})=\mathbb{E}_{S}\left(\boldsymbol{z}_{\boldsymbol{\theta}'_{\rdeep}}^{\left[q\right]}(\vx) \circ \boldsymbol{g}_{\boldsymbol{\theta}'_{\rdeep}}^{\left[q\right]}(\vx)\right)\\
& =\mathbb{E}_{S}\left((\vW^{'[\hat{q}]})^{\top}\left(\boldsymbol{z}_{\boldsymbol{\theta}'_{\rdeep}}^{\left[\hat{q}\right]}(\vx) \circ \boldsymbol{g}_{\boldsymbol{\theta}'_{\rdeep}}^{\left[\hat{q}\right]}(\vx)\right) \circ \boldsymbol{g}_{\boldsymbol{\theta}'_{\rdeep}}^{\left[q\right]}(\vx)\right)\\
& = \operatorname{diag}(\boldsymbol{\lambda})(\vW^{'[\hat{q}]})^{\top}(\vW^{'[q+1]})^{\top}\mathbb{E}_{S}\big(\big(\boldsymbol{z}_{\boldsymbol{\theta}'_{\rdeep}}^{\left[q+1\right]}(\vx) \circ \boldsymbol{g}_{\boldsymbol{\theta}'_{\rdeep}}^{\left[q+1\right]}(\vx)\big) \circ \boldsymbol{g}_{\boldsymbol{\theta}'_{\rdeep}}^{\left[q\right]}(\vx)\big)\\
& = (\vW^{[q+1]})^{\top}\mathbb{E}_{S}\left(\left(\boldsymbol{z}_{\boldsymbol{\theta}'_{\rdeep}}^{\left[q+1\right]}(\vx) \circ \boldsymbol{g}_{\boldsymbol{\theta}'_{\rdeep}}^{\left[q+1\right]}(\vx)\right) \circ \boldsymbol{g}_{\boldsymbol{\theta}'_{\rdeep}}^{\left[q\right]}(\vx)\right)\\
& = (\vW^{[q+1]})^{\top}\mathbb{E}_{S}\left(\left(\boldsymbol{z}_{\boldsymbol{\theta}_{\rshal}}^{\left[q+1\right]}(\vx) \circ \boldsymbol{g}_{\boldsymbol{\theta}_{\rshal}}^{\left[q+1\right]}(\vx)\right) \circ \boldsymbol{g}_{\boldsymbol{\theta}_{\rshal}}^{\left[q\right]}(\vx)\right)\\
& = \mathbb{E}_{S}\left(\boldsymbol{z}_{\boldsymbol{\theta}_{\rshal}}^{\left[q\right]}(\vx) \circ \boldsymbol{g}_{\boldsymbol{\theta}_{\rshal}}^{\left[q\right]}(\vx)\right)\\
& = \boldsymbol{0}.   
\end{aligned}
\end{equation*}
\end{small}

Collecting all the above relations, we obtain that $\nabla_{\boldsymbol{\theta^{\prime}}} R_S(\boldsymbol{\theta}'_{\rdeep}) = \boldsymbol{0}$.
\end{proof}

\subsection{Embedding Principle in Depth}
\begin{oldtheorem}[\textbf{embedding principle in depth}]
\label{APP:thm:embedding-principle}
Given data $S$ and an $\mathrm{NN'}\bigl(\left\{m'_{l}\right\}_{l=0}^{L'}\bigr)$, for any parameter $\vtheta_{\rc}$ of any  shallower $\mathrm{NN}\bigl(\left\{m_{l}\right\}_{l=0}^{L}\bigr)$ satisfying $\nabla_{\boldsymbol{\theta}} R_{S}\left(\boldsymbol{\theta}_{\rc}\right)=\boldsymbol{0}$, there exists parameter $\vtheta'_{\rc}$ in the loss landscape of  $\mathrm{NN'}\bigl(\left\{m'_{l}\right\}_{l=0}^{L'}\bigr)$ satisfying the following conditions:

\begin{itemize}
    \item[(i)] $\vf_{\vtheta'_{\rc}}(\vx)=\vf_{\vtheta_{\rc}}(\vx)$ for $\vx\in S_{\vx}$;
    \item[(ii)] $\nabla_{\boldsymbol{\theta}'} R_{S}\left(\boldsymbol{\theta'}_{\rc}\right)=\boldsymbol{0}$.
\end{itemize}
\end{oldtheorem}

\begin{proof}
We prove this theorem by construction using the critical liftings. Let $J = L - L^{\prime}$. The $J$-layer lifting $\fT_S$ is the $J$-step composition of one-layer liftings, say $\fT_S = \fT^J_S\cdots\fT^2_S\fT^1_S$. From Lem.~\ref{APP:existence}, we know one-layer lifting always exists, which leads to the existence of $J$-layer lifting $\fT_S$, i.e., $\fT_S(\vtheta_{c})\neq\emptyset$ for any $\vtheta_{c}$. Now we prove by induction that $J$-layer lifting $\fT_S$ satisfies the properties of output preserving and criticality preserving.

For $J=1$, Prop.~\ref{APP:prop-network-preserve} and Prop.~\ref{APP:criticality-preserve} show that the one-layer lifting satisfies the properties of output preserving and criticality preserving.

Assume that the $(J-1)$-layer lifting satisfies the properties of output preserving and criticality preserving, we want to show that so does the $J$-layer lifting.

From the induction hypothesis, we only need to show that if given two critical liftings $\fT^1_S$ and $\fT^2_S$, then $\fT^2_S\fT^1_S$ also satisfies the properties of output preserving and criticality preserving.
\begin{itemize}
    \item[(i)] $\fT_S^{2} \fT_S^{1}$ satisfies the property of output preserving:

Since $\fT_S^{1}$ satisfies the property of output preserving, then for any $\vx\in S_{\vx}$ and $\vtheta^{\prime}\in\fT_S^1(\vtheta), \boldsymbol{f}_{\vtheta^{\prime}}(\vx)=\boldsymbol{f}_{\boldsymbol{\theta}}(\vx)$. Similarly for $\fT_S^{2}$, we have for any $\vtheta''\in\fT_S^2\fT_S^1(\vtheta)$,  $\boldsymbol{f}_{\vtheta^{''}}(\vx)=\boldsymbol{f}_{\vtheta'}(\vx)$, hence $\boldsymbol{f}_{\vtheta''}(\vx)=\boldsymbol{f}_{\vtheta}(\vx)$, for any $\vx \in S_{\vx}$.
\item[(ii)] $\fT_{2} \fT_{1}$ satisfies the property of criticality preserving:

Since $\vtheta$ is a critical point of $R_{S}(\vtheta)$, for any $\vtheta'\in \fT_S^1(\vtheta)$, $\vtheta'$ is also a critical point of $R_{S}(\vtheta)$. Similarly, for any $\vtheta^{''}\in\fT_S^2(\vtheta')$, $\vtheta''$ is also a critical point of $R_{S}(\vtheta)$, hence for any $\vtheta''\in\fT_S^2\fT_S^1(\vtheta)$, $\vtheta''$ is also a critical point of $R_{S}(\vtheta)$.
\end{itemize}
Therefore, for any $\vtheta'_c\in \fT_S(\vtheta_{c})$, conditions (i) and (ii) are satisfied.
\end{proof}

\subsection{Data dependency}
\begin{oldprop}[\textbf{data dependency of lifting}]
\label{APP:thm:data-depen}
Given data $S$ and $S'$, consider an $\mathrm{NN}\bigl(\left\{m_{l}\right\}_{l=0}^{L}\bigr)$ and its deeper counterpart, $\mathrm{NN}^{\prime}\big(\{m_l^\prime\}_{l=0}^{L'}$. Let $\fT_S$ and $\fT_{S'}$ denote the respective critical liftings and $\vtheta_{\rshal}$
be any parameter of $\mathrm{NN}$. If data $S' \subseteq S$, then $\fT_{S}(\vtheta_{\rshal})\subseteq\fT_{S'}(\vtheta_{\rshal})$.
\end{oldprop}

\begin{proof}
We first assume $\fT_S$ is a one-layer lifting. If $S^\prime \subseteq S$, i.e., dataset $S'$ is a subset of dataset $S$, then we have $S^{\prime}_{\vx}\subseteq S_{\vx}$.
For any $\vtheta'_{\rdeep}\in \fT_S(\vtheta_{\rshal})$ , local-in-layer condition is satisfied regardless of input data. 
Regarding the data-dependent layer linearization condition, we have that for any $j\in [m_{\hat{q}}]$, there exists an affine subdomain $(a_j, b_j)$ associated with $\lambda_j, \mu_j$ such that the $j$-th component $(\boldsymbol{W'}^{[\hat{q}]}\boldsymbol{f}_{\boldsymbol{\theta}_{\rdeep}}^{[q]}(\boldsymbol{x}) + \boldsymbol{b'}^{[\hat{q}]})_j \in (a_j, b_j)$ for any $\vx \in S_{\vx}$, where $\mW'^{[\hat{q}]}$ and $\vb'^{[\hat{q}]}$ are weight and bias of $\vtheta'_{\rdeep}$ at layer $\hat{q}$. Since $S_{\vx}^{\prime}\subseteq S_{\vx}$, for any $\vx \in S^{\prime}_{\vx}$, naturally, $(\boldsymbol{W'}^{[\hat{q}]}\boldsymbol{f}_{\boldsymbol{\theta}_{\rdeep}}^{[q]}(\boldsymbol{x}) + \boldsymbol{b'}^{[\hat{q}]})_j \in (a_j, b_j)$, i.e., $\vtheta'_{\rdeep}$ satisfies layer linearization condition for $S'$ with the same $\vlambda$ and $\vmu$ as for $S$. Therefore, $\vtheta'_{\rdeep}$ also satisfies the output preserving condition for $S'$.
Then, we have $\vtheta_{\rdeep}\in \fT_{S^{\prime}}(\vtheta_{\rshal})$, which leads to $\fT_{S}(\vtheta_{\rshal})\subseteq\fT_{S^{\prime}}(\vtheta_{\rshal})$.

If $\fT_S$ is a composition of critical liftings that satisfy this corollary, say $\fT_S=\fT^2_S\fT^1_S$. We have for any  $\vtheta'_{\rdeep}\in \fT_S(\vtheta_{\rshal})$, there exists $\vtheta_{\mathrm{mid}}\in\fT^1_S(\vtheta_{\rshal})$ such that $\vtheta'_{\rdeep}\in \fT^2_S(\vtheta_{\mathrm{mid}})$. Then $\vtheta'_{\rdeep}\in \fT^2_{S'}(\vtheta_{\mathrm{mid}})$ and $\vtheta_{\mathrm{mid}}\in\fT^1_{S'}(\vtheta_{\rshal})$. Therefore, $\vtheta_{\rdeep}\in\fT^2_{S'}\fT^1_{S'}(\vtheta_{\rshal})=\fT_{S^{\prime}}(\vtheta_{\rshal})$, which leads to $\fT_{S}(\vtheta_{\rshal})\subseteq\fT_{S^{\prime}}(\vtheta_{\rshal})$.
\end{proof}

\begin{oldprop}\label{APP:prop-generalization}
Given data $S$ and an $\mathrm{NN'}\bigl(\left\{m'_{l}\right\}_{l=0}^{L'}\bigr)$, for any parameter $\vtheta_{\rc}$ of any  shallower $\mathrm{NN}\bigl(\left\{m_{l}\right\}_{l=0}^{L}\bigr)$, there exists parameter $\vtheta'_{\rc}$ in the loss landscape of  $\mathrm{NN'}\bigl(\left\{m'_{l}\right\}_{l=0}^{L'}\bigr)$ satisfying that: for any $\vx_i\in S_{\vx}$, there exists a neighbourhood $N(\vx_i)$ of $\vx_i$ such that $\vf_{\vtheta'_{\rc}}(\vx)=\vf_{\vtheta_{\rc}}(\vx)$ for any $\vx\in N(\vx_i)$.
\end{oldprop}

\begin{proof}
We only prove this result for one-layer lifting and similar to Thm.~\ref{APP:thm:embedding-principle}, the result of multi-layer lifting can be easily obtained by induction.
Let $\fT_S$ be any one-layer lifting and $\vtheta'_{\rc}\in \fT_S(\vtheta_{\rc})$.  By the definition of one-layer lifting, layer linearization condition is satisfied, i.e., for any $j\in [m_{\hat{q}}]$, there exists an affine subdomain $(a_j, b_j)$ associated with $\lambda_j, \mu_j$ such that the $j$-th component $(\boldsymbol{W'}^{[\hat{q}]}\boldsymbol{f}_{\boldsymbol{\theta^{\prime}_{\rc}}}^{[q]}(\boldsymbol{x}_i) + \boldsymbol{b'}^{[\hat{q}]})_j \in (a_j, b_j)$ for any $\vx_i \in S_{\vx}.$ Let $\vg(\vx) = \boldsymbol{W'}^{[\hat{q}]}\boldsymbol{f}_{\boldsymbol{\theta^{\prime}_{\rc}}}^{[q]}(\boldsymbol{x}) + \boldsymbol{b'}^{[\hat{q}]}$, and 
    $$\varepsilon = \min{ \big\{\vg(\vx_i)_j - a_j}, b_i - \vg(\vx_i)_j\big\}.
$$
By the continuity of the function $\vg(\vx)$, there exists a $\delta$ neighborhood $N_\delta(\vx_i)$ such that $|\vg(\vx_i)_j - \vg(\vx)_j| < \varepsilon$ for any $\vx\in N_\delta(\vx_i)$, which implies that $\vg(\vx)_j\in(a_j, b_j)$. Therefore, the layer linearization condition indeed holds not only for each training input but also at least a neighbourhood of each training input.

Similar to Lem.~\ref{APP:lemma-gradients}, by recursive we can get the NN output function is actually preserved over a broader area of input space including at least a neighbourhood of each training input. Hence if the training dataset is sufficiently large and representative, then our lifting operator effectively preserves the generalization performance. 
\end{proof}

\begin{oldprop}[\textbf{positive and negative index of inertia preserving}]\label{app:inertia-preserving}
Given data $S$, consider an $\mathrm{NN}\bigl(\left\{m_{l}\right\}_{l=0}^{L}\bigr)$, and its deeper counterpart $\mathrm{NN'}\bigl(\left\{m'_{l}\right\}_{l=0}^{L'}\bigr)$. Let $\fT_S$ denote the corresponding critical lifting and $\vtheta_{\rshal}$
be a critical point of $\mathrm{NN}$. 
For any critical embedding $\fE: \sR^M \to \sR^{M'}$ resulting from $\fT_S$ (i.e., $\mathcal{E}$ is a differentiable point-to-point critical mapping with full column rank Jacobian $\boldsymbol{J}_{\mathcal{E}(\boldsymbol{\theta}_{\text{shal}})}$). Denote $\boldsymbol{\theta}_{\text{deep}}:=\mathcal{E}(\boldsymbol{\theta}_{\text{shal}})$. Then the number of positive and negative eigenvalues of the Hessian matrix $\boldsymbol{H}_S(\boldsymbol{\theta}_{\text{deep}})$ equals the counterparts of $\boldsymbol{H}_S(\boldsymbol{\theta}_{\text{shal}})$.
\end{oldprop}
\begin{proof}
On one hand, because $\fE$ is a critical embedding resulting from critical lifting $\fT_S$, by the output preserving property, the loss value is preserved:
\begin{equation*}
R_S(\vtheta_{\rshal}) = R_S(\fE(\vtheta_{\rshal})) = R_S(\vtheta_{\rdeep}).
\end{equation*}
Given that $\boldsymbol{\theta}_{\text{shal}}$ is a critical point, we have:
\begin{equation*}
\boldsymbol{H}_S(\vtheta_{\rshal}) = \vJ_{\fE(\vtheta_{\rshal})}^\top \boldsymbol{H}_S(\vtheta_{\rdeep})\vJ_{\fE(\vtheta_{\rshal})}.
\end{equation*}
If $\boldsymbol{H}_S\left(\boldsymbol{\theta}_{\rshal}\right)$ has $k_1$ negative eigenvalues $\left\{\lambda_j^{\text {neg }}\right\}_{j=1}^{k_1}$ with associated orthonormal eigenvectors $\left\{\boldsymbol{e}_j^{\text {neg }}\right\}_{j=1}^{k_1}$, then $\left\{\vJ_{\fE(\vtheta_{\rshal})} \boldsymbol{e}_j^{\text {neg }}\right\}_{j=1}^{k_1}$ satisfies, for any $\boldsymbol{e}_j^{\text {neg }}$,
\begin{equation*}
\left(\vJ_{\fE(\vtheta_{\rshal})} \boldsymbol{e}_j^{\mathrm{neg}}\right)^{\top} \boldsymbol{H}_S\left(\vtheta_{\rdeep}\right) \vJ_{\fE(\vtheta_{\rshal})} \boldsymbol{e}_j^{\mathrm{neg}}=\left(\boldsymbol{e}_j^{\mathrm{neg}}\right)^{\top} \boldsymbol{H}_S(\boldsymbol{\theta}_{\rshal}) \boldsymbol{e}_j^{\mathrm{neg}}=\lambda_j^{\mathrm{neg}}<0.
\end{equation*}
By full rankness of $\vJ_{\fE(\vtheta_{\rshal})}$, we have
\begin{equation*}
\operatorname{dim}\left(\operatorname{span}\left(\left\{\vJ_{\fE(\vtheta_{\rshal})} \boldsymbol{e}_j^{\mathrm{neg}}\right\}_{j=1}^{k_1}\right)\right)=k_1.
\end{equation*}
Thus, $\boldsymbol{H}_S\left(\vtheta_{\rdeep}\right)$ has at least $k_1$ negative eigenvalues. 

On the other hand, for any lifted critical point $\boldsymbol{\theta}_{\text{deep}}$, by the definition of critical lifting $\mathcal{T}_S$ (see Definition~\ref{APP:def:one-layer-lifting}), any critical embedding $\mathcal{E}$ resulting from $\mathcal{T}_S$ is injective and there exists a neighborhood $\mathcal{N}(\boldsymbol{\theta}_{\text{deep}})$ of $\boldsymbol{\theta}_{\text{deep}}$ where the layer linearization condition holds. Consequently, within this neighborhood, there exists a differentiable output-preserving merge operator $\mathcal{P}: \mathcal{N}(\boldsymbol{\theta}_{\text{deep}})\subset \mathbb{R}^{M'} \to \mathbb{R}^{M}$ such that $\mathcal{P}(\mathcal{E}(\boldsymbol{\theta}_{\text{shal}})) = \boldsymbol{\theta}_{\text{shal}}$ for any parameter $\boldsymbol{\theta}_{\text{shal}}$. Since $\boldsymbol{J}_{\mathcal{P}(\boldsymbol{\theta}_{\text{deep}})}\boldsymbol{J}_{\mathcal{E}(\boldsymbol{\theta}_{\text{shal}})} = \boldsymbol{I}_{M}$, where $\boldsymbol{I}_{M}$ is the identity matrix, the Jacobian $\boldsymbol{J}_{\mathcal{P}(\boldsymbol{\theta}_{\text{deep}})}$ must have full row rank.

By the output-preserving property of $\mathcal{P}$, for the $\mathcal{E}$-embedded critical point $\boldsymbol{\theta}_{\text{deep}} = \mathcal{E}(\boldsymbol{\theta}_{\text{shal}})$, we have:
$$
R_S(\vtheta_{\rdeep}) = R_S(\fP(\vtheta_{\rdeep})) = R_S(\fP(\fE(\vtheta_{\rshal}))) = R_S(\vtheta_{\rshal}).
$$
Hence,
$$
\boldsymbol{H}_S(\vtheta_{\rdeep}) = \vJ_{\fP(\vtheta_{\rdeep})}^\top \boldsymbol{H}_S(\vtheta_{\rshal})\vJ_{\fP(\vtheta_{\rdeep})}.
$$
Let $\boldsymbol{H}_S(\boldsymbol{\theta}_{\text{shal}}) = \vP^\top \boldsymbol{\Sigma} \vP$ be the eigendecomposition and $\vU = \vP \vJ_{\vP(\boldsymbol{\theta}_{\text{deep}})}\in\sR^{M\times M'}$ have full row rank. Then:
\begin{equation*}
\boldsymbol{H}_S(\vtheta_{\rdeep}) = \vU^\top \Sigma \vU.
\end{equation*}

By augmenting $\boldsymbol{\Sigma}$ with $(M'-M)$ zeros on the diagonal, we obtain $\boldsymbol{\Sigma}'\in\mathbb{R}^{M'\times M'}$. Similarly, by extending $\boldsymbol{U}$ to an invertible matrix $\boldsymbol{U}'\in\mathbb{R}^{M'\times M'}$, we have:
\begin{equation*}
\boldsymbol{H}_S(\vtheta_{\rdeep}) = \vU'^\top \Sigma' \vU'.
\end{equation*}

This represents a congruence transformation of $\boldsymbol{\Sigma}'$. According to Sylvester's law of inertia, congruent matrices have the same inertia signature. Therefore, $\boldsymbol{H}_S(\boldsymbol{\theta}_{\text{deep}})$ and $\boldsymbol{\Sigma}'$ have identical counts of positive and negative eigenvalues. Since $\boldsymbol{\Sigma}'$ has the same inertia signature as $\boldsymbol{H}_S(\boldsymbol{\theta}_{\text{shal}})$ by construction, we conclude that $\boldsymbol{H}_S(\boldsymbol{\theta}_{\text{deep}})$ and $\boldsymbol{H}_S(\boldsymbol{\theta}_{\text{shal}})$ have exactly the same number of positive and negative eigenvalues.

\end{proof}

\begin{oldcor}[\textbf{incremental degeneracy of critical point through lifting}]\label{app:cor:incremental_degeneracy}
Given data $S$, consider an $\mathrm{NN}\bigl(\left\{m_{l}\right\}_{l=0}^{L}\bigr)$, and its deeper counterpart $\mathrm{NN'}\bigl(\left\{m'_{l}\right\}_{l=0}^{L'}\bigr)$. Let $\fT_S$ denote the corresponding critical lifting and $\vtheta_{\rshal}$
be a critical point of $\mathrm{NN}$. 
For any critical embedding $\fE: \sR^M \to \sR^{M'}$ resulting from $\fT_S$ (i.e., $\mathcal{E}$ is a differentiable point-to-point critical mapping with full column rank Jacobian $\boldsymbol{J}_{\mathcal{E}(\boldsymbol{\theta}_{\text{shal}})}$). Denote $\boldsymbol{\theta}_{\text{deep}}:=\mathcal{E}(\boldsymbol{\theta}_{\text{shal}})$.
Then, $\vtheta_{\rdeep}$ possesses $M'- M$ additional degrees of degeneracy in comparison to $\vtheta_{\rshal}$.
\end{oldcor}
\begin{proof}
Given that deeper networks have more parameters, it follows from Prop~\ref{app:inertia-preserving} that the embedded critical point $ \vtheta_{\rdeep}$ exhibits $M' - M$ extra degrees of degeneracy compared to $\vtheta_{\rshal}$.
\end{proof}

\subsection{One-layer residual lifting}
We give the rigorous definition of one-layer residual lifting, which is very similar to one-layer lifting. The only difference is that there is one more item in the output preserving condition due to \emph{the skip connection} (see Fig.~\ref{fig:one-layer_residual_lifting} for illustration.)

\begin{figure*}[htbp]
    \centering
    \includegraphics[width=1.0\textwidth]{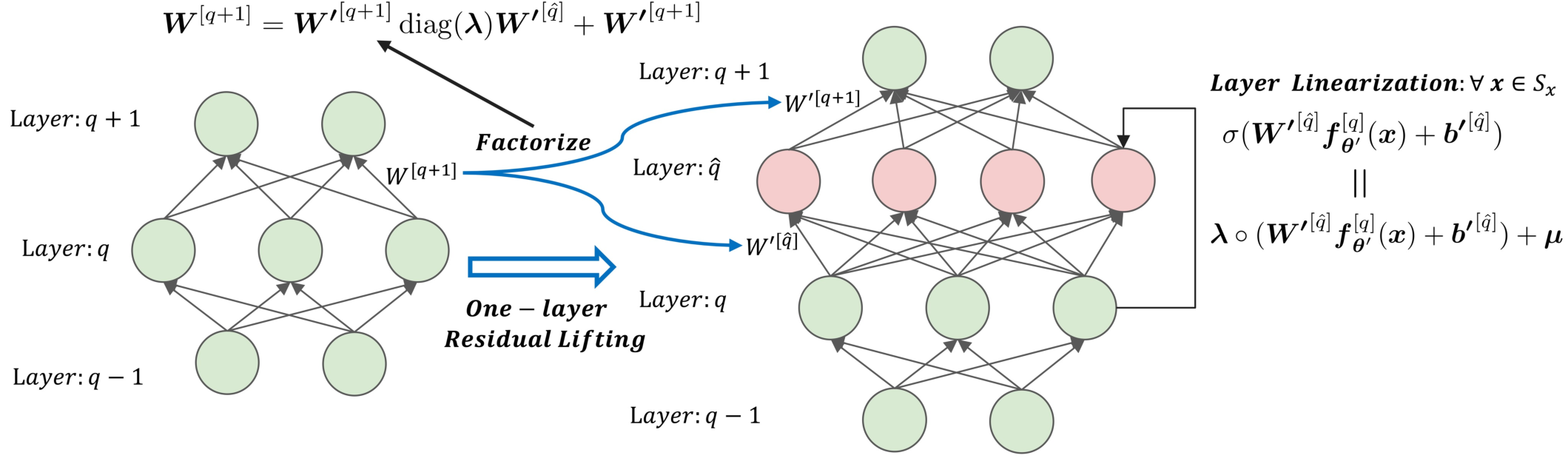} 
\caption{\textbf{Illustration of one-layer residual lifting.} The pink layer is inserted into the left network to get the right network. The input parameters $\vW^{\prime[\hat{q}]}$ and output parameters $\vW^{\prime[q+1]}$ of the inserted layer are obtained by factorizing the input parameters $\vW^{[ q+1]}$ of $(q+1)$-th layer in the left network to satisfy layer linearization and output preserving conditions. }
\label{fig:one-layer_residual_lifting}
\end{figure*}

\begin{olddefinition}[\textbf{one-layer residual lifting}]
\label{APP:def:one-layer-residual-lifting}
Given data $S$, consider an $\mathrm{NN}\bigl(\left\{m_{l}\right\}_{l=0}^{L}\bigr)$ and its one-layer deeper residual counterpart, $\mathrm{NN}^{\prime}\big(\{m_l^\prime\}$, $l\in \{0, 1, 2, \cdots, q, \hat{q}, q+1, \cdots, L\}\big)$, which has a skip connection at the $\hat{q}$-th layer. The one-layer residual lifting, denoted as $\fT_S$,  is a function that transforms any parameter $\vtheta=\left(\boldsymbol{W}^{[1]}, \boldsymbol{b}^{[1]}, \cdots, \boldsymbol{W}^{[L]}, \boldsymbol{b}^{[L]}\right)$ of $\mathrm{NN}$ into a set $\fM$ within the parameter space of $\mathrm{NN}^{\prime}$. Formally, 
$\fM$ (where $\fM:= \fT_S(\vtheta))$ represents a collection of all possible parameters $\vtheta'$ of $\mathrm{NN}^{\prime}$
that satisfying the following three conditions:

(i) local-in-layer condition: weights of each layer in $\mathrm{NN}^{\prime}$ are inherited from $\mathrm{NN}$ except for layer $\hat{q}$ and $q+1$, i.e.,  
\begin{equation*}
   \left\{ \begin{aligned}
& \vtheta^{\prime}|_l = \boldsymbol{\theta} |_l,\quad \text{for}\quad  l \in [q]\cup [q+2:L],\\
& \vtheta^{\prime}|_{\hat{q}} = \bigl(\boldsymbol{W'}^{[\hat{q}]},  \boldsymbol{b'}^{[\hat{q}]}\bigr)\in\mathbb{R}^{m'_{\hat{q}}\times m'_{q-1}}\times \mathbb{R}^{m'_{\hat{q}}},\\
& \vtheta^{\prime}|_{q+1} = \bigl(\boldsymbol{W'}^{[q+1]},  \boldsymbol{b'}^{[q+1]}\bigr)\in\mathbb{R}^{m'_{q+1}\times m'_{\hat{q}}}\times \mathbb{R}^{m'_{q+1}},\\
\end{aligned}
\right. 
\end{equation*}

(ii) layer linearization condition: for any $j\in [m_{\hat{q}}]$, there exists an affine subdomain $(a_j, b_j)$ of $\sigma$ associated with $\lambda_j, \mu_j$ such that the $j$-th component $(\boldsymbol{W'}^{[\hat{q}]}\boldsymbol{f}_{\boldsymbol{\theta}'}^{[q]}(\boldsymbol{x}) + \boldsymbol{b'}^{[\hat{q}]})_j \in (a_j, b_j)$ for any $\vx \in S_{\vx}.$

(iii) output preserving condition: 
\begin{align*}
\left\{\begin{aligned}
&  \boldsymbol{W'}^{[q+1]}\operatorname{diag}(\boldsymbol{\lambda})\boldsymbol{W'}^{[\hat{q}]} + \boldsymbol{W'}^{[q+1]} = \boldsymbol{W}^{[q+1]},\\
& \boldsymbol{W'}^{[q+1]}\operatorname{diag}(\boldsymbol{\lambda})\boldsymbol{b'}^{[\hat{q}]}+\boldsymbol{W'}^{[q+1]}\boldsymbol{\mu} + \boldsymbol{b'}^{[q+1]} = \boldsymbol{b}^{[q+1]}.
\end{aligned}\right.   
\end{align*}
where $\vlambda = [\lambda_1, \lambda_2, \cdots, \lambda_{m_{\hat{q}}}]^\top\in \mathbb{R}^{m^{\prime}_{\hat{q}}}, \vmu = [\mu_1, \mu_2, \cdots, \mu_{m_{\hat{q}}}]^\top\in\mathbb{R}^{m^{\prime}_{\hat{q}}}$, and $\operatorname{diag}(\boldsymbol{\lambda})$ denotes the diagonal matrix formed by vector $\vlambda$.
\end{olddefinition}
As with one-layer lifting, the properties of output preserving and criticality preserving as well as data-dependency are the same for one-layer residual lifting.

\subsection{Centered kernel alignment}
Consider $\vX \in \mathbb{R}^{n \times m_{1}}$ and $\vY \in \mathbb{R}^{n \times m_{2}}$, which represent two layers each containing $m_{1}$ and $m_{2}$ neurons respectively, mapped to the identical set of $n$ instances. The Gram matrices, $\boldsymbol{K}=\boldsymbol{X} \boldsymbol{X}^{\top}$ and $\boldsymbol{L}=\boldsymbol{Y} \boldsymbol{Y}^{\top}$, are $n \times n$ in dimension and each of their elements signifies the similarity between two instances based on the representations in $\boldsymbol{X}$ or $\boldsymbol{Y}$.

The centering matrix is given by $\boldsymbol{H}=\boldsymbol{I}_{n}-\frac{1}{n} \boldsymbol{1 1}^{\top}$. Consequently, the matrices $\boldsymbol{K}^{\prime}=\boldsymbol{H} \boldsymbol{K} \boldsymbol{H}$ and $\boldsymbol{L}^{\prime}=\boldsymbol{H} \boldsymbol{L} \boldsymbol{H}$ correspond to similarity matrices where column and row means have been subtracted.

The Hilbert-Schmidt Independence Criterion (HSIC) quantifies the similarity of these centered similarity matrices by converting them into vectors and computing the dot product between these vectors, $\mathrm{HSIC}_{0}(\boldsymbol{K}, \boldsymbol{L})=\operatorname{vec}\left(\boldsymbol{K}^{\prime}\right) \cdot \operatorname{vec}\left(\boldsymbol{L}^{\prime}\right) /(n-$ $1)^{2}$. HSIC remains invariant under orthogonal transformations of the representations and, consequently, under permutation of neurons, but lacks invariance to scaling of the original representations.

Centered Kernel Alignment (CKA) is used to further normalize HSIC to yield a similarity metric in the range of 0 to 1 that remains invariant to isotropic scaling, $$ \mathrm{CKA}(\boldsymbol{K}, \boldsymbol{L})=\frac{\operatorname{HSIC}_{0}(\boldsymbol{K}, \boldsymbol{L})}{\sqrt{\mathrm{HSIC}_{0}(\boldsymbol{K}, \boldsymbol{K}) \mathrm{HSIC}_{0}(\boldsymbol{L}, \boldsymbol{L})}}. $$

The research by~\cite{kudugunta2019investigating} demonstrated that linear CKA is a reliable measure for identifying architecturally corresponding layers when measured between layers of architecturally identical networks that were trained from distinct random initializations. Indeed, linear CKA reflects the degree of linear correlation between representations across layers.
\begin{oldprop}[\textbf{CKA and layer linearization}]
\label{APP-prop-CKA}
Let $\vX \in \mathbb{R}^{n \times m_{1}}$ and $\vY \in \mathbb{R}^{n \times m_{2}}$ contain representations of two layers, one with $m_{1}$ neurons and another $m_{2}$ neurons, to the same set of $n$ examples. If the linear CKA between the two layers equals 1, then there exists $\vW\in\sR^{m_1\times m_2}, \vb\in \sR^{1\times m_2}$ such that
\begin{equation*}
    \vY = \vX \vW + \vb.
\end{equation*}
\end{oldprop}

\begin{proof}
Denote $\vX^\prime = \vH\vX, \vY^\prime = \vH\vY$, where $\vH = \vI_n - \frac{1}{n}\boldsymbol{1}\boldsymbol{1}^{\top}$. Then $\vK^\prime = \vX^\prime \vX'^{\top}, \vL^\prime = \vY^\prime \vY'^{\top}$.
Notice that 
$$
\left\langle\operatorname{vec}\left(\vX' \vX'^{\mathrm{T}}\right), \operatorname{vec}\left(\vY' \vY'^{\mathrm{T}}\right)\right\rangle=\operatorname{tr}\left(\vX' \vX'^{\mathrm{T}} \vY' \vY'^{\mathrm{T}}\right)=\operatorname{tr}\left(\vX'^{\mathrm{T}} \vY' \vY'^{\mathrm{T}}\vX'\right)\\
=\left\|\vY'^{\mathrm{T}} \vX'\right\|_{\mathrm{F}}^{2}.
$$
Linear CKA is equivalent to the cosine similarity:
$$
\operatorname{CKA}(\vX\vX^T, \vY\vY^T) = \frac{\left\|\vY'^{\mathrm{T}} \vX'\right\|_{\mathrm{F}}^{2}} {\left\|\vX'^{\mathrm{T}} \vX'\right\|_{\mathrm{F}}\left\|\vY'^{\mathrm{T}} \vY'\right\|_{\mathrm{F}}} = \frac{\left\langle\operatorname{vec}\left(\vX' \vX'^{\mathrm{T}}\right), \operatorname{vec}\left(\vY' \vY'^{\mathrm{T}}\right)\right\rangle}{\sqrt{\left\langle\operatorname{vec}\left(\vX' \vX'^{\mathrm{T}}\right), \operatorname{vec}\left(\vX' \vX'^{\mathrm{T}}\right)\right\rangle} \sqrt{\left\langle\operatorname{vec}\left(\vY' \vY'^{\mathrm{T}}\right), \operatorname{vec}\left(\vY' \vY'^{\mathrm{T}}\right)\right\rangle}}.
$$
If $\operatorname{CKA}(\vX\vX^T, \vY\vY^T) = 1$, then there exists $\alpha\geq 0$ such that
$$
\vY'\vY'^\top = \alpha \vX'\vX'^\top.
$$
From this, we can conclude $\vX'$ and $\vY'$ share the same column space. Therefore, there exists a matrix $\vW\in \sR^{m_1\times m_2}$ such that 
$$\vY' = \vX'\vW.$$ 
As $\vX' = \vX - \frac{1}{n}\boldsymbol{1}\boldsymbol{1}^T \vX$ and $\vY' = \vY - \frac{1}{n}\boldsymbol{1}\boldsymbol{1}^T \vY$, we can write $\vY' = \vX'\vW$ as:
$$
\vY - \frac{1}{n}\boldsymbol{1}\boldsymbol{1}^T \vY = \vX\vW - \frac{1}{n}\boldsymbol{1}\boldsymbol{1}^T \vX\vW.
$$
Denote 
$$\vb = \frac{1}{n}\boldsymbol{1}^\top(\vY-\vX\vW)\in \sR^{1\times m_2},$$
then we have
$$\vY = \vX\vW + \vb,$$
which completes the proof.
\end{proof}

\section{Supplementary experiments}\label{app:sup-exp}
\renewcommand\thefigure{B\arabic{figure}} 
\setcounter{figure}{0} 
In this section, we present the supplementary experiments mentioned in the main text.

\begin{figure}[h]
	\centering
	\subfigure[Training loss]{\includegraphics[height=0.26\textwidth]{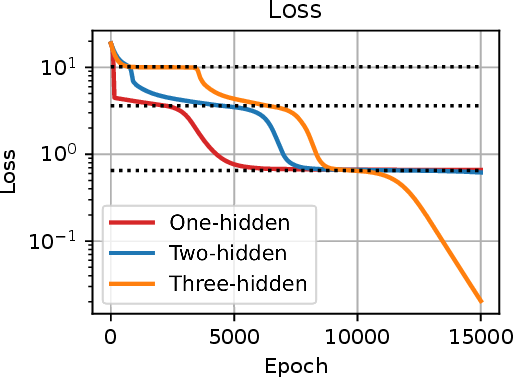}}\hspace{1.5cm}
	\subfigure[Output corresponding to  the second dotted line in (a)]{\includegraphics[height=0.26\textwidth]{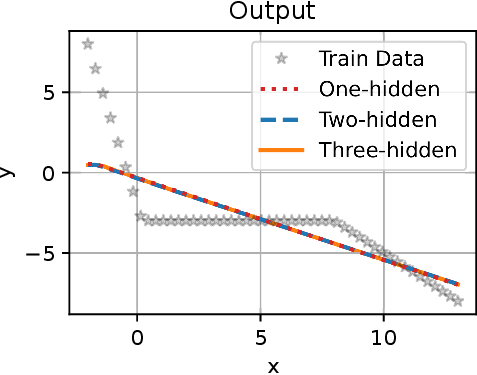}}\hspace{1.5cm}
	\subfigure[Output corresponding to the third dotted line in (a)]{\includegraphics[height=0.26\textwidth]{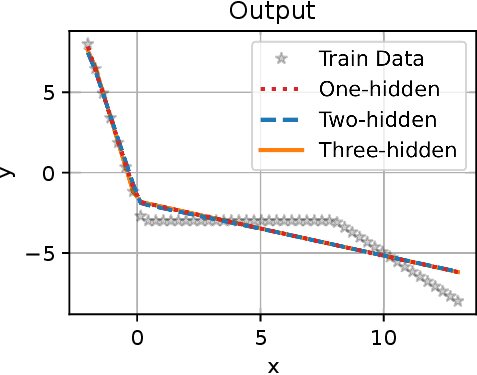}}\hspace{1.5cm}
	\subfigure[Evolution of $\operatorname{MPC}$]{\includegraphics[height=0.26\textwidth]{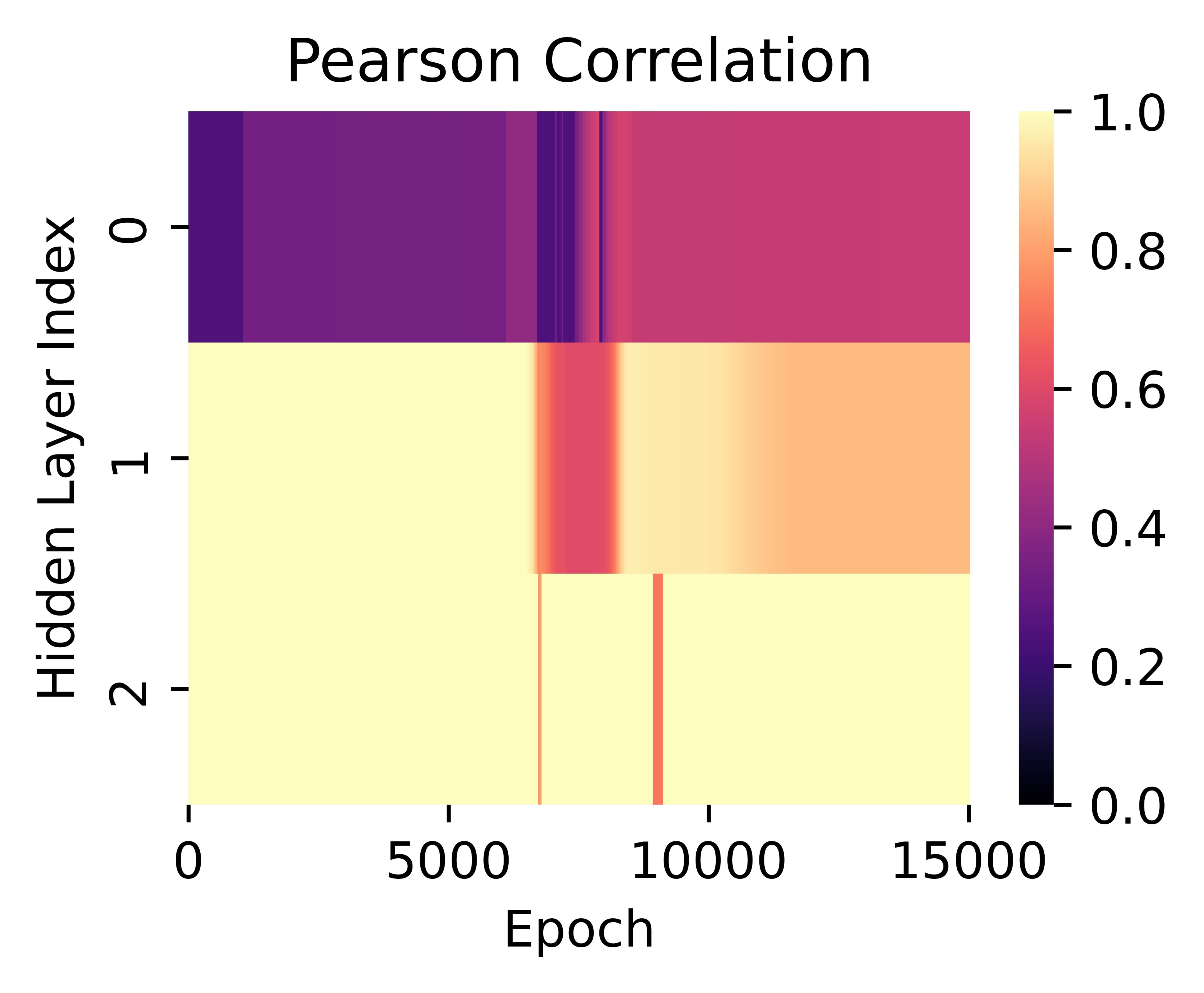}}
	\caption{\textbf{Deep ReLU neural networks encounter lifted critical points during training.} (a) The training loss trajectory for ReLU NN of different depths with $50$ neurons in each hidden layer  on training data in (b). (b, c) The output functions of NNs with different depths at the same loss values indicated by (b) the second horizontal dotted line or (c) the third horizontal dotted line in (a). (d) The extent of layer linearization for all hidden layers during the training process of three-hidden-layer NN. 
	}
	\label{fig:ReLU}
\end{figure} 

\begin{figure}[h]
\centering
\subfigure[Training loss]{
\includegraphics[height=0.26\textwidth]{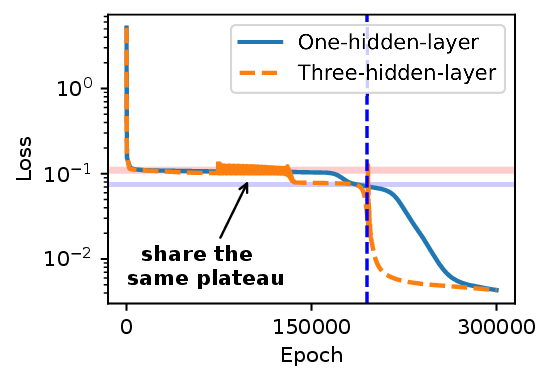}
}$\hspace{1.5cm}$
\subfigure[Output corresponding to the red span in (a)]{
\includegraphics[height=0.26\textwidth]{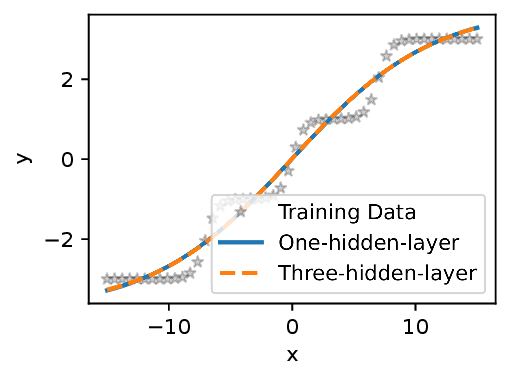}
}
$\hspace{1.5cm}$
\subfigure[Output corresponding to the blue span in (a)]{
\includegraphics[height=0.26\textwidth]{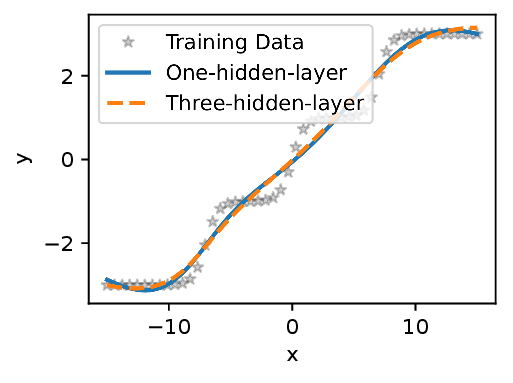}
}
$\hspace{1.5cm}$
\subfigure[Evolution of $\operatorname{MPC}$]{
\includegraphics[height=0.26\textwidth]{figures/ResNet_pear.eps}
}
\caption{\textbf{Deep residual-connected neural networks encounter lifted critical points during training.} (a) The training loss for NNs of different depths with $50$ neurons in each hidden layer for training data in (b). (b, c) The output functions of NNs with different depths at the same loss values indicated by (b) the first horizontal dotted line or (c) the second horizontal dotted line. (d) The extent of layer linearization for all hidden layers during the training process of three-hidden-layer residual NN.}
\label{fig:res-nonlinear}
\end{figure}

\paragraph{Dependence of layer linearization on initialization scale and training data size.} The layer linearization of a network is influenced by its initialization scale and the size of the training data. When initialized with a small enough scale, the network is likely to operate in the linear region during the early stages of training, leads to often encounter the lifted critical point. To investigate the impact of initialization scale on layer linearization, we train a 10-layer network with different initializations on the Fashion-MNIST dataset and measure the extent of linearization of each hidden layer post-training. As depicted in Fig.~\ref{fig:Initialization}(a), with increasing initialization scale from left to right, the degree of non-linearity within each hidden layer also rises. Additionally, as shown in Fig.~\ref{fig:Initialization}(b), for a fixed initialization scale, the degree of nonlinearity across the network's hidden layers increases as the size of the training dataset expands from 500 to 5,000 to 50,000. This finding suggest that larger dataset adds to the difficulty of layer linearization and potentially facilitate a reduction in critical manifolds, thereby enhancing optimization. This is further corroborated by our experiment on simpler datasets, as depicted in Fig.~\ref{fig:data-dependency2} in the main text.

\begin{figure}[htbp]
\centering
\subfigure[Initialization scale]{
\includegraphics[height=0.26\textwidth]{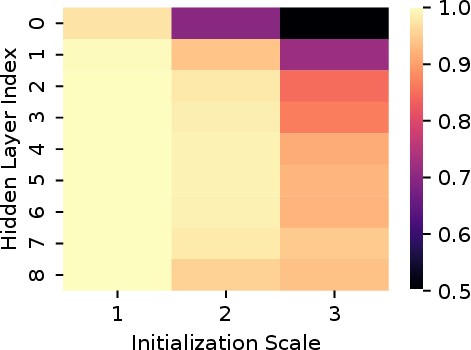}
}
$\hspace{1.5cm}$
\subfigure[Training data size]{
\includegraphics[height=0.26\textwidth]{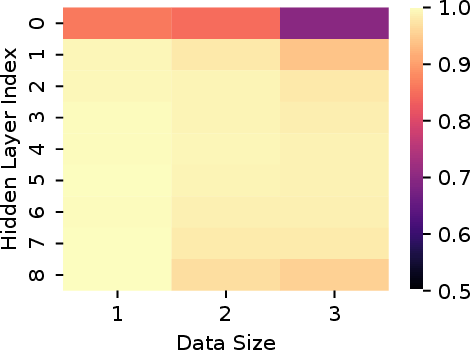}
}
\caption{\textbf{Dependence of layer linearization on initialization scale and training data size.} (a) The extent of layer linearization across all hidden layers corresponding to initialization scales drawn from a Gaussian distribution with a mean of 0 and standard deviations of 0.003, and 0.005, 0.02, arranged from left to right. (b) The extent of layer linearization across all hidden layers in relation to training data sizes of 500, 5,000, and 50,000, arranged from left to right.}
\label{fig:Initialization}
\end{figure}

\section{Details of experiments}
\label{app:tra-details}
For the experiment of Iris dataset (Fig.~\ref{fig:syntraining}(a, b)), we use ReLU as the activation function and the mean square error (MSE) as the loss function. We use full-batch gradient descent with learning rate $0.001$ to train NNs for $100000$ epochs. The width is $50$ for each hidden layer. The initial distribution of all parameters follows a Gaussian distribution with a mean of $0$ and a variance of $0.07$. For the MNIST dataset shown in Fig.~\ref{fig:syntraining}(c, d), we randomly select 500 images to constitute the training set, employing full batch gradient training, with MSE serving as the loss function. Remark that, the phenomenon in Fig.~\ref{fig:syntraining} are similar for different activation functions.

For the 1-D experiments in Fig.~\ref{fig:3-hidden-nonlinear}, Fig.~\ref{fig:different-BN-init}, Fig.~\ref{fig:data-dependency2}, and Fig.~\ref{fig:res-nonlinear}, we use tanh as the activation function and MSE as the loss function. We use full-batch gradient descent with learning rate $0.01$ to train NNs with width $50$ for each hidden layer. The initial distribution of all parameters follows a Gaussian distribution with a mean of $0$ and a variance of $0.01$.

For the experiment of MNIST classification (Fig.~\ref{fig:MNIST-exp-redo}), we use tanh as the activation function and the cross-entropy as the loss function. We use stochastic gradient descent with batch size $1000$ and learning rate $0.001$ to train NNs for 100 epochs. The width is $50$ for each hidden layer. The initial distribution of all parameters follows a Gaussian distribution with a mean of 0 and a variance of 0.05.

For the experiment of Fashion-MNIST classification (Fig.~\ref{fig:Initialization}), we use tanh as the activation function and the cross-entropy as the loss function. We use stochastic gradient descent with batch size $512$ and learning rate $0.01$ to train NNs for 50 epochs. The width is $100$ for each hidden layer.

For the 1-D experiments in Figs.~\ref{fig:ReLU}, we use ReLU as the activation function and MSE as the loss function. We use full-batch gradient descent with learning rate $0.001$ to train NNs with width $50$ for each hidden layer. The initial distribution of all parameters follows a Gaussian distribution with a mean of $0$ and a variance of $0.005$.

For the experiments in Fig.~\ref{fig:eigenvalues-new}, we employ a comprehensive methodology to ascertain the eigenvalues of the Hessian matrix at empirical critical points, comprising the following steps:
(1) Initially, we approximate the probable interval of critical points by observing regions where the loss diminishes very slowly. We then choose the point with the least parameter derivative (using the $L_1$ norm) as our empirical critical point. At this empirical critical point, the $L_1$ norm of the derivative of the loss function hovers around $10^{-4}$, which is acceptably small.
(2) To accurately determine the eigenvalues of a Hessian matrix with a large condition number, we perform 100 random orthogonal similarity transformations on the matrix. We derive a more reliable set of eigenvalues by averaging the outcomes from these 100 trials.
(3) For a clearer distinction between significant and non-significant eigenvalues, we identify locations where there are evident gaps in eigenvalue magnitudes to differentiate between zero and non-zero eigenvalues. (4) To further ensure that the empirical degeneracy is valid, we meticulously examined the state of each neuron and the effective rank of the Hessian matrix. For instance, in Fig.~\ref{fig:eigenvalues-new} (a), for the ReLU network with a single hidden layer containing only two neurons, one of the neurons remains inactive throughout the training dataset. Consequently, the eigendirections associated with its parameters should be null. We observed that the effective rank of the Hessian matrix is 3, implying that there indeed are three non-zero eigenvalues. This aligns with the number of non-zero eigenvalues suggested by the evident gaps identified in step (3).

For activation functions with strong nonlinearity near zero (e.g. ReLU), we first removes those ``zero-neurons'' whose input and output weights are reasonably small to avoid their interference to the measure of layer linearization.

Remark that, although Fig.~\ref{fig:syntraining}, Fig.~\ref{fig:3-hidden-nonlinear} and Fig.~\ref{fig:3-hidden-nonlinear-iris} are case studies each based on a random trial, similar phenomenon can be easily observed as long as the initialization variance is properly small, i.e., far away from the linear/kernel/NTK regime.

\end{document}